\definecolor{colorhkust}{RGB}{20,43,140}
\definecolor{colortsinghua}{RGB}{116,52,129}
\definecolor{colorsht}{RGB}{163,11,25}
\definecolor{cha}{HTML}{44a920}
\newtheorem{thm}{Theorem}
\newtheorem{lem}{Lemma}
\newtheorem{defn}{Definition}
\newtheorem{ass}{Assumption}
\newtheorem{cor}{Corollary}
\newtheorem{Rem}{Remark}
\newcommand{\fedavg}{\textsc{FedAvg}\xspace}
\newcommand{\fedsgd}{\textsc{FedSGD}\xspace}
\newcommand{\airfedavg}{\textsc{AirFedAvg}\xspace}
\newcommand{\airfedavgm}{\textsc{AirFedAvg-M}\xspace}
\newcommand{\airfedavgs}{\textsc{AirFedAvg-S}\xspace}
\newcommand{\airfedmodel}{\textsc{AirFedModel}\xspace}
\begin{document}

\title{Over-the-Air Federated Learning and Optimization}

\author{
                Jingyang~Zhu,~\textit{Graduate Student Member, IEEE},~Yuanming~Shi,~\textit{Senior Member, IEEE},\\~Yong~Zhou,~\textit{Senior Member, IEEE},~Chunxiao~Jiang,~\textit{Senior Member, IEEE},~Wei~Chen,~\textit{Senior Member, IEEE}, and~Khaled~B.~Letaief,~\textit{Fellow, IEEE}
\thanks{Jingyang Zhu, Yuanming Shi, and Yong Zhou are with the School of Information Science and Technology, ShanghaiTech University, Shanghai 201210, China (e-mail: $\{$zhujy2, shiym, zhouyong$\}$@shanghaitech.edu.cn). 
%\textit{(Corresponding author: Yong Zhou.)}
	
			Chunxiao Jiang is with the Tsinghua Space Center, Tsinghua University, Beijing,
			100084, China. (e-mail: jchx@tsinghua.edu.cn).
			
			Wei Chen is with the Department of Electronic Engineering and the Beijing National Research Center for Information Science and Technology, Tsinghua University, Beijing 100084, China (e-mail: wchen@tsinghua.edu.cn).

			Khaled B. Letaief is with the Department of Electronic and Computer Engineering, Hong Kong University of Science and Technology, Clear Water Bay, Hong Kong, and also with Peng Cheng Laboratory, Shenzhen 518066, China (e-mail: eekhaled@ust.hk).}
}

% make the title area
\maketitle

\begin{abstract}
	Federated learning (FL), as an emerging distributed machine learning paradigm, allows a mass of edge devices to collaboratively train a global model while preserving privacy.
	In this tutorial, we focus on FL via over-the-air computation (AirComp), which is proposed to reduce the communication overhead for FL over wireless networks at the cost of compromising in the learning performance due to model aggregation error arising from channel fading and noise. 
%	The tutorial contains two parts.
	We first provide a comprehensive study on the convergence of AirComp-based \fedavg (\airfedavg) algorithms under both strongly convex and non-convex settings with constant and diminishing learning rates in the presence of data heterogeneity.
	Through convergence and asymptotic analysis, we characterize the impact of aggregation error on the convergence bound and provide insights for system design with convergence guarantees. Then we derive convergence rates for \airfedavg algorithms for strongly convex and non-convex objectives.
	For different types of local updates that can be transmitted by edge devices (i.e., local model, gradient, and model difference), we reveal that transmitting local model in \airfedavg may cause divergence in the training procedure.
	In addition, we consider more practical signal processing schemes to improve the communication efficiency and further extend the convergence analysis to different forms of model aggregation error caused by these signal processing schemes.
	Extensive simulation results under different settings of objective functions, transmitted local information, and communication schemes verify the theoretical conclusions.
\end{abstract}
\begin{IEEEkeywords}
	Federated learning, over-the-air computation, convergence analysis, and optimization.
\end{IEEEkeywords}
%
%\begin{table*}[!]	
%	\caption{Cases list}\label{tab: table1}
%	\centering
%	%\fontsize{8}{10}\selectfont    %{字体尺寸}{行距}
%	\begin{tabular}{l|ll}
%		%\toprule 
%		\hline
%		\diagbox {Transmit Symbol}{Algorithm} & Air-FedSGD & AirFedAvg \\ 
%		\hline
%		Gradient/Model Progress & Case 1 \cite{statisticsaware,RN37,RN86,zhu2020one,RN36,amiri2020federated,liu2020privacy,xiaowen2021optimized,sun2021dynamic,guo2021analog,wang2022IRS,liu2021reconfigurable,su2021data}& Case 3 \cite{COTAF,wei2021federated,xiaowen2021transmission,xu2021LR,fan2021temporal,sifaou2021robust,shao2021misaligned,lin2022relay} \\ \hline
%		Model Parameter & Case 2 & Case 4 \cite{RN35,wei2021federated,wang2022edge,guo2022joint}\\ \hline
%		%\bottomrule
%	\end{tabular}
%\end{table*}
\section{Introduction}
With the ever-growing communication, computation, and caching capabilities of various edge devices (e.g., smart phones, sensors, and Internet of Things (IoT) devices), the sixth generation (6G) wireless communication system 
is undergoing a paradigm transformation from connected everything to connected intelligence, thereby envisioning to support ubiquitous artificial intelligence (AI) services and applications (e.g., immersive extended reality, holographic communication, sensory interconnectivity, and digital twins) \cite{Theroadmap,CEEAI,EdgeAI6G}. A large amount of data collected and generated by edge devices together with their sensing, computation, and communication capabilities have opened up bright avenues for training machine learning (ML) models at the network edge \cite{shi2021mobile}. 
However, in conventional distributed ML for cloud AI, cloud servers need to gather raw data from all devices, resulting in excessive communication overhead and severe privacy concern.
To this end, federated edge learning (FL) \cite{FedAvg,bonawitz2019towards,park2019wireless}, as an emerging distributed ML framework, has been proposed to enable many edge devices to train an ML model under the coordination of an edge server without sharing their local data.
%According to the network architecture, FL can be categorized into three main classes: client-server FL \cite{FedAvg}, decentralized FL \cite{koloskova2020unified}, and hierarchical FL \cite{liu2020client}. 
%In addition, based on the distribution of the data samples, FL can be categorized into horizontal FL and vertical FL \cite{yang2019federated}.
Although promising, supporting FL over wireless networks still encounters multi-faceted determining factors, including transmission schemes, networking, and resource optimization \cite{li2020federated}.
%The main challenges of implementing FL over wireless networks include 

%
%FL has found its application in many fields, ranging from 6G communications \cite{Theroadmap,CEEAI,shi2021mobile,EdgeAI6G}, keyboard prediction \cite{hard2018federated}, autonomous driving \cite{zeng2022road}, to biomedical healthcare \cite{warnat2021swarm,2020FedHealth} and drug discovery \cite{chen2020fl}.
To mitigate the communication bottleneck problem, it is essential to design efficient model transmission schemes. The existing studies can be divided into two categories, i.e., digital modulation based FL (digital FL) and over-the-air computation (AirComp)-based FL (AirFL).
%For instance,
%there have been comprehensive studies focusing on digital FL system design
The existing studies on digital FL \cite{wang2019adaptive,JointDevice,AJoint,ConvergenceTime,ren2021accelerating,zhang2022communication,EnergyEfficient,luo2020cost,amiria2021Convergence,lim2021dynamic,liu2022joint,wen2022joint,liu2022hierarchical} typically  
adopted the orthogonal multiple access (OMA) scheme to transmit local models, where the frequency and time blocks are orthogonally divided.
In this case, the transmission latency of digital FL will be proportional to the number of edge devices \cite{RN35}, leading to high transmission delay of FL over large-scale wireless networks.
%, leading to low spectrum efficiency.
To overcome this challenge, AirComp, as a promising task-oriented scheme that leverages the waveform superposition property of wireless multiple-access channel (MAC)\cite{shi2023task}, was proposed to enable concurrent model transmission over the same time-frequency channel \cite{nazer2007computation}, thus considerably reducing the communication delay \cite{zhu2021aircomp,wang2022over,abari2016over}. Motivated by this advantage, AirFL was proposed to achieve fast analog model aggregation \cite{RN34}, where multiple edge devices transmit their local information concurrently, and the edge server directly obtains the aggregated local information from the superimposed signal. 
%However, AirFL suffers from the channel fading and the receiver noise in each model aggregation, leading to learning performance degradation.
%A critical observation made by \cite{RN34} is that the waveform superposition property of AirComp is a perfect fit for model aggregation at the edge server, as the model aggregation step is usually completed by averaging. 
%To address this challenge, AirComp, as a promising scheme that leverages the waveform superposition property of wireless multiple-access channel (MAC), was proposed to enable fast and low-latency model aggregation \cite{nazer2007computation}. Specifically, it enables concurrent data transmission over the same time-frequency channel, thus considerably improving the communication efficiency \cite{zhu2021aircomp,wang2022over}. Motivated by this key advantage, AirComp-based FL (AirFL) was proposed to obtain fast model aggregation \cite{RN34}. A critical observation made by \cite{RN34} is that the waveform superposition property of AirComp is a perfect fit for model aggregation at the edge server, as the model aggregation step is usually completed by averaging. In AirFL, the edge devices transmit their local information using non-orthogonal wireless resources, and the edge server	aggregates local information directly from the superimposed signal.
Substantial progress has recently been made in the area of AirFL \cite{RN34,xing2021D2D,shi2021over,lin2022distributed,michelusi2022decentralized,wang2022inference,aygun2022over,shao2022bayesian,liu2021reconfigurable,wang2022IRS,yang2022differentially,ni2022multiris,hu2022ris,zhong2022uav,wang2022leo,kim2023beamforming,statisticsaware,RN86,liu2020privacy,xiaowen2021optimized,guo2023dynamic,xiaowen2021transmission,guo2022Joint,zou2023knowledge,guo2021analog,COTAF,gafni2023federated,xu2021LR,wang2022edge,RN37,zhu2020one,wei2021federated,jing2022Federated,RN36,amiri2020federated,sun2021dynamic,su2021data,du2022gradient,fan2021temporal,sifaou2021robust,lin2022relay,RN35,elgabli2021harnessing,Mohamed2021privacy,shao2021misaligned,li2022energy,you2022broadband}, including tackling the learning performance degradation caused by the channel fading and the receiver noise in each model aggregation.

The underlying network architecture is an essential factor that affects both the communication efficiency and the convergence rate of FL systems.
From the perspective of network architectures, existing studies on digital FL cover client-server FL \cite{wang2019adaptive,JointDevice,AJoint,ConvergenceTime,ren2021accelerating,zhang2022communication,EnergyEfficient,luo2020cost,amiria2021Convergence} and hierarchical FL \cite{lim2021dynamic,liu2022joint,wen2022joint,liu2022hierarchical}. 
In addition to the client-server network architecture, AirFL has also been studied in diverse network architectures.
To fully utilize the diversified communication links and geographically dispersed computing resources, device-edge-cloud collaboration has been investigated from a variety of perspectives such as decentralized AirFL by exploiting device-to-device links \cite{xing2021D2D,shi2021over,lin2022distributed,michelusi2022decentralized}, multi-cell AirFL with inter-cell interference management \cite{wang2022inference}, and hierarchical AirFL for alleviating communication costs and accelerating convergence \cite{aygun2022over,shao2022bayesian}.
Besides, space-air-ground integration (e.g., reconfigurable
intelligent surface (RIS)-assisted AirFL \cite{liu2021reconfigurable,wang2022IRS,yang2022differentially,ni2022multiris,hu2022ris}, unmanned aerial vehicle (UAV)-assisted AirFL \cite{zhong2022uav}, and satellite-assisted AirFL \cite{wang2022leo}) provides an integrated information platform that can serve different learning algorithms and topologies in order to establish seamless communication and services between ground facilities and the airspace.

To support communication-efficient FL with limited radio resources, wireless resource management and optimization is essential.
In particular, there have been comprehensive studies on digital FL focusing on communication system design and wireless resource allocation, covering different types of objective functions (e.g., training loss function \cite{wang2019adaptive,JointDevice,AJoint}, latency \cite{ConvergenceTime,ren2021accelerating,zhang2022communication}, and energy consumption \cite{EnergyEfficient,luo2020cost}).
Furthermore, the literature investigating resource optimization in AirFL include transceiver design \cite{RN34,kim2023beamforming,statisticsaware,RN86,liu2020privacy,xiaowen2021optimized,guo2023dynamic,zou2023knowledge,wang2022edge,COTAF,xiaowen2021transmission,guo2022Joint,guo2021analog,xu2021LR} (e.g., beamforming design \cite{RN34,kim2023beamforming}, power control \cite{statisticsaware,RN86,liu2020privacy,xiaowen2021optimized,guo2023dynamic,xiaowen2021transmission,guo2022Joint,zou2023knowledge}, precoding and denoising factor design \cite{COTAF,gafni2023federated}, and hyperparameter optimization \cite{guo2021analog,xu2021LR}), device selection \cite{RN34,kim2023beamforming,sun2021dynamic,wang2022IRS,su2021data,guo2022Joint,du2022gradient}, and privacy protection by utilizing
the inherent channel noise \cite{liu2020privacy,elgabli2021harnessing,Mohamed2021privacy}.
To address the practical challenges for implementing AirFL, some attempts focus on developing synchronization techniques \cite{shao2021misaligned} and digital modulation schemes for deployment in off-the-shelf communication systems (e.g., LTE, Wi-Fi 6, and 5G)\cite{li2022energy,you2022broadband}.

As the communication system design for AirFL highly depends on characterizing the impact of the channel fading and receiver noise on the convergence of the AirFL algorithms, it is important to conduct explicit convergence analysis.
The key factors that affect the convergence analysis can be summarized as follows: the impact of the receiver noise on the convergence bound, the type of local updates (i.e., local model, local gradient, or model difference), the property of the objective functions (e.g., strongly convex or non-convex), the choice of the learning rates (i.e., diminishing or constant), the characterization of data heterogeneity, and the choice of the number of local updates.
These issues arising from the communication systems and learning algorithms are strongly coupled in AirFL.
%, and most previous works are discussed around one or several angles among them.
For instance, the choice of the learning rate relies on the objective functions and the number of local updates, and it also affects the optimality of the objective function.
%As another example, although the authors in \cite{wei2021federated} discussed the local model and model difference transmission under the strongly convex and diminishing learning rate settings, but the overall relationship between the type of local updates and the number of local updates, together with the impact of the different types of transmit on the model aggregation still remain unclear.
As another example, in the error-free case, transmitting different types of local updates leads to different model aggregation steps, which are mathematically equivalent. However, such an equivalence is no longer valid in AirFL. The inequivalence is clearly demonstrated in the following two aspects. First, due to different numbers of local updates, transmitting different
types of local updates may lead to different convergence rates and robustness to data heterogeneity, which need to be thoroughly investigated and discussed. Second, different types of transmit signals together
with the corresponding model aggregation steps have different robustness to the receiver noise, affecting the convergence performance as well.
Hence, the coupling between the number of local updates and the type of transmit signals presents a new challenge for designing AirFL algorithms and analyzing their convergence, which needs to be revealed.\\

%Besides, to properly reflect the impact of data heterogeneity in the convergence analysis, we need to be consistent with the practical characterization methods commonly used in federated optimization literature \cite{wang2020tackling,zhang2020fedpd}.
%To be specific, for instance, the authors in \cite{wei2021federated} discussed the local model transmission and model difference transmission under the strongly convex and diminishing learning rate settings, but neglecting.
%As another example, although data heterogeneity has been studied in \cite{COTAF}, the measurement of non independent and identically distributed (non-IID) data in these works is different from the federated optimization literature \cite{wang2020tackling,zhang2020fedpd}, and the convergence analysis was provided only under the strongly convex and diminishing learning rate settings.
Despite prior works have achieved encouraging results of analyzing AirComp-based federated stochastic gradient descent (\fedsgd) and federated averaging (\fedavg) \cite{FedAvg} algorithms, a comprehensive understanding of these issues is still lacking, which motivates this comprehensive tutorial on the convergence of AirFL.

\subsection{Main Contributions}
In this paper, we provide a comprehensive understanding of AirComp-based \fedavg (\airfedavg) algorithms.
We first provide convergence analysis with respect to the model aggregation error caused by receiver noise for benchmark algorithm $\textsc{AirFedAvg-Multiple }$ (\airfedavgm, i.e., the model difference is transmitted by an edge device after several local updates in \airfedavg) with a diminishing learning rate in the strongly convex settings. Then we provide asymptotic analysis for the upper bound to characterize the impact of noise by the mean square error (MSE) of the model aggregation, and further arrive at some insightful conclusions for \airfedavgm by the case study of denoising factor. Following the same principle, we extend the asymptotic analysis and case study to the non-convex settings.
We then extend our analysis to variant algorithms: \textsc{AirFedAvg-Single} (\airfedavgs, i.e., edge devices transmit their local gradients after a single local update) and \airfedmodel (i.e., edge devices transmit their local models).
%for each algorithm, the convergence results are first introduced in theorems with diminishing learning rate for strongly convex and non-convex objective functions, and then extended to corollaries with constant learning rates.
%We use the bounded gradient dissimilarity assumption throughout the analysis. 
We also figure out the influence of the number of local updates by analyzing the convergence results for \airfedavgm and make comparisons between \airfedavgm and \airfedavgs.
In summary, from a systematic perspective, we provide a detailed taxonomy of the \airfedavg algorithms based on the type of model aggregation error (i.e., unbiased and biased), the transmitted signal (i.e., model difference, local gradient, or local model), the number of local updates (i.e., multiple or single), objective functions (i.e., strongly convex or non-convex), and learning rates (i.e., diminishing or constant), as shown in Fig. \ref{fig: tax}.
Our main contributions can be summarized as follows.
\begin{itemize}
	\item We present a novel convergence analysis framework for \airfedavg with respect to the model aggregation error at the edge server, regardless of whether the transmitted signal can be unbiased estimated by the edge server or not. We first provide convergence and asymptotic analysis for \airfedavgm to characterize the impact of the MSE and then apply the analysis to \airfedavgs and \airfedmodel.
	\item In terms of the type of local updates, we reveal that the convergence of the \airfedavgm and \airfedavgs algorithms (i.e., transmitting model difference or local gradient) can be guaranteed, while transmitting local model may cause the divergence in the training procedure based on the convergence results.
%	\item In terms of the transmitted signal, we propose to transmit local gradient or model difference for model aggregation, but not local model parameter, in \airfedavg based on the convergence results, since transmitting local model may cause the divergence in the training procedure.
%	 of \airfedavgm, \airfedavgs and \airfedmodel.
	\item To elaborate the convergence results of the learning algorithms (i.e., \airfedavgm and \airfedavgs) in different scenarios, we provide convergence results for both strongly convex and non-convex objective function under non-IID dataset with different learning rates. The details of the convergence results are listed in Table \ref{tab: scope}.
	\item We provide a case study and demonstrate the optimality gap for strongly convex case and error bound for non-convex case, and characterize the impact of the signal-to-noise ratio (SNR) on the convergence rate. We also derive the maximum number of local updates required for \airfedavgm to preserve convergence for both cases and verify that \airfedavgm can achieve linear speedup with respect to the number of local updates and the number of edge devices.	
%	\item 
\end{itemize}
%\begin{itemize}
%	\item We present a novel convergence analysis framework for \airfedavg with respect to the model aggregation error, where the error caused by channel fading and noise can be unbiased or not.
%	\item We provide a detailed taxonomy of the \airfedavg algorithms as shown in Fig. \ref{fig: tax} based on the transmitted (i.e., model difference, local gradient, or local model parameter), the number of local model updates (multiple or single), objective functions (i.e., strongly convex or non-convex), and learning rates (i.e., diminishing or constant).
%	\item We propose to transmit local gradient or model difference for model aggregation, but not local model parameter, in \airfedavg by summarizing the convergence results.
%	%	 of \airfedavgm, \airfedavgs and \airfedmodel.
%	\item For \airfedavgm and \airfedavgs, we provide convergence results for both strongly convex and non-convex objective function under non-IID dataset and case study under specific denoising factor. In the case study, we demonstrate the convergence optimality gap for strongly convex case and error bound for non-convex case, and characterize the impact of the signal-to-noise ratio (SNR) on the convergence rate. The setting of learning rates is listed in Fig. \ref{fig: tax} and Table \ref{tab: scope}. We also derive the maximum number of local updates required for \airfedavgm to preserve convergence for both cases and verify that \airfedavgm can achieve linear speedup with respect to the number of local updates and the number of edge devices.
%\end{itemize}

We conduct substantial simulations to evaluate the performance of the proposed \airfedavgm, \airfedavgs, and \airfedmodel under different system designs, i.e., precoding factors and SNRs. Results confirm that assisted by AirComp, transmitting local model is not a good choice. Besides, simulations also indicate that \airfedavgm can achieve linear speedup, but may suffer from highly non-IID data, and \airfedavgs is more robust to the channel noise and non-IID data despite linear slower convergence compared with \airfedavgm.
\begin{figure*}[tbp]
	\centering
	\includegraphics[width=1\linewidth]{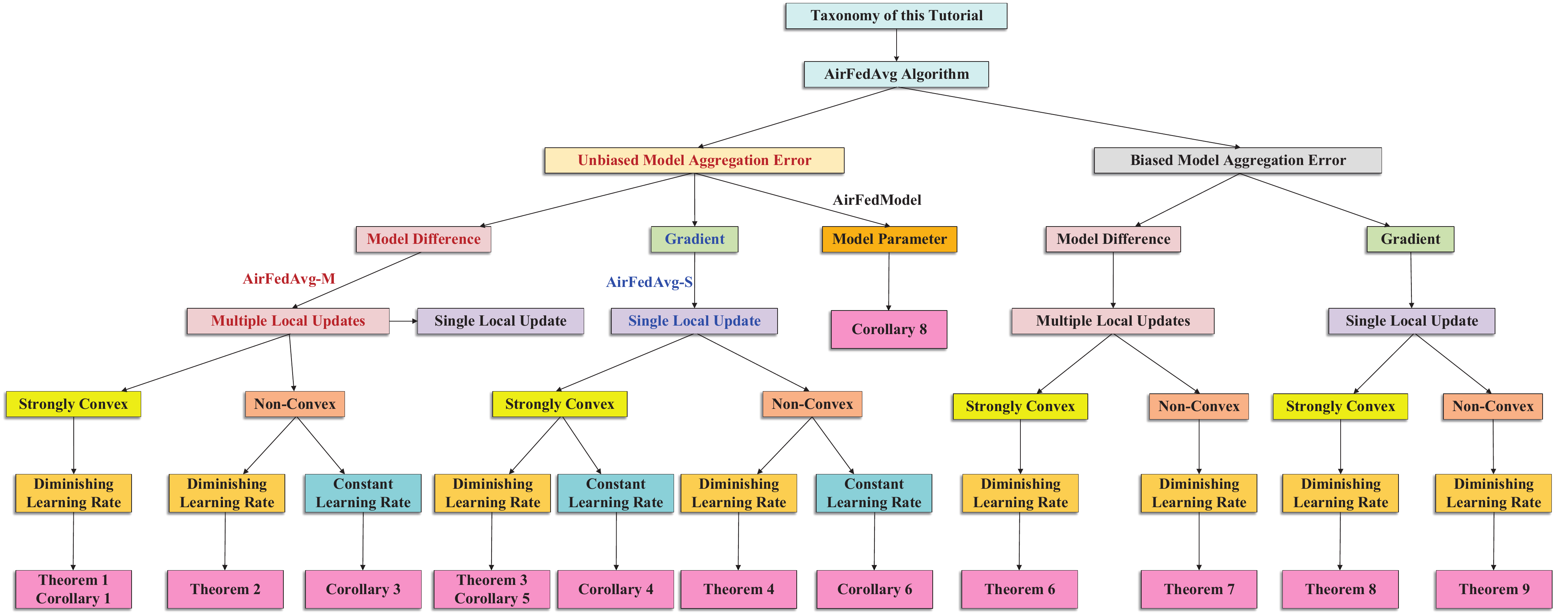}
	\caption{Taxonomy of this tutorial.}
	\label{fig: tax}
%	\vspace{-0.2cm}
\end{figure*}

\begin{table*}[!]	
	\caption{Main Convergence Results of \airfedavg Algorithms in This Tutorial, from the Perspectives of Strongly Convex (SC) and Non-Convex (NC) Objectives, Local Information (LI) to Transmit, Model Aggregation Error (MAE), Local Update (LU), Learning Rate (LR), Convergence Rate (CR), and Dominant Variable (DV).}\label{tab: scope}
	\centering\renewcommand\arraystretch{1.2}
	%\fontsize{8}{10}\selectfont    %{字体尺寸}{行距}
	\begin{tabular}{c|c|cccccccc}
		%\toprule 
		\hline
		Location&Result &Algorithm & Objective & LI&MAE &LU& LR & CR &DV\\ \hline
		\multirow{4}{*}{Section \ref{sec: convm}}&
		\textbf{Theorem \ref{thm: case3unbiasedconvex}}&\airfedavgm&SC & Difference&Unbiased&$E$&Diminishing&-&MSE\\
		&\textbf{Corollary \ref{cor: case3convex}}&\airfedavgm&SC & Difference&Unbiased&$E$&$\mathcal{O}(1/t)$&$\mathcal{O}(\frac{1}{NET})$&SNR\\
		&\textbf{Theorem \ref{thm: case3unbiasednonconvex}}&\airfedavgm&NC & Difference&Unbiased&$E$&Diminishing&-&MSE\\
		&\textbf{Corollary \ref{cor: case3}}&\airfedavgm&NC & Difference&Unbiased&$E$&$\mathcal{O}(\sqrt{\frac{N}{ET}})$&$\mathcal{O}(\frac{1}{\sqrt{NET}})$&SNR\\
		\hline
		\multirow{6}{*}{Section \ref{sec: var}}&
		\textbf{Theorem \ref{thm: case1unbiasedconvex}}&\airfedavgs&SC & Gradient&Unbiased&$1$&Diminishing&-&MSE\\
		&\textbf{Corollary \ref{cor: case1constant}}&\airfedavgs&SC & Gradient&Unbiased&$1$&Constant& $\mathcal{O}((1-\frac{\mu}{L})^T)$&MSE\\
		&\textbf{Corollary \ref{cor: case1convex}}&\airfedavgs&SC & Gradient&Unbiased&$1$&$\mathcal{O}(1/t)$&$\mathcal{O}(\frac{1}{NT})$&SNR\\
		&\textbf{Theorem \ref{thm: case1unbiasednonconvex}}&\airfedavgs&NC & Gradient&Unbiased&$1$&Diminishing&-&MSE\\
		&\textbf{Corollary \ref{cor: case1}}&\airfedavgs&NC & Gradient&Unbiased&$1$&$\mathcal{O}(\sqrt{\frac{N}{T}})$&$\mathcal{O}(\frac{1}{\sqrt{NT}})$&SNR\\
		&\textbf{Corollary \ref{cor: case2unbiasedconvex}}&\airfedmodel&SC & Model&Unbiased&$1$&Diminishing&-&MSE\\
		\hline
		\multirow{4}{*}{Section \ref{sec: extensions}}&
		\textbf{Theorem \ref{thm: case3biasedconvex}}&\airfedavgm&SC & Difference&Biased&$E$&Diminishing&-&MSE, Bias\\
		&\textbf{Theorem \ref{thm: case3biasednonconvex}}&\airfedavgm&NC & Difference&Biased&$E$&Diminishing&-&MSE, Bias\\
		&\textbf{Theorem \ref{thm: case1biasedconvex}}&\airfedavgs&SC & Gradient&Biased&$1$&Diminishing&-&Bias\\
		&\textbf{Theorem \ref{thm: case1biasednonconvex}}&\airfedavgs&NC & Gradient&Biased&$1$&Diminishing&-&Bias\\
		\hline
	\end{tabular}
\end{table*}

\subsection{Organization and Notation}
The rest of this paper is organized as follows. 
Section \ref{sec: background} provides some preliminaries for learning and optimization theory. 
Section \ref{sec: system model} describes the AirFL system model and \airfedavgm algorithm. Convergence analysis, case study and insights of \airfedavgm are given in Section \ref{sec: convm}. The convergence results of different variants, i.e., \airfedavgs and \airfedmodel, are provided in Section \ref{sec: var}. Further discusses are given in \ref{sec: extensions}. Numerical results are presented in Section \ref{sec: exp} to verify the theoretical findings. In Section \ref{sec: conclusions}, we draw our conclusions.
%The taxonomy of this tutorial is shown in Fig. \ref{fig: tax}, and the main convergence results of \airfedavg algorithms are listed in Table \ref{tab: scope}.
\\\textbf{Notation:}
The operators $\left\|\cdot \right\|_2$, $(\cdot)^{\sf{T}}$, $(\cdot)^{\sf{H}}$, $\left\langle\cdot\right\rangle$, and $\left|\cdot\right|$ denote $\ell_2$-norm, transpose, Hermitian, inner product, and modulus, respectively.
$\mathbb{E}[\cdot]$ denotes the expectation over a random variable. For a function $f$, $\nabla f$ is its gradient. $[T]$ denotes the set $\{0,1,\dots ,T-1\}$. $\mathcal{O}(\cdot)$ is the big-O notation, which stands for the order of arithmetic operations. The notations $N$, $E$, $T$, $t$, $\mu$, and $L$ denote the number of edge devices, the number of local updates, the number of communication rounds, each communication round, $\mu$ strong convexity, and $L$-smoothness, respectively.

%\section{Background Information for Learning and Optimization Theory}
\section{Preliminaries}\label{sec: background}
Before diving into the discussion on FL, we start by introducing some preliminaries of learning and optimization theory \cite{ghadimi2013stochastic,bottou2018optimization} to facilitate better understanding of our analysis. 
\subsection{Learning and Optimization Theory}
We consider the following general empirical risk minimization problem
\begin{equation}\label{eq: background}
	\underset{\bm{x}\in \mathbb{R}^d }{\text { minimize }} ~ f(\bm{x}):=
	\frac{1}{m}\sum\limits_{i=1}^{m} f_{i}\left(\bm x\right), 
\end{equation}
where $d$ is the dimension of vector $\bm{x}$, the objective function $f: \mathbb{R}^d\rightarrow \mathbb{R}$ is continuously differentiable, $m$ is the number of data samples, and $f_i: \mathbb{R}^d\rightarrow \mathbb{R}, i = 0,1,\dots,m$ is the sample-wise loss function.
The optimal solution to problem \eqref{eq: background} is denoted by $ 	\bm x^*:=\operatorname{arg} \underset{\bm{x}\in \mathbb{R}^d}{\min}~f(\bm x)$. 

\subsubsection{Objective Function}
For the objective function, we have the following definition and assumption.
\begin{defn}[$L$-Smoothness] \label{def: L-smooth}
	The objective function $f: \mathbb{R}^d\rightarrow \mathbb{R}$ is continuously differentiable and its gradient, i.e., $\nabla f: \mathbb{R}^d\rightarrow \mathbb{R}^d$, is Lipschitz continuous with constant $L>0$ if the following inequality holds
	\begin{equation}\label{eq: L-smooth}
		\left \|\nabla   f(\bm x)- \nabla f(\bm y)\right\|_2\leq L\left \|\bm x- \bm y\right\|_2,~\forall \bm x, \bm y \in \mathbb{R}^d.
	\end{equation}
	We can equivalently rewrite \eqref{eq: L-smooth} as
	\begin{equation*}
		f(\bm{x}) \leq f(\bm{y}) + (\bm{x} - \bm{y})^{\sf T} \nabla f(\bm{y}) + \frac{L}{2} \| \bm{x} - \bm{y} \|_2^2,~\forall \bm x, \bm y \in \mathbb{R}^d.
	\end{equation*}
\end{defn}
\begin{ass}
	The objective function $f$ is lower bounded by a scalar $f^{\inf}$, i.e., $f(\bm{x})\geq f^{\inf}>-\infty$.
\end{ass}

To solve problem \eqref{eq: background}, the dominant methodology is the vanilla stochastic gradient descent (SGD), which applies the following iterative update rule
\begin{equation}\label{eq: sgd}
	\bm{x}^{t+1} = \bm{x}^{t}- \eta^t \bm{g}(\bm{x}^{t},\bm{\xi}^t),
\end{equation}
where $\eta^t$ is the learning rate or step size in iteration $t$, $\bm{\xi}^t$ is a random variable indicating the selection of data samples, and $ \bm{g}(\bm{x}^{t},\bm{\xi}^t) $ is the output of the stochastic first-order oracle that is a noisy version of gradient $\nabla f(\bm{x}^t)$. The following assumption is made for the stochastic gradient estimation.
\begin{ass}\label{ass: sgd}
	The stochastic gradient estimation $ \bm{g}(\bm{x}^{t},\bm{\xi}^t) $ is unbiased and has a bounded variance in each iteration: 
	\begin{align*}
		&\mathbb{E}\left[ \bm{g}(\bm{x}^{t},\bm{\xi}^t)\right ] = \nabla f(\bm{x}^{t}),\\
		&\mathbb{E} \left[ \| \bm{g}(\bm{x}^{t},\bm{\xi}^t) - \nabla f(\bm{x}^{t}) \|_2^2 \right] \leq \sigma^2,
	\end{align*}
where constant parameter $\sigma^2\geq0$ and the expectation is taken with respect to the distribution of random variable $\bm{\xi}^t$.
\end{ass}
%\textbf{Assumption \ref{ass: sgd}} implies that $ \bm{g}(\bm{x}^{t},\bm{\xi}^t) $ is an unbiased estimator of $ \nabla f(\bm{x}^{t})$ and the variance of the random variable $ \| \bm{g}(\bm{x}^{t},\bm{\xi}^t) - \nabla f(\bm{x}^{t}) \|_2 $ is bounded.

%As SGD is widely used in large-scale ML problems, the most common case for applying SGD is to minimize a strongly convex objective function, which is defined as follows.
If the objective function is strongly convex, then we have the following definition.
\begin{defn}[Strong Convexity] \label{def: sc}
	The objective function $f: \mathbb{R}^d\rightarrow \mathbb{R}$ is strongly convex if there exists a constant $\mu>0$ such that
	\begin{equation*}
		f(\bm{x}) \geq f(\bm{y}) + (\bm{x} - \bm{y})^{\sf T} \nabla f(\bm{y}) + \frac{\mu}{2} \| \bm{x} - \bm{y} \|_2^2,~\forall \bm x, \bm y \in \mathbb{R}^d.
	\end{equation*}
\end{defn}

%Hence, the minimizer of $f$, i.e., $\bm x^*$, is unique and $f(\bm{x}^{\ast}) = f^{\inf}$. 
%which may make the minimizer non-unique.

For strongly convex objectives, the convergence performance can be evaluated by the optimality gap, which is defined as
\begin{equation*}
		{\sf Gap}^t := f(\bm{x}^t) - f(\bm{x}^{\ast})
\end{equation*}
in iteration $t$. To attain a given accuracy $\epsilon$ within $T$ iterations, the optimality gap is expected to satisfy $\mathbb{E}[{\sf Gap}^T]\leq\epsilon$, where the expectation is taken over $\bm{\xi}^t,~t= 0,1,\dots,T$.
However, to train deep neural networks (DNN) and convolutional neural networks (CNN), the objective function $f$ is likely to be smooth but non-convex.
For non-convex objectives, it is common to consider an upper bound on the weighted average norm of the gradient of the objective function, i.e., 
$$\frac{1}{\sum_{t = 0}^{T-1} \eta^t}\sum_{t = 0}^{T-1} \eta^t\mathbb{E} \left[\left \|\nabla f(\bm{x}^{t})\right \|_2^2\right],$$
for the first $T$ iterations. If the learning rate is a constant, i.e., $\eta^t = \eta,\forall t$, then it reduces to
$\frac{1}{T}\sum_{t = 0}^{T-1} \mathbb{E} \left[\left \|\nabla f(\bm{x}^{t})\right \|_2^2\right].$ Furthermore, we say that solution $\bm{x}^t$ is an $\epsilon$-stationary point if $\left \|\nabla f(\bm{x}^{t})\right \|_2^2\leq\epsilon$.

In addition, some objective functions meet the Polyak-Lojasiewicz (PL) condition, which is defined as follows.
\begin{defn}[PL Condition]\label{def: PL}
	The objective function $f: \mathbb{R}^d\rightarrow \mathbb{R}$ satisfies the PL condition with constant $\mu$, i.e.,
	\begin{align*}\label{eq: PL}
		\|\nabla  f(\bm x^t)\|_2^2\geq 2\mu [f(\bm x^t)-f(\bm x^*)].
	\end{align*}
\end{defn}
Note that the PL condition is weaker than strong convexity, and usually considered in the analysis of non-convex cases.
\subsubsection{Convergence Properties under Different Learning Rates}
To solve problem \eqref{eq: background} with SGD, the choice of learning rate is crucial for the convergence gap and convergence rate\footnote{In this tutorial, convergence gap refers to the optimality gap in the strongly convex case and the upper bound on the weighted average norm of the gradient of the objective function in the non-convex case. Convergence rate is a measure of how fast the difference between the objective function value of the solution and its estimates goes to zero.}. 
\begin{table}[!]	
	\caption{Convergence Results for Strongly Convex (SC) and Non-convex (NC) Objectives}\label{tab: convergence1}
	\centering\renewcommand\arraystretch{1.2}
	%\fontsize{8}{10}\selectfont    %{字体尺寸}{行距}
	\begin{tabular}{cccc}
		%\toprule 
		\hline
		& Learning Rate & Convergence Gap & Convergence Rate\\ 
		\hline \multirow{2}{*}{SC} & $\mathcal{O}(1)$ & non-diminishing & $\mathcal{O}(\rho^T),\rho\in(0,1)$\\
		& $\mathcal{O}(1/t)$ & diminishing &$\mathcal{O}(1/T)$\\
		\hline \multirow{2}{*}{NC} & $\mathcal{O}(1/\sqrt{T})$ & non-diminishing & $\mathcal{O}(1/\sqrt{T})$\\
		& $\mathcal{O}(1/t)$ & diminishing &-\\\hline
		%\bottomrule
	\end{tabular}
\end{table}
\begin{table*}[!]	
	\caption{Overview of Local SGD and \fedavg Algorithms}\label{tab: convergence2}
	\centering\renewcommand\arraystretch{1.2}
	%\fontsize{8}{10}\selectfont    %{字体尺寸}{行距}
	\begin{tabular}{c|cccccc}
		%\toprule 
		\hline
		Algorithm & Objective & BG & BGD &Local Update& Learning Rate & Convergence Rate\\ \hline	
		\fedavg \cite{Li2020On}&SC & \textcolor{colorsht}{\checkmark}&\textcolor{cha}{\ding{55}}&$\mathcal{O}(1)$&$\mathcal{O}(1/t)$&$\mathcal{O}(1/NT)$\\
		Local SGD \cite{stich2018local}&SC & \textcolor{colorsht}{\checkmark}&\textcolor{cha}{\ding{55}}&$\mathcal{O}(T^{1/2}N^{-1/2})$&$\mathcal{O}(1/t)$&$\mathcal{O}(1/NT)$\\
		
		\hline
		\fedavg \cite{haddadpour2019convergence}&NC+PL & \textcolor{cha}{\ding{55}}&\textcolor{colorsht}{\checkmark}&$\mathcal{O}(T^{2/3}N^{-1/3})$&$\mathcal{O}(1/t)$&$\mathcal{O}(1/NT)$\\
		Local SGD \cite{Li2020On}&NC+PL & \textcolor{cha}{\ding{55}}&\textcolor{cha}{\ding{55}}&$\mathcal{O}(T^{2/3}N^{-1/3})$&$\mathcal{O}(1/t)$&$\mathcal{O}(1/NT)$\\
		\hline
		\fedavg \cite{haddadpour2019convergence}&NC& \textcolor{cha}{\ding{55}}&\textcolor{colorsht}{\checkmark}&$\mathcal{O}(T^{1/2}N^{-3/2})$&$\mathcal{O}(N^{1/2}T^{-1/2})$&$\mathcal{O}(1/\sqrt{NT})$\\
		Local SGD \cite{yu2019parallel}&NC & \textcolor{colorsht}{\checkmark}&\textcolor{cha}{\ding{55}}&$\mathcal{O}(T^{1/4}N^{-3/4})$&$\mathcal{O}(N^{1/2}T^{-1/2})$&$\mathcal{O}(1/\sqrt{NT})$\\
		Local SGD \cite{wang2021cooperative}&NC & \textcolor{cha}{\ding{55}}&\textcolor{cha}{\ding{55}}&$\mathcal{O}(T^{1/2}N^{-3/2})$&$\mathcal{O}(N^{1/2}T^{-1/2})$&$\mathcal{O}(1/\sqrt{NT})$\\
		\multirow{2}{*}{Local SGD with Momentum\cite{yu2019on}}&NC & \textcolor{cha}{\ding{55}}&\textcolor{cha}{\ding{55}}&$\mathcal{O}(T^{1/2}N^{-3/2})$&$\mathcal{O}(N^{1/2}T^{-1/2})$&$\mathcal{O}(1/\sqrt{NT})$\\
		&NC &\textcolor{cha}{\ding{55}}&\textcolor{colorsht}{\checkmark}&$\mathcal{O}(T^{1/4}N^{-3/4})$&$\mathcal{O}(N^{1/2}T^{-1/2})$&$\mathcal{O}(1/\sqrt{NT})$\\
		VRL-SGD \cite{liang2019variance}&NC & \textcolor{cha}{\ding{55}}&\textcolor{cha}{\ding{55}}&$\mathcal{O}(T^{1/2}N^{-3/2})$&$\mathcal{O}(N^{1/2}T^{-1/2})$&$\mathcal{O}(1/\sqrt{NT})$\\
		\hline
	\end{tabular}
\end{table*}
To be specific, for strongly convex objectives with a constant learning rate, the expected objective function linearly converges to the neighborhood of the optimal value, but with a non-diminishing optimality gap due to the noisy gradient estimation \cite{bottou2018optimization}. In contrast, using a decaying learning rate leads to a diminishing optimality gap at the cost of achieving a sublinear convergence rate\footnote{If an algorithm has a convergence rate of $ \mathcal{O}(1/T) $, then it will take $\mathcal{O}(1/\epsilon)$ iterations to achieve accuracy $\epsilon$.}, where the learning rate satisfies the following conditions:
\begin{equation}\label{eq: learning rate}
	\sum_{t = 0}^{\infty}\eta^t=\infty,~\sum_{t = 0}^{\infty}(\eta^t)^2<\infty.
\end{equation}
Similarly, for non-convex objectives with a constant learning rate, the error bound on the average of squared gradients is non-diminishing as well \cite{bottou2018optimization}. In addition, using a diminishing learning rate leads to the following result:
\begin{equation}\label{eq: non-decay}
	\underset{t\rightarrow\infty}{\lim\inf}~\mathbb{E} \left[\left \|\nabla f(\bm{x}^{t})\right \|_2^2\right] = 0.
\end{equation}
More precisely, \eqref{eq: non-decay} can be written as
\begin{equation}\label{eq: non-decay2}
	\frac{1}{\sum_{t = 0}^{T-1} \eta^t}\sum_{t = 0}^{T-1} \eta^t\mathbb{E} \left[\left \|\nabla f(\bm{x}^{t})\right \|_2^2\right]\stackrel{T\rightarrow\infty}{\longrightarrow} 0.
\end{equation}
Table \ref{tab: convergence1} gives more detailed information about the above statements.

\subsection{Federated Learning}
With the capability of enhancing the data privacy and reducing the communication overhead, FL becomes a research hotspot, where $N$ workers collaboratively train a global model without sharing private data. The objective function can be written as
\begin{equation}\label{eq: fl}
	\underset{\bm{x}\in \mathbb{R}^d }{\text { minimize }} ~ F(\bm{x}):=
	\frac{1}{N}\sum\limits_{n=1}^{N} F_{n}\left(\bm x\right),
\end{equation}
where $F_n$ of each worker $n$ can be viewed as $f$ in \eqref{eq: background}. To solve this problem, various algorithms, including vanilla distributed SGD, paralleled SGD, Local SGD, and \fedavg algorithms, have been proposed in \cite{FedAvg,zhou2018convergence,stich2018local,haddadpour2019trading,yu2019parallel,haddadpour2019convergence,yu2019on,liang2019variance,wang2021cooperative,khaled2020tighter,karimireddy2020scaffold,Li2020On,gorbunov2021local} to achieve linear speed-up in model training with respect to the number of workers.
Specifically, Local SGD is fully-synchronized, where the model averaging step, i.e., taking an average over all workers' local model parameters, is performed once each worker finishes one local update \cite{wang2021cooperative}. To reduce the communication overhead, Local SGD with periodic averaging and \fedavg were proposed, where the frequency of the model averaging step can be chosen \cite{stich2018local}. 
The difference in nomenclature is that Local SGD usually refers to distributed settings with homogeneous data, while \fedavg is  used in FL with heterogeneous data \cite{charles2020outsized}. 
The detailed comparison of these algorithms is listed in Table \ref{tab: convergence2} with two extra assumptions on the gradient, i.e., bounded gradient (BG), and bounded gradient dissimilarity (BGD), which shall be defined in Section \ref{sec: convm}.
%
%\begin{ass}[Bounded Gradient Dissimilarity (BGD)]\label{ass: BGD}
%There exists constants $\beta_1\geq 1$ and $\beta_2\geq 0$ such that
%	\begin{align*}%\label{eq: common bound}
%\frac{1}{N}\sum_{n=1}^{N}\|\nabla   F_n(\bm x)\|_2^2\leq \beta_2+\beta_1 \left \|\nabla  F(\bm x)\right \|_2^2.
%	\end{align*}
%\end{ass}
Note that BGD can be used to measure the deviation of local gradients after multiple local updates as well as the non-IID extent of data at different workers in FL \cite{karimireddy2020scaffold,zhang2020fedpd}. 
%If the distributions of local datasets are identical, then it holds that $(\beta_1, \beta_2) = (1,0)$.

The convergence rates of Local SGD, FedAvg and other related algorithms under different assumptions and settings are summarized in Table \ref{tab: convergence2}. 
It can be observed that all studies considering strongly convex objectives (or PL condition) use learning rate decay to eliminate the influence of noisy gradient estimation and achieve a similar convergence rate $\mathcal{O}(1/NT)$. For non-convex objectives, they adopt a similar constant learning rate $\mathcal{O}(\sqrt{N/T})$ and achieve a sublinear convergence rate $\mathcal{O}(1/\sqrt{NT})$. Besides, decentralized SGD algorithms were also studied in \cite{chen2021accelerating,koloskova2020unified}.

All the algorithms listed in Table \ref{tab: convergence2} aggregate local models to update the global model.
In fact, the local information to be aggregated for global model update at the server can be classified into the following three categories.
\begin{itemize}
	\item \textbf{Local Model:} In this case, each worker directly transmits its local model (e.g., $\bm{x}^t$ in \eqref{eq: sgd}) to the central server for global model update \cite{stich2018local,haddadpour2019trading,yu2019parallel,haddadpour2019convergence,yu2019on,liang2019variance,wang2021cooperative,khaled2020tighter,karimireddy2020scaffold,Li2020On,gorbunov2021local,guo2022hybrid}.
%	, where the workers send their local model parameters to the central node, and it takes an average over all local model parameters to update the global model.
	\item \textbf{Model Difference:} Model difference refers to the difference of local models before and after local training in one global iteration (e.g., $\bm{x}^{t+1} - \bm{x}^t$). In this case, the central server receives all workers' model difference and performs a model update step using the average model difference. By applying Lookahead, which is a deep learning optimizer \cite{zhang2019lookahead}, to the update rule in FL, exchanging model difference between the workers and the central server has the following advantages: 1) Model difference exchange is necessary for developing a general local update framework with gradient-based local optimizer \cite{wang2020tackling}; 2) Model difference has decreasing dynamic ranges as iteration proceeds, which facilitates the design of quantization schemes \cite{reisizadeh2020fedpaq,das2020faster} for reducing the communication cost between the workers and server; 3) Exchanging model difference can promote the design of other local update rules in FL, such as \cite{reddi2021adaptive,tran2021feddr,fang2022communication,horvath2022fedshuffle}, because it only needs to exchange the difference instead of transmitting specific local information.
	\item \textbf{Local Gradient:} In this case, all workers transmit their local gradients to the central server, which performs a gradient decent step using the average local gradient. As the local gradients are decreasing as the training continues, the gradient information is suitable for alleviating communication burden by applying gradient quantization, sparsification \cite{rothchild2020fetchsgd,wang2022quantized} and lazy gradient aggregation \cite{LAG}.
\end{itemize}

%In this section, we provide background information on learning and optimization to facilitate better understanding of our analysis.

\section{System Model}\label{sec: system model}
In this section, we elaborate the whole procedure of the AirFL algorithm considered in this tutorial, including the loss functions, the training steps, the communication schemes, and the aggregation steps.
\subsection{FL over Wireless Networks}
We consider a wireless FL system that consists of a single-antenna edge server and $N$ single-antenna edge devices indexed by set $\mathcal{N} = \{1, 2, \ldots, N\}$. Each edge device $n\in\mathcal{N}$ has a local dataset $\mathcal{D}_n=\{\bm{x}_{ni},y_{ni}\}_{i=1}^{D_n}$ with $D_n$ data samples. All edge devices collaboratively learn a shared global model by communicating with the edge server without moving their private raw datasets. For each edge device $n$, $\bm{\theta}_n \in \mathbb{R}^d $ is the local model trained based on its local dataset $\mathcal{D}_n$ and $F_n: \mathbb{R}^d\rightarrow \mathbb{R}$ is the local loss function, defined as 
\begin{equation}\label{eq: local obective}
	F_{n}\left(\bm\theta_n\right) :=\frac{1}{D_n}\sum_{i=1}^{D_n}f_n(\bm\theta_n;\bm{x}_{ni},y_{ni}),
\end{equation}
where $f_n: \mathbb{R}^d\rightarrow \mathbb{R}$ denotes the sample-wise loss function defined by the learning task. For example, the sample-wise loss function of the linear regression problem is given by
$f_n(\bm\theta_n;\bm{x}_{ni},y_{ni}) = \frac{1}{2}|\boldsymbol{x}_{ni}\boldsymbol{\theta}_n- y_{ni}|^2$,
and that of the logistic regression is 
$ f_n(\bm\theta_n;\bm{x}_{ni},y_{ni}) = \log\left(1+e^{-y_{ni}\boldsymbol{x}_{ni}^{\sf T}\boldsymbol{\theta}}\right) $.

Our goal is to learn a shared global model by minimizing the weighted sum of the edge devices' local loss functions. This can be formulated as the following optimization problem:
\begin{equation}\label{eq: problem}
	\mathscr{P}: 
		\underset{\bm{\theta}\in \mathbb{R}^d }{\text { minimize }} ~ F(\bm{\theta}):=
		\sum\limits_{n=1}^{N} p_nF_{n}\left(\bm\theta\right),
		%\text { subject~to } & \bm{\theta}_{1} = \cdots= \bm{\theta}_{N}= \bm{\theta}, 
\end{equation}
where $\bm{\theta} \in \mathbb{R}^d$ is the global model with dimension $d$, $F(\cdot)$ is the objective function defined by the learning task, and $p_n > 0$ is the aggregation weight satisfying $\sum_{n=1}^{N} p_n = 1$. Generally, the aggregation weight $p_n$ can be simply set as $ D_n / D$, where $D = \sum_{n=1}^{N}D_n$ is the total number of data samples from all edge devices \cite{FedAvg}. Besides, the weights can be designed to promote fairness \cite{Li2020Fair}, personalization \cite{zhang2021personalized}, and efficiency of device sampling \cite{wang2022client}.
The solution to problem \eqref{eq: problem}, i.e., the global minimum $\bm\theta^*$, is defined as
\begin{align}\label{eq: global minimum}
	\bm\theta^*:=\operatorname{arg} \underset{\bm{\theta}\in \mathbb{R}^d}{\min}~F(\bm\theta).
\end{align}
\begin{figure}[tbp]
	\centering
	\includegraphics[width=1\linewidth]{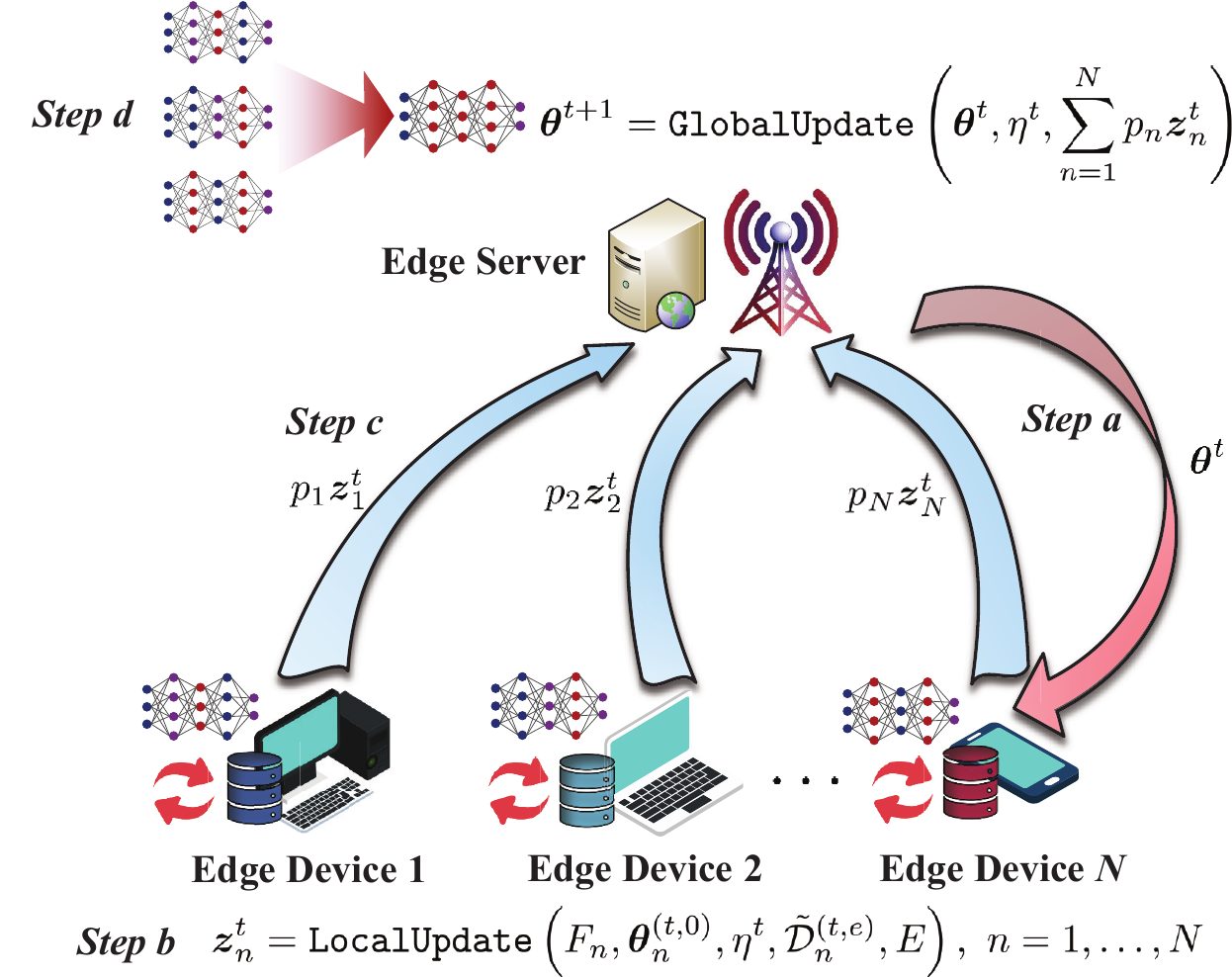}\hfil
	\caption{Illustration of the FL procedure.}
	\label{fig: 4cases}
	\vspace{-0.5cm}
\end{figure}

As illustrated in Fig \ref{fig: 4cases}, during each communication round $t$, the computation and transmission steps between the edge server and edge devices can be summarized as follows:
\begin{itemize}
	\item \textit{Step a}: The edge server broadcasts the global model $\bm\theta^{t}$ to all edge devices;
	\item \textit{Step b}: After receiving the global model $\bm\theta^{t}$, edge device $n\in\mathcal{N}$ initializes its local model by setting $\boldsymbol{\theta}_n^{(t,0)} = \bm\theta^{t}$, and performs $E\geq1$ epoch of local updates with respect to the local mini-batch set $\mathcal{\tilde{D}}_n^{(t,e)}$ with $\tilde{D}_n^{(t,e)}$ samples, $e\in[0,\cdots,E-1]$, via 
	\begin{equation}\label{eq: localupdate}
		\boldsymbol{z}_n^{t}:=\texttt{LocalUpdate}\left(F_{n},\boldsymbol{\theta}_n^{(t,0)} ,\eta^t,\mathcal{\tilde{D}}_n^{(t,e)},E\right),
	\end{equation}
where $\texttt{LocalUpdate}(\cdot): \mathbb{R}^d\rightarrow \mathbb{R}^d$ denotes the local model update operator, $\eta^t$ is the learning rate, and $\boldsymbol{z}_n^{t}\in \mathbb{R}^d$ is the output vector of the operator. If the local batch size equals to the local sample size, i.e., $\tilde{D}_n^{(t,e)}=D_n$, then the mini-batch stochastic gradient becomes a full-batch one.
	\item \textit{Step c}: All edge devices send the local information (e.g., local model, model difference, and local gradient) $p_n\boldsymbol{z}_n^{t}$ to the edge server\footnote{In this paper, we only consider full device participation. The discussion on partial device participation can be found in Section \ref{sec: extensions and future work}};
	\item \textit{Step d}: The edge server receives all the local information and updates the global model $\boldsymbol{\theta}^{t+1}$ by performing
	\begin{align}\label{eq: globalupdate}
		\bm{\theta}^{t+1} := \texttt{GlobalUpdate}\left(\boldsymbol{\theta}^t,\eta^t,\sum_{n=1}^N p_n\boldsymbol{z}_n^{t}\right),
	\end{align}	
	where $\texttt{GlobalUpdate}(\cdot): \mathbb{R}^d\rightarrow \mathbb{R}^d$ denotes the global model update operator, i.e., model aggregation. 
\end{itemize}
The iterative training process continues until a desired model accuracy is attained.

In this article, we aim to analyze the convergence of FL via first-order optimization method. Specifically, we consider the  state-of-the-art algorithm \fedavg \cite{FedAvg} with local vanilla SGD optimizer in our analysis.
The local SGD optimizer of \fedavg is performed as follows
\begin{align}\label{eq: definition gradient}
		\bm\theta_{n}^{(t,e+1)} \triangleq \bm\theta_{n}^{(t,e)} - \eta^{t}\bm g_n^{(t,e)}, ~e\in[0,\cdots,E-1],
\end{align}
where $ \bm g_n^{(t,e)} $ is the abbreviation for $\bm g_n(\boldsymbol{\theta}_n^{(t,e)})=\frac{1}{\tilde{D}_n^{(t,e)}}\sum_{i\in\mathcal{\tilde{D}}_n^{(t,e)}}\nabla  f_n(\bm\theta_n^{(t,e)};\bm{x}_{ni},y_{ni})$ and denotes the mini-batch stochastic gradient estimate of the local loss function $F_n(\bm\theta)$ with respect to $\bm\theta_{n}^{(t,e)}$ and the local mini-batch set $\mathcal{\tilde{D}}_n^{(t,e)}$ in the local iteration $e$ of the communication round $t$.

\begin{Rem}
	It is worth noting that different from the algorithms listed in Table \ref{tab: convergence2}, notation $T$ in our FL model refers to the total number of communication rounds. This means that the number of local computation for each edge device is $E\times T$ in this paper but $T$ in those literature.
\end{Rem}
\begin{Rem}
	The local model update operator \eqref{eq: localupdate} has different forms, resulting in different $ \boldsymbol{z}_n^{t} $, i.e., model difference, local gradient, and local model. For different forms of $\boldsymbol{z}_n^{t} $, the global update should be adjusted accordingly.
\end{Rem}

We first consider the situation that $ \boldsymbol{z}_n^{t} $ in \eqref{eq: localupdate} is the accumulated local gradients (i.e., model difference in other literature) \cite{wang2020tackling}, which can be written as
\begin{align}\label{eq: cumulative gradient}
	\boldsymbol{z}_n^{t} = \Delta\bm{\theta}^{t+1}_n \triangleq \bm\theta_{n}^{(t,E)} - \bm\theta_{n}^{(t,0)}= -\eta^{t}\sum\limits_{e=0}^{E-1}\bm g_{n}^{(t,e)}.
\end{align}
The edge server then aggregates all the accumulated local gradients and updates the global model $\bm\theta^{t+1}$ by performing
\begin{align}\label{eq: global update}
	\bm{\theta}^{t+1} = \bm{\theta}^{t}+\sum_{n=1}^N p_n \boldsymbol{z}_n^{t}.
\end{align}

\subsection{AirComp-based \fedavg (\airfedavg)}
In this paper, we mainly consider the transmission of local information from edge devices to the edge server over multiple-access fading channels in the uplink. The downlink transmission is assumed to be error-free \cite{amiri2020federated,RN36} as the edge server has a greater transmit power than the edge devices. Note that the convergence of FL over a noisy downlink is studied in \cite{amiri2022convergence,wei2021federated,ang2020robust,guo2022Joint}.

With the conventional orthogonal multiple access schemes \cite{OMA}, $N$ edge devices are assigned with $N$ orthogonal frequency/time resource blocks to transmit their local information. The edge server needs to successfully decode the transmitted information of all edge devices for global model update, which may be of low spectrum efficiency. To address this challenge, AirComp \cite{nazer2007computation}, as a non-orthogonal multiple access scheme, was proposed to exploit the waveform superposition property by enabling all edge devices to synchronously transmit their local information over the same channel. 
Moreover, a critical observation made by \cite{RN34} is that the waveform superposition property of AirComp is a perfect fit for model aggregation at the edge server. Prior works \cite{RN34} demonstrate that AirComp enables fast and spectrum-efficient model aggregation for wireless FL.

%\begin{figure}[tbp]
%	\centering
%	\includegraphics[width=1\linewidth]{}\hfil
%	\caption{AirComp system workflow.}
%	\label{fig: AirComp}
%	\vspace{-0.5cm}
%\end{figure}

In this paper, all edge devices adopt AirComp to communicate with the edge server with perfect synchronization.
% as depicted in Fig. \ref{fig: AirComp}.
We assume a block flat-fading channel and each communication block comprises $d$ time slots for a $d$-dimensional local model to be transmitted, during which the channel coefficients remain unchanged.
By denoting $\alpha_n^t$ as the precoder of edge device $n$, the received signal at the edge server in communication round $t$ is given by
\begin{align}\label{eq: received signal}
	\hat{\bm s}^t = \sum_{n=1}^{N}h_n^t  \alpha_n^t p_n \boldsymbol z_n^t+ \boldsymbol w^t,
\end{align}
where $h_n^t\in \mathbb{C}$ is the channel coefficient for edge device $n$ in the $t$-th communication round, and $\bm w^t\in \mathbb{R}^d\sim\mathcal{CN}(0,\sigma_w^2\bm{I_d})$ is the additive white Gaussian noise (AWGN) vector.
In each model aggregation, AirFL suffers from the channel fading and the additive noise. To characterize the detrimental impact of the channel distortions, channel fading causes different magnitudes for different edge devices, and the additive noise of each communication round may stack up and further cause the FL model to diverge. 
These issues inspire us to design power control policies for magnitude alignment and signal processing schemes to reduce the impact of noise, thereby improving the training performance of AirFL.

As in most of the existing literature on AirComp \cite{RN34,RN36,RN37}, we assume that perfect channel state information (CSI) is available at each edge device and the edge sever. Each edge device is required to perform power control to
make the received signal satisfy magnitude alignment for model aggregation. By exploiting the CSI, each edge device can implement channel inversion by multiplying the local information by its inverse channel coefficient to compensate for both amplitude and phase of the fading channel, which is
\begin{align}\label{eq: inversion}
	\alpha_n^t  := \sqrt{\beta^t}\frac{(h_n^t)^{\sf{H}}}{|h_n^t|^2},
\end{align}
%where $\beta^t$ indicates the scaling factor in the $t$-th communication round. 
where $\beta^t$ is the denoising factor at the edge server. Then the estimated signal at the edge server is\footnote{In this paper, the transmitted signal of edge device $n$ is the local information multiplied by weight $p_n$, i.e., the edge server  receives the weighted sum of the transmitted local information. When $p_1=\dots=p_N=1/N$, the edge devices transmit only the local information without scaling by $1/N$, while the edge server needs to perform global averaging by multiplying $ 1/N $ for post-processing.}
\begin{align}\label{eq: AirFL}
	\hat{\bm y }^t := \frac{ \hat{\bm s}^t}{\sqrt{\beta^t}} =\sum_{n=1}^{N}p_n \bm z_n^t + \tilde{\bm w}^t,
\end{align}
where $\tilde{\bm w}^t \sim \mathcal{N}(0,\frac{\sigma_w^2}{\beta^t}\bm{I_d})$ is the equivalent noise vector. It is clear that the signal estimated by the edge server is distorted by noise, which is different from the error-free case. Such an imperfect estimation causes a model aggregation error (MAE), which refers to the difference between the estimated signal of the edge server and the error-free one, i.e.,
%caused by the wireless channels
\begin{align}\label{eq: MAE}
	\bm \varepsilon^{t} := \hat{\bm y}^t - \bar{\bm y}^t,
\end{align}
where $\bar{\bm y}^t = \sum_{n=1}^{N}p_n \bm z_n^t $. The MAE is dominated by the additive noise with estimated signal \eqref{eq: AirFL} at the edge server. 
%We aims to analyze the convergence of \airfedavg with regard to MAE.
\begin{algorithm}[tbp]
	\caption{\airfedavgm Algorithm}
%	\setstretch{0.95}
	\label{algorithm 1}
	%       \DontPrintSemicolon
	\SetAlgoLined
	%       \KwData{local datasets $\{\mathcal{D}_n\}$}
	\SetKwInOut{Input}{Input}
	%\SetKwInOut{Output}{Output}
	\SetKwFor{ParFor}{for each}{do in parallel}{end}
	%\Input{Initialized global model, total number of rounds $T$;}
	%\Output{$A$, $B$} 	   
	\textbf{Algorithm of the Edge Server:} \\ 
	Initialize the global model and broadcast it to all edge devices. Set $t=0$\;
	\While{$t\leq T$}
	{		
		\textbf{wait} until receiving the perturbed and aggregated signal via AirComp \eqref{eq: received signal}\;
		\textbf{post-process} the received signal by dividing the denoising factor \eqref{eq: AirFL}\;
		\textbf{update} global model via  \eqref{eq: AirFedAvg-Multiple}\;
		\textbf{broadcast} the updated global model to all edge devices\;
		\textbf{set} $t \leftarrow t+1$\;
	}    
	\BlankLine
	\setcounter{AlgoLine}{0}
	\textbf{Algorithm of the $n$-th Edge Device:} \\
	Initialization: local model, $\eta^0, t = 0$\;
	\While{$t\leq T$}{
		\textbf{wait} until receiving global model from edge server\;
		\textbf{update} local model via \eqref{eq: cumulative gradient}\;
		\textbf{send} the pre-processed and precoded accumulated local gradients to edge server synchronously with respect to time and frequency\;
		\textbf{set} $t \leftarrow t+1$.
	}
\end{algorithm}

\begin{figure}[]
	\centering
	\subcaptionbox{Constant learning rate\label{fig: fixlr}}{\includegraphics[width=.49\linewidth]{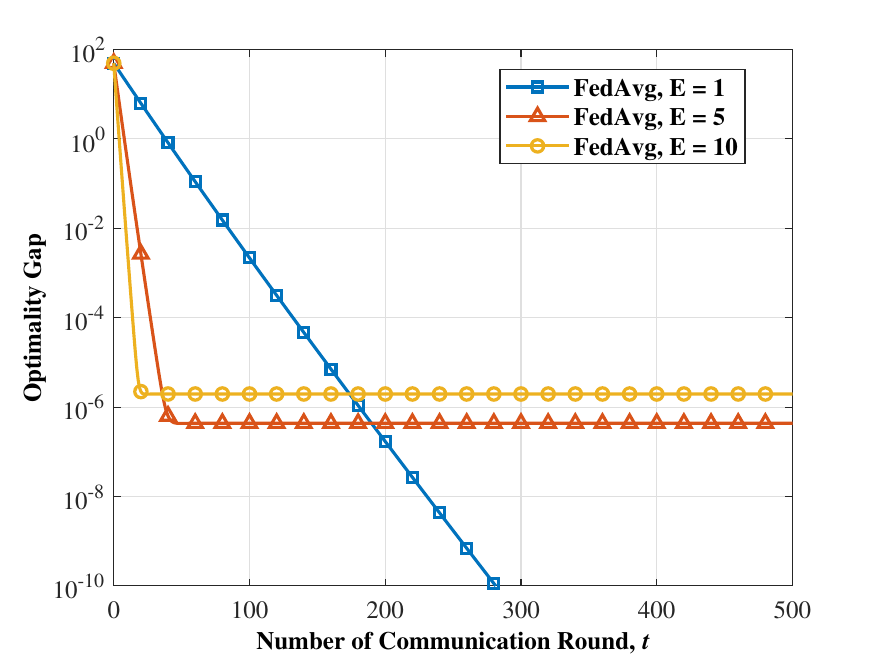}}\hfil
	\subcaptionbox{Diminishing learning rate\label{fig: decaylr}}{\includegraphics[width=.49\linewidth]{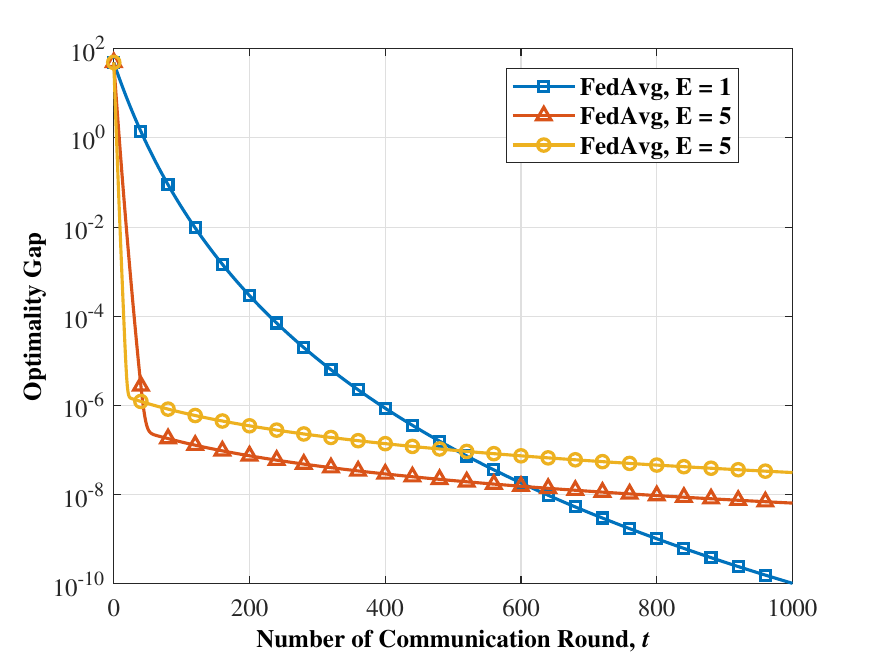}}\hfil
	\caption{\fedavg with $E = 1$, $E = 5$, and $E = 10$ under constant learning rate and learning rate decay for federated linear regression problems.}
	\label{fig: lr}
	\vspace{-0.5cm}
\end{figure}

\begin{Rem}
Besides, the channel inversion method we adopt has some variants. For instance, in \cite{COTAF,xia2020fast,fang2022communication}, the authors considered a threshold for channel gains of edge devices to filter out edge devices in deep fading,
%with poor channel status to mitigate the impact of fading
leading to partial device participation. The threshold-based channel inversion can be summarized as
\begin{align}\label{eq: truncated}
	\alpha^t_n := \left\{
	\begin{array}{ll}
		0, &|h^t_n| < \gamma, \\
		\sqrt{\beta^t}\frac{(h_n^t)^{\sf{H}}}{|h_n^t|^2}, &|h^t_n| \geq \gamma,
	\end{array}
	\right.
\end{align}
where $ \gamma $ is the pre-determined threshold.
%The channel inversion method ensures that the noise is the only decisive factor that affects the accuracy of model aggregation. 
%The essential difference between this scheme and that in \eqref{eq: inversion} is that given a threshold for channel coefficient, 
Under threshold-based channel inversion, full device participation reduces to partial device participation.
\end{Rem}

We first consider the scenario where edge device $n$ performs multiple local updates (i.e., $E>1$) and the accumulated local gradient (i.e., model difference) in \eqref{eq: cumulative gradient} is transmitted. This algorithm is named $\textsc{AirFedAvg-Multiple } (\airfedavgm)$ and is detailed in Algorithm \ref{algorithm 1}, which has the following global update operator
\begin{align}\label{eq: AirFedAvg-Multiple}
	\hat{\bm{\theta}}^{t+1} = \hat{\bm{\theta}}^{t}+\hat{\boldsymbol{y}}^t.
\end{align}
where $\hat{\bm{\theta}}^{t}$ represents the noisy global model.
We will discuss other two forms of $ \boldsymbol{z}_n^{t} $ (i.e., local gradient and local model) in Section \ref{sec: var}.

However, according to \cite{FedSplit,khaled2019first}, \fedavg with multiple local updates cannot optimally converge in some convex problems, e.g., federated least square and logistic regression, under the constant learning rate. As depicted in Fig. \ref{fig: lr}, \fedavg with single local update converges directly to the optimal point under the constant learning rate, while \fedavg with multiple local updates converges to a surrogate fixed point instead of the optimal point, resulting in a non-diminishing optimality gap. This is because multiple local updates may possibly be biased (refer to details in Section \ref{sec: var}), and the decaying learning rate can reduce such a bias in analogous to variance reduction of decaying learning rate on SGD. Using learning rate decay can address this problem \cite{Li2020On,charles2020outsized}, which is verified in Fig. \ref{fig: decaylr} with convergence speed slowed down. In view of this, together with the conclusions in Section \ref{sec: background}, the setting of learning rate $\eta^t$ in \airfedavgm is chosen as follows:
\begin{itemize}
	\item For strongly convex objective functions, we use a diminishing learning rate for algorithm analysis to attain diminishing optimality gap.
%	, and the results for constant learning rate may be presented as corollaries for non-diminishing optimality gap.
	\item For non-convex objective functions, we use a diminishing learning rate for algorithm analysis as well. Besides, we also present convergence results with constant learning rate, in terms of to achieve convergence rate and error bound on the average norm of the global gradient, which is related to the number of edge devices, the number of local updates, and the number of communication rounds,
\end{itemize}
%In this section, we provide thorough information on the baseline algorithm to analyze, i.e., \airfedavgm with communication model and considerations on the learning rate for different objective functions.

\section{Convergence Analysis for \airfedavgm}\label{sec: convm}
In this section, we start by establishing the convergence analysis of \airfedavgm over a noisy uplink channel.
Our analysis is based on several basic and widely-used assumptions.
% in federated learning and optimization problems.
\begin{ass} [$L$-Smoothness] \label{ass: smooth}
	$F_{1}, \ldots, F_{N}$ are all $L$-smooth.
\end{ass}

\begin{ass}[$\mu$-Strong Convexity] \label{ass: strong convex}
	$F_{1}, \ldots, F_{N}$ are all $\mu$-strongly convex.
\end{ass}

\begin{ass}[Unbiased Gradient and Bounded Variance] \label{ass: gradient variance}
	Each edge device can query an unbiased stochastic gradient with bounded variance: 
	\begin{align*}
		&\mathbb{E}\left[ \bm{g}_{n}^{t}\right ] = \nabla F_{n}(\bm{\theta}^{t}),\\
		&\mathbb{E} \left[ \| \bm{g}_{n}^{t} - \nabla F_{n}(\bm{\theta}^{t}) \|_2^2 \right] \leq \sigma_{n}^2,
	\end{align*}
 where $\sigma_{n}^2$ is inversely proportional to the local mini-batch size $\tilde{D}_n^{t}$, for $n = 1, \ldots, N$.
\end{ass}

\begin{ass}[Bounded Gradient Dissimilarity (BGD)]\label{ass: bounded gradient dissimilarity}
	There exist constants $\beta_1\geq 1$ and $\beta_2\geq 0$, such that
	\begin{align*}%\label{eq: common bound}
		\sum_{n=1}^{N}p_n\|\nabla   F_n(\bm x)\|_2^2\leq \beta_2+\beta_1 \left \|\nabla  F(\bm x)\right \|_2^2.
	\end{align*}
	Note that if the local datasets of all devices arte i.i.d., then it holds that $\beta_1 = 1$, and $\beta_2 = 0$.
\end{ass}

\subsection{Strongly Convex Case}\label{sec: conv}
\subsubsection{Main Results and Analysis}
The convergence for \airfedavgm is stated in \textbf{Theorem \ref{thm: case3unbiasedconvex}} for strongly convex objectives.

\begin{thm}[Convergence of \airfedavgm under Strong Convexity with Learning Rate Decay]\label{thm: case3unbiasedconvex}
	With \textbf{Assumptions \ref{ass: smooth}, \ref{ass: strong convex}}, \textbf{\ref{ass: gradient variance}}, and \textbf{\ref{ass: bounded gradient dissimilarity}}, if learning rate $0<\eta^t\leq \min\left\{\frac{1}{L\sqrt{2E(E-1)(2\beta_1+1)}},\frac{1}{2LE}\right\}$, then the upper bound on the cumulative gap after $T$ communication rounds is given by
	\begin{align}\label{eq: case3unbiasedconvex}\nonumber
		\mathbb{E}\left[F(\hat{\bm{\theta}}^{T})\right] - F^{\star}
		&\leq \left[F(\hat{\bm{\theta}}^{0}) - F^{\star}\right] M^0 + \underbrace{C_1 \sum_{t=0}^{T-1} (\eta^t)^3 M^{t+1}}_{(a)}\\
		&+ \underbrace{C_2 \sum_{t=0}^{T-1} (\eta^t)^2 M^{t+1} }_{(b)}
		+ \underbrace{C_3 \sum_{t=0}^{T-1} {\sf{MSE}}^t M^{t+1}}_{(c)},
	\end{align}
	where constants $M^t = \prod _{i=t}^{T-1}(1-\frac{\mu\eta^iE}{2}), M^T=1$, $C_1=\frac{L^2E(E-1)(2\beta_1+1) }{4\beta_1} \left[\sum_{n=1}^{N}p_n\sigma_{n}^2+2E\beta_2\right]$, $C_2=LE\sum_{n=1}^N p_n^2 \sigma_n^2$, and $C_3=\frac{L}{2}$.
	In addition, the MSE is a function of MAE in each communication round, and can be written as
	\begin{equation}\label{eq: MSE}
		{\sf{MSE}}^t(\bm \varepsilon^{t}) := \mathbb{E}\left[\left \| \bm \varepsilon^t\right \|_2^2 \right]  =\mathbb{E}\left[\left \| \tilde{\bm w}^t \right \|_2^2 \right] =  \frac{\sigma_{w}^2}{\beta^t},
	\end{equation}
	where the expectation is taken over the distribution of the random variable $\bm \varepsilon^{t}$.
\end{thm}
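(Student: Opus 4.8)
The plan is to establish a one-step descent recursion of the form
\[
\mathbb{E}[F(\hat{\bm{\theta}}^{t+1})] - F^{\star} \leq \Big(1-\tfrac{\mu\eta^t E}{2}\Big)\big(\mathbb{E}[F(\hat{\bm{\theta}}^t)] - F^{\star}\big) + C_1(\eta^t)^3 + C_2(\eta^t)^2 + C_3\,{\sf MSE}^t,
\]
and then unroll it across the $T$ rounds, which immediately produces the product weights $M^t$ and the three accumulated-error sums $(a)$, $(b)$, $(c)$ in the claim. The telescoping is routine; the substance lies in proving this recursion.

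First I would invoke $L$-smoothness (Assumption \ref{ass: smooth}) on consecutive iterates, writing $\hat{\bm{\theta}}^{t+1}-\hat{\bm{\theta}}^t = \bar{\bm{y}}^t + \bm{\varepsilon}^t$ with $\bar{\bm{y}}^t = -\eta^t\sum_{n}p_n\sum_{e}\bm{g}_n^{(t,e)}$ from the global update \eqref{eq: AirFedAvg-Multiple} and the MAE definition \eqref{eq: MAE}. Taking expectation and using that the equivalent noise $\bm{\varepsilon}^t = \tilde{\bm{w}}^t$ is zero-mean and independent of $\hat{\bm{\theta}}^t$ annihilates the cross term $\langle\nabla F(\hat{\bm{\theta}}^t),\bm{\varepsilon}^t\rangle$, while its squared-norm contribution is exactly $\frac{L}{2}\mathbb{E}[\|\bm{\varepsilon}^t\|_2^2]=\frac{L}{2}{\sf MSE}^t$; this is the origin of term $(c)$ with $C_3=\frac{L}{2}$, and the expression ${\sf MSE}^t=\sigma_w^2/\beta^t$ follows directly from the distribution of $\tilde{\bm{w}}^t$ in \eqref{eq: AirFL}.

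The crux is the inner-product term $-\eta^t\langle\nabla F(\hat{\bm{\theta}}^t),\sum_n p_n\sum_e \nabla F_n(\bm{\theta}_n^{(t,e)})\rangle$ obtained after conditioning on the stochastic gradients via Assumption \ref{ass: gradient variance}. I would add and subtract $\nabla F_n(\hat{\bm{\theta}}^t)$ inside the sum to split it into the principal descent $-\eta^t E\|\nabla F(\hat{\bm{\theta}}^t)\|_2^2$ plus a client-drift residual bounded through $L\|\bm{\theta}_n^{(t,e)}-\hat{\bm{\theta}}^t\|_2$. The hard part is the accompanying drift lemma, i.e., bounding $\sum_e\mathbb{E}[\|\bm{\theta}_n^{(t,e)}-\hat{\bm{\theta}}^t\|_2^2]$. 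Since $\bm{\theta}_n^{(t,e)}-\hat{\bm{\theta}}^t=-\eta^t\sum_{j<e}\bm{g}_n^{(t,j)}$, expanding the square, separating the stochastic variance (Assumption \ref{ass: gradient variance}) from the deterministic part, and applying the BGD inequality (Assumption \ref{ass: bounded gradient dissimilarity}) to relate $\sum_n p_n\|\nabla F_n\|_2^2$ to $\|\nabla F\|_2^2$ yields a bound of order $(\eta^t)^2 E(E-1)$ times a combination of $\sigma_n^2$, $\beta_2$, and $\|\nabla F(\hat{\bm{\theta}}^t)\|_2^2$. Multiplied by the outer $\eta^t$ from the inner product, this generates both the $(\eta^t)^3$ scaling and the constant $C_1$.

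In parallel I would bound the quadratic term $\frac{L}{2}\mathbb{E}[\|\bar{\bm{y}}^t\|_2^2]=\frac{L}{2}(\eta^t)^2\mathbb{E}[\|\sum_n p_n\sum_e\bm{g}_n^{(t,e)}\|_2^2]$, where independence across devices isolates a variance contribution $\sum_n p_n^2\sigma_n^2$ scaled by $E$, giving the $(\eta^t)^2$ term with $C_2=LE\sum_n p_n^2\sigma_n^2$, while its mean-square part folds back into the drift and gradient-norm terms. Collecting everything, the two learning-rate constraints $\eta^t\leq\frac{1}{L\sqrt{2E(E-1)(2\beta_1+1)}}$ and $\eta^t\leq\frac{1}{2LE}$ are precisely what forces the net coefficient of $\|\nabla F(\hat{\bm{\theta}}^t)\|_2^2$ to remain at most $-\frac{\eta^t E}{2}$; applying strong convexity (Assumption \ref{ass: strong convex}) in the form $\|\nabla F(\hat{\bm{\theta}}^t)\|_2^2\geq 2\mu[F(\hat{\bm{\theta}}^t)-F^{\star}]$ then converts this into the contraction factor $1-\frac{\mu\eta^t E}{2}$, completing the recursion and hence, after telescoping with $M^t = \prod_{i=t}^{T-1}(1-\frac{\mu\eta^i E}{2})$, the stated bound.
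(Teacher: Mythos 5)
Your proposal follows essentially the same route as the paper's own proof in Appendix~\ref{sec: case3unbiased}: $L$-smoothness applied to the perturbed update $\hat{\bm{\theta}}^{t+1}-\hat{\bm{\theta}}^{t}=\bar{\bm{y}}^t+\bm{\varepsilon}^t$, the zero-mean independent noise killing the cross terms and contributing $\frac{L}{2}{\sf{MSE}}^t$, a client-drift bound built from \textbf{Assumptions \ref{ass: gradient variance}} and \textbf{\ref{ass: bounded gradient dissimilarity}} (including the implicit self-bounding rearrangement the paper carries out via $H^t=2L^2(\eta^t)^2E(E-1)$) producing the $(\eta^t)^3$ and $(\eta^t)^2$ terms, strong convexity via $\|\nabla F\|_2^2\geq 2\mu[F-F^{\star}]$, and unrolling the recursion to obtain the weights $M^t$ — the only cosmetic difference being that you split the inner product by adding and subtracting $\nabla F_n(\hat{\bm{\theta}}^t)$ where the paper uses the identity $2\left\langle \bm a,\bm b\right\rangle=\|\bm a\|_2^2+\|\bm b\|_2^2-\|\bm a-\bm b\|_2^2$, whose extra negative term absorbs the mean-square part of the quadratic term exactly as you describe with ``folds back.'' One minor factor-of-two slip: under the stated step sizes the drift residual consumes half of the descent coefficient, leaving $-\frac{\eta^t E}{4}\|\nabla F(\hat{\bm{\theta}}^t)\|_2^2$ rather than your claimed $-\frac{\eta^t E}{2}$, and it is precisely $-\frac{\eta^t E}{4}$ combined with the PL-type inequality that yields the contraction $1-\frac{\mu\eta^t E}{2}$ — your final recursion is the correct one, but the intermediate coefficient claim is inconsistent with it (and harmlessly so, since a stronger contraction would only tighten the bound).
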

\begin{proof}
	Please refer to Appendix \ref{sec: case3unbiased}.
\end{proof}

\begin{Rem}\label{rem: remark}
	The key observations of \textbf{Theorem \ref{thm: case3unbiasedconvex}} include: term (a) is caused by bounded variance, non-IID data, and multiple local updates; term (b) is caused by gradient estimation variance; term (c) reflects the impact of the mean square error (MSE) caused by MAE on the convergence bound, which is dominated by the noise power and the denoising factor.
\end{Rem}
To analyze the convergence of partial sums (a), (b), and (c) in \textbf{Theorem \ref{thm: case3unbiasedconvex}}, we introduce a key lemma to support our theoretical analysis.

\begin{lem}[Lemma 25 in \cite{6172233}]\label{lem: sequence}
	Let sequences $\{s_1(t)\}_{t\geq0}$ and $\{s_2(t)\}_{t\geq0}$ be $$s_1(t)=\frac{a_1}{(t+1)^{\delta_1}},~s_2(t)=\frac{a_2}{(t+1)^{\delta_2}},~t\geq0,$$ 
	with $a_1, a_2, \delta_2 \geq 0$ and $0 \leq \delta_1 \leq 1$. If $\delta_1 =\delta_2$, then there exists a constant $\varphi$, such that for a large enough positive integer $j <T$, 
	\begin{align}
		0\leq \sum_{t=j}^{T-1} \left[s_2(t)\left(\prod _{i=t+1}^{T-1}(1-s_1(i))\right) \right]\leq\varphi.
	\end{align}
	Moreover, if $\delta_1 < \delta_2$, then for an arbitrary $j$, we have
	\begin{align}\label{eq: lemma1conv}
		\lim_{T \to \infty} \sum_{t=j}^{T-1} \left[s_2(t)\left(\prod _{i=t+1}^{T-1}(1-s_1(i))\right) \right]=0.
	\end{align}
\end{lem}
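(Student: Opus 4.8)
The plan is to prove both assertions by directly estimating the product $\prod_{i=t+1}^{T-1}(1-s_1(i))$ and then bounding the resulting sum. The crucial structural fact is that $s_1(i) = a_1/(i+1)^{\delta_1}$ is eventually small (once $i$ is large enough that $a_1/(i+1)^{\delta_1} < 1$), so each factor lies in $(0,1)$ and the partial products decay. I would first take logarithms to convert the product into a sum: using the elementary inequality $\log(1-u) \leq -u$ for $u \in [0,1)$, I obtain
\begin{equation*}
	\prod_{i=t+1}^{T-1}(1-s_1(i)) \leq \exp\!\left(-\sum_{i=t+1}^{T-1} s_1(i)\right) = \exp\!\left(-a_1\sum_{i=t+1}^{T-1}\frac{1}{(i+1)^{\delta_1}}\right).
\end{equation*}
The inner sum is then controlled by an integral comparison: for $0 \leq \delta_1 \leq 1$, $\sum_{i=t+1}^{T-1}(i+1)^{-\delta_1} \geq \int_{t+1}^{T}(x+1)^{-\delta_1}\,dx$, which grows like $\frac{(T+1)^{1-\delta_1} - (t+2)^{1-\delta_1}}{1-\delta_1}$ when $\delta_1 < 1$ and like $\log\frac{T+1}{t+2}$ when $\delta_1 = 1$. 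This gives a clean polynomial (or logarithmic) tail bound on the product that I can feed into the subsequent summation.

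For the first claim ($\delta_1 = \delta_2$), I would combine the product bound with $s_2(t) = a_2/(t+1)^{\delta_1}$ and show that the summand behaves, up to constants, like $(t+1)^{-\delta_1}$ times a factor that decays geometrically in the "distance to $T$" once $t$ is bounded away from $T$. The cleanest route is to split the tail into blocks or to recognize the sum as a discrete analogue of a convergent convolution; concretely, for $\delta_1 < 1$ the product bound $\exp(-\frac{a_1}{1-\delta_1}[(T+1)^{1-\delta_1}-(t+2)^{1-\delta_1}])$ makes the summand exponentially small except near $t = T-1$, so the whole sum is bounded by a constant $\varphi$ uniform in $T$. For $\delta_1 = 1$ the product is bounded by $((t+2)/(T+1))^{a_1}$, and the sum $\sum_t (t+1)^{-1}((t+2)/(T+1))^{a_1}$ is again uniformly bounded (this is where the hypothesis that $j$ is "large enough" enters, guaranteeing $a_1/(i+1) < 1$ so the logarithm inequality is valid throughout the range). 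The nonnegativity of the sum is immediate since every factor is positive once $i \geq j$.

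For the second claim ($\delta_1 < \delta_2$), the extra decay in $s_2(t) = a_2/(t+1)^{\delta_2}$ relative to the product's balancing rate forces the sum to vanish as $T \to \infty$. I would argue by the same product estimate: the summand is bounded by $\frac{a_2}{(t+1)^{\delta_2}}\exp(-a_1\sum_{i=t+1}^{T-1}(i+1)^{-\delta_1})$, and I would split the sum at an intermediate index $K$ growing slowly with $T$ (e.g.\ $K = \lfloor T/2\rfloor$). On the lower block $t < K$ the product factor is uniformly exponentially small because the exponent accumulates $\Omega(T^{1-\delta_1})$ (or $\Omega(\log T)$) terms; on the upper block $t \geq K$ the prefactor $(t+1)^{-\delta_2}$ is itself $o(1)$ while the block sum of the product part stays bounded by the $\delta_1=\delta_2$ analysis. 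Letting $T \to \infty$ sends both blocks to zero.

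The main obstacle I anticipate is making the block-splitting argument for the second claim fully rigorous and uniform: I must choose the split point $K$ so that both blocks simultaneously vanish, and verify that the integral-comparison lower bound on $\sum_{i=t+1}^{T-1}(i+1)^{-\delta_1}$ is strong enough to kill the lower block while the decay of $s_2$ handles the upper block. Since this is cited as Lemma 25 of \cite{6172233}, the cleanest presentation is to invoke the reference directly rather than reproduce these estimates, but the above is the self-contained route I would take if a full argument were required.
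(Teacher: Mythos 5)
You should first note that the paper does not actually prove this lemma: it is imported verbatim as Lemma 25 of \cite{6172233}, and the paper's ``proof'' is a pointer to Appendix C of that reference. So any self-contained argument is by construction a different route, and yours is the natural one. Your key estimates check out: $\log(1-u)\le -u$ (valid on the whole range precisely because $j$ is large enough that $s_1(i)<1$, which also gives the nonnegativity claim) turns the product into $\exp\bigl(-a_1\sum_{i=t+1}^{T-1}(i+1)^{-\delta_1}\bigr)$; for $\delta_1=\delta_2<1$ the substitution $u=(t+2)^{1-\delta_1}$ reduces the sum to a discretization of $\int e^{-\frac{a_1}{1-\delta_1}(T^{1-\delta_1}-u)}\,\frac{du}{1-\delta_1}\le \frac{1}{a_1}$, and for $\delta_1=\delta_2=1$ your bound $\sum_t (t+1)^{-1}\bigl((t+2)/(T+1)\bigr)^{a_1}=O(1/a_1)$ is correct. (All of this implicitly reads the hypothesis as $a_1>0$; with $a_1=0$ and $\delta_1=\delta_2=1$ the sum is harmonic and unbounded, an edge case the lemma's stated ``$a_1\ge 0$'' inherits from the original, not a defect of your argument.) Your upper-block step in the second claim is also sound: writing $s_2(t)=a_2(t+1)^{-(\delta_2-\delta_1)}(t+1)^{-\delta_1}$, the prefactor is $o(1)$ on $t\ge K$ while the remaining sum is bounded by the first claim applied with exponent $\delta_1$.

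The one concrete repair concerns your example split point: $K=\lfloor T/2\rfloor$ fails when $\delta_1=1$, because $\sum_{i=K}^{T-1}(i+1)^{-1}\to\log 2$, a constant, so the lower-block product is only bounded by $2^{-a_1}$, not $o(1)$ --- your parenthetical ``$\Omega(\log T)$'' does not hold for that choice. Two clean fixes: take $K=\lfloor T^{\alpha}\rfloor$ with $\alpha\in(0,1)$, which makes the lower-block product $O(T^{-(1-\alpha)a_1})$ while the upper-block prefactor is $O(T^{-\alpha(\delta_2-\delta_1)})$, both vanishing simultaneously; or, simpler, observe that $\delta_1=1<\delta_2$ forces $\sum_t s_2(t)<\infty$, so dominated convergence finishes that case outright (each product tends to $0$ for fixed $t$ because $\sum_i s_1(i)=\infty$), leaving block-splitting needed only for $\delta_1<1$, where $K=\lfloor T/2\rfloor$ does work since the exponent accumulates $\Omega(T^{1-\delta_1})$. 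You explicitly flagged the choice of $K$ as the anticipated obstacle, so this is a refinement of your plan rather than a missing idea; with it, your sketch is a complete and correct elementary proof, which is arguably more useful to a reader than the paper's bare citation.
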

\begin{proof}
	Please refer to \cite[Appendix C]{6172233}.
\end{proof}

We carry out numerical experiments to illustrate the convergence of partial sums with different $(\delta_1,\delta_2)$ pairs, as presented in \textbf{Lemma \ref{lem: sequence}}. The value of sequences against the number of communication rounds in the log-scale is demonstrated in Fig. \ref{fig: sequence} with $(\delta_1,\delta_2)\in\{(1,0),(1,1),(1,2),(1,3)\}$.
\begin{figure}[tbp]
	\centering
	\includegraphics[width=1\linewidth]{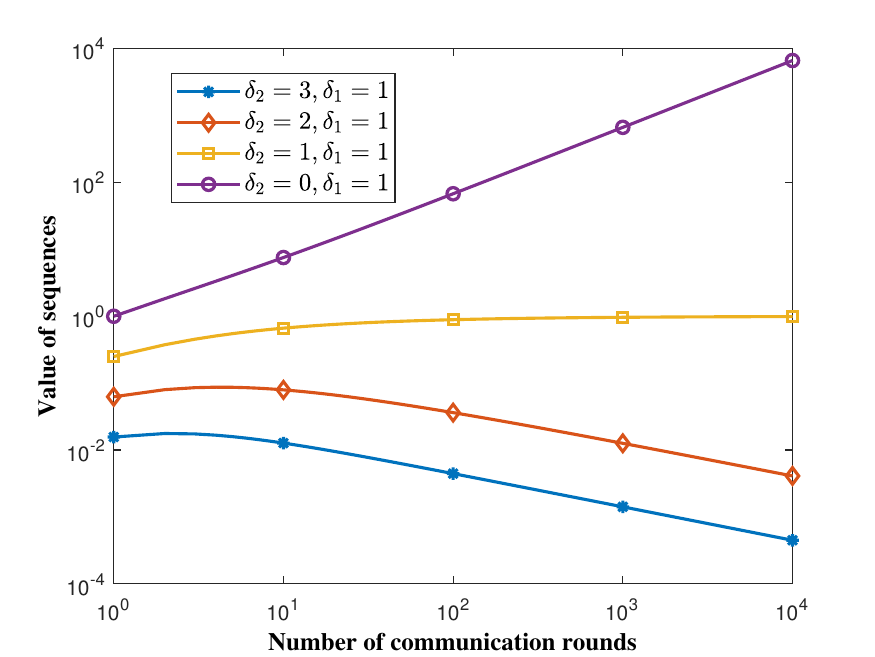}
	\caption{Values of the sequences versus number of communication rounds with different $(\delta_1,\delta_2)$ pairs in \textbf{Lemma \ref{lem: sequence}}.}
	\label{fig: sequence}
	\vspace{-0.5cm}
\end{figure}
Specifically, we set the learning rate as
\begin{equation*}
	\eta^{t} = \frac{\eta^{0}}{ t+ 1},~\eta^{0} = \min\left\{\frac{1}{L\sqrt{2E(E-1)(2\beta_1+1)}},\frac{1}{2LE}\right\},
\end{equation*}
for further analysis, which satisfies condition \eqref{eq: learning rate} in Section \ref{sec: background}.

Term (a) in \textbf{Theorem \ref{thm: case3unbiasedconvex}} can be reformulated as
\begin{equation*}
	C_1 \sum_{t=0}^{T-1} \left[\left(\frac{\eta^{0}}{ t+ 1}\right)^3 \prod _{i=t+1}^{T-1}\left(1-\frac{\mu E\eta^{0}/2}{ i+ 1}\right)\right],
\end{equation*}
which is equivalent to \eqref{eq: lemma1conv} with $j = 0$, $a_1 =\frac{\mu E  \eta^0}{2}$, $\delta_1 = 1$, $a_2 =  (\eta^0)^3$ and $\delta_2 = 3$ in \textbf{Lemma \ref{lem: sequence}}. This indicates that term (a) of \eqref{eq: case3unbiasedconvex} in \textbf{Theorem \ref{thm: case3unbiasedconvex}} is diminishing as $T\rightarrow \infty$ because $\delta_1<\delta_2$. The convergent property is illustrated in Fig. \ref{fig: sequence} with $(\delta_1,\delta_2)=(1,3)$. 

Similarly,  the diminishing property of term (b) of \eqref{eq: case3unbiasedconvex} in \textbf{Theorem \ref{thm: case3unbiasedconvex}} can be verified by setting $j = 0$, $a_1 =\frac{\mu E  \eta^0}{2}$, $\delta_1 = 1$, $a_2 =  (\eta^0)^2$, and $\delta_2 = 2$ in \textbf{Lemma \ref{lem: sequence}}. The diminishing property is also illustrated in Fig. \ref{fig: sequence} with $(\delta_1,\delta_2)=(1,2)$.

However, to analyze the convergence property of term (c), it hinges on whether MSE satisfies the conditions in \textbf{Lemma \ref{lem: sequence}}. Therefore, it is required to provide elaborate classification with respect to the MSE for further discussion.

\subsubsection{Case Study}\label{sec: casestudy1}
According to \eqref{eq: MSE}, the MSE is determined by the noise power and denoising factor. To discuss the influence of the MSE on term (c), we consider the following cases.
\begin{itemize}
	\item \textbf{The MSE approaches to zero in each communication round.} In this case, term (c) in \eqref{eq: case3unbiasedconvex} is eliminated, which means \airfedavgm approximates to error-free \fedavg. In addition, for asymptotic analysis, if ${\sf{MSE}}^t\rightarrow0$, we must have $\sigma_{w}^2\rightarrow 0$ or $\beta^t\rightarrow\infty$, which demands an infinite large SNR. However, neither of the two conditions is practical in real AirComp systems since we cannot control the noise power and the transmit power is limited.
	\item \textbf{The MSE is a positive constant in each communication round.} In this case, we have 
	\begin{equation*}
		{\sf{MSE}}^t\sim (\eta^t)^0=1,
	\end{equation*}
	which means that for the strongly convex case, term (c) is non-diminishing as presented in Fig. \ref{fig: sequence} with $(\delta_1,\delta_2)=(1,0)$.
%	, and for the non-convex case, the divergence of term (c) is ensured by \textbf{Lemma \ref{lem: stolz}}.
	\item \textbf{The MSE is proportional to the diminishing learning rate.} For strongly convex case, term (c) converges if and only if the following condition holds
	\begin{equation}\label{eq: MSE condition}
		{\sf{MSE}}^t\sim (\eta^t)^\delta = \left(\frac{\eta^0}{t+1}\right)^\delta,~\delta>1,
	\end{equation}
	without requiring an infinite large SNR. This condition can be directly derived from \textbf{Lemma \ref{lem: sequence}}.
\end{itemize}

In summary, we are unable to obtain a convergent upper bound for strongly convex case of \airfedavgm without properly designing the AirComp system to make the MSE caused by model aggregation satisfy condition \eqref{eq: MSE condition}. Hence, the impact of the MAE caused by the wireless channel needs to be carefully studied.

According to \eqref{eq: MSE}, to achieve convergence guarantee for \airfedavgm, one possible approach is to jointly design the transmit scalar and the denoising factor $\sqrt{\beta^t}$. 
As inspired, a lot of recent efforts applied the precoding and denoising techniques to ensure the decaying property of term (c) and further provided an upper bound for the convergence optimality gap \cite{COTAF}. In this tutorial, we verify the observations by adopting an exemplifying denoising factor design and deriving some insightful conclusions regarding the convergence rate and optimality gap, which are presented as follows.

According to the convergence analysis on the sequences of the upper bound, the convergence of the upper bound can be guaranteed and the impact of the additive noise can be compensated as long as the denoising factor satisfies the following condition:
\begin{align}\label{eq: precoder condition}
	(\beta^t)^{-1}  \sim (\eta^t)^{\delta},~ \delta>1,
\end{align}
which can be directly derived by combining \eqref{eq: MSE} and \eqref{eq: MSE condition}.
%In addition, to design transmit scalar as well as the denoising factor, 
The transmit power constraint in communication round $t$ for edge device $n$ is given by
\begin{align*}
	\mathbb{E}\left[\left\| \alpha_n^t p_n \boldsymbol z_n^t\right\|_2^2\right]\leq d\times P_0,
\end{align*}
where $P_0$ is the transmit power, and the SNR can be written as
\begin{equation}\label{SNR}
	{\sf SNR} := \frac{P_0}{\sigma_w^2}.
\end{equation}
Based on the transmit power constraint, we adopt a denoising factor that takes the norm of local information into account to mitigate the detrimental influence caused by the additive noise, i.e.,
\begin{align}\label{eq: precoding}
	\beta^t  := \min_{n\in \mathcal{N}}\frac{|h_n^t|^2dP_0}{ \|p_n\bm z_n^{t}\|_2^2}.
\end{align}
Under this setup, the information recovered by the edge server is an unbiased estimation of the transmit information. However, this setting may be suboptimal especially in deep fading scenarios \cite{xiaowen2021optimized}.
Since $\bm z_n^{t}=-\eta^{t}\sum_{e=0}^{E-1}\bm g_{n}^{(t,e)}$, the denoising factor can be further written as
\begin{align*}%\label{eq: precoding1}
	(\beta^t)^{-1}  = (\eta^t)^2\max_{n\in \mathcal{N}}\frac{\left\|p_n\sum_{e=0}^{E-1}\bm g_{n}^{(t,e)}\right\|_2^2}{ |h_n^t|^2dP_0}\sim (\eta^t)^{2}.
\end{align*}
Condition \eqref{eq: precoder condition} can be proved to hold according to \cite[\textbf{Lemma A.2}]{COTAF} with the BG assumption, which is presented as follows.
\begin{ass}[Bounded Gradient (BG)]\label{ass: BG}
		The expected squared norm of stochastic gradient of each edge device is bounded by
	\begin{align*}%\label{eq: common bound}
					\mathbb{E}[\|\bm{g}_n^{(t,e)}\|_2^2]\leq G^2,~\forall n,
		\end{align*}
where $G>0$ is a constant.
\end{ass}
The denoising factor can be further specified as
\begin{equation}\label{eq: betabound}
	(\beta^t)^{-1} = \frac{\tilde{G}^2E(\eta^t)^2}{dN^2P_0},
\end{equation}
by \textbf{Assumption \ref{ass: BG}}, where $\tilde{G}^2 = G^2\max_{n\in \mathcal{N}}\frac{v_n^2}{|h_n^t|^2}$ with $v_n = Np_n$, and $\sum_{n=1}^{N}v_n = N$. This indicates that \textbf{Assumption \ref{ass: BG}} is not necessary in error-free transmission scenarios, but usually needed in wireless FL scenarios \cite{COTAF,xiaowen2021optimized,xiaowen2021transmission,fan2021temporal,jing2022Federated}. By substituting \eqref{eq: betabound} into \eqref{eq: MSE}, the MSE of model aggregation can be written as
\begin{equation*}
	{\sf{MSE}}^t = \frac{\sigma_{w}^2\tilde{G}^2E(\eta^t)^2}{dN^2P_0} = \frac{\tilde{G}^2E(\eta^t)^2}{dN^2{\sf SNR}},
\end{equation*}
where the MSE is inversely proportional to SNR.

This transmission scheme leverages the decaying property of the accumulated local gradients to eliminate the impact of receiver noise as the training process proceeds. 
%Besides
Similarly, the authors in \cite{wei2021federated} proposed an SNR control policy, which requires that the noise power decays with respect to communication rounds, i.e., 
\begin{align*}
	(\sigma_{w}^t)^2 \sim \mathcal{O}\left(\frac{1}{t^2}\right ).
\end{align*}
This scheme coincides with our statement because decaying the power of noise $\tilde{\boldsymbol{w}}^t$ with fixed denoising factor is equivalent to setting $\beta^t$ as in \eqref{eq: precoder condition}.

Subsequently, we have the following corollary for the strongly convex case to derive the diminishing optimality gap and convergence rate for \airfedavgm.

\begin{cor}[Optimality Gap of \airfedavgm under Strong Convexity with Learning Rate Decay]\label{cor: case3convex}
	If the learning rate is set as $\eta^{t} = \frac{6}{E\mu(\tau + t)}$, with $\tau  = \frac{3L}{\mu}$, the optimality gap $\mathbb{E}\left[ F(\hat{\bm \theta}^{T}) \right]  - F^{\star}$ of the strongly convex case of \airfedavgm converges to zero with rate
	\begin{align}\label{eq: optimalitycase3}
		\mathbb{E}\left[ F(\hat{\bm \theta}^{T}) \right]   - F^{\star}\leq	\mathcal{O}\left( \frac{A_1}{E^2\mu^3T^2}\right)+\mathcal{O}\left( \frac{B_1}{NE\mu^2T}\right),
	\end{align}
	where $A_1 = L^2(E-1)(2+1/\beta_1)[\bar{\sigma}^2+E\beta_2]$, $B_1 = L\Sigma+L/dN{\sf SNR}$, $\Sigma = N\sigma^2$, $\bar{\sigma}^2 = \sum_{n=1}^{N}p_n\sigma_{n}^2$, and $\sigma^2 = \sum_{n=1}^{N}p_n^2\sigma_{n}^2$. 

	In order to achieve bound \eqref{eq: optimalitycase3}, the number of local updates is bounded by
	\begin{equation}
		E\leq \frac{6(2\beta_1+1)\bar{\sigma}^2+12\beta_1(\sigma^2+\tilde{G}^2/dN^2{\sf SNR})}{\beta_1\mu (F(\hat{\bm \theta}^{0}) - F^{\star})-12\beta_2(2\beta_1+1)}.
	\end{equation}
\end{cor}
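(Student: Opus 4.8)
The plan is to specialize the general bound \eqref{eq: case3unbiasedconvex} of \textbf{Theorem \ref{thm: case3unbiasedconvex}} to the diminishing schedule $\eta^t=\frac{6}{E\mu(\tau+t)}$ with $\tau=\frac{3L}{\mu}$, and then to evaluate the three weighted partial sums $(a),(b),(c)$ \emph{explicitly} rather than only qualitatively through \textbf{Lemma \ref{lem: sequence}}. The crucial observation is that this schedule renders the per-round contraction factor an integer multiple of $\frac{1}{\tau+t}$, since $\frac{\mu\eta^t E}{2}=\frac{3}{\tau+t}$. Hence $M^t=\prod_{i=t}^{T-1}(1-\frac{3}{\tau+i})=\prod_{i=t}^{T-1}\frac{\tau+i-3}{\tau+i}$ telescopes to $M^t=\frac{(\tau+t-1)(\tau+t-2)(\tau+t-3)}{(\tau+T-1)(\tau+T-2)(\tau+T-3)}\lesssim\frac{(\tau+t)^3}{(\tau+T)^3}$. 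I would first record this closed form, together with the case-study MSE ${\sf MSE}^t=\frac{\tilde{G}^2E(\eta^t)^2}{dN^2{\sf SNR}}$, so that term $(c)$ is converted into yet another $(\eta^t)^2$-type sum sharing the same structure as term $(b)$.

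Then I would bound each sum by inserting $M^{t+1}\lesssim(\tau+t)^3/(\tau+T)^3$. For $(a)$, the product $(\eta^t)^3M^{t+1}\sim(\tau+T)^{-3}$ is essentially constant in $t$, so summing over $T$ rounds yields $\mathcal{O}(1/(E^3\mu^3T^2))$; multiplying by $C_1$, which carries the factor $E(E-1)(2\beta_1+1)[\bar{\sigma}^2+E\beta_2]/\beta_1$, produces the $\mathcal{O}(A_1/(E^2\mu^3T^2))$ contribution. For $(b)$ and $(c)$ the product $(\eta^t)^2M^{t+1}\sim(\tau+t)/(\tau+T)^3$ sums like $T^2/(\tau+T)^3=\mathcal{O}(1/T)$; multiplying by $C_2=LE\sigma^2$ and by $C_3\tilde{G}^2E/(dN^2{\sf SNR})$ respectively, and using $\Sigma=N\sigma^2$, collapses them into the single $\mathcal{O}(B_1/(NE\mu^2T))$ term with $B_1=L\Sigma+L/(dN{\sf SNR})$. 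The leading term $[F(\hat{\bm\theta}^0)-F^{\star}]M^0\sim\tau^3/(\tau+T)^3=\mathcal{O}(1/T^3)$ is of higher order and absorbed, which delivers \eqref{eq: optimalitycase3}.

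For the admissible range of $E$, I would require that the local-update drift, i.e., the portion of term $(a)$ produced by multiple local updates and heterogeneity $\beta_2$ and which grows with $E$, remain dominated by the per-round descent $\frac{\mu\eta^tE}{2}[F(\hat{\bm\theta}^t)-F^{\star}]$ afforded by strong convexity. Balancing the variance, drift, and MSE constants $\bar{\sigma}^2,\sigma^2,\tilde{G}^2/(dN^2{\sf SNR})$ against $\mu[F(\hat{\bm\theta}^0)-F^{\star}]$ at the initial round, and observing that the drift saturates at a value proportional to $\beta_2(2\beta_1+1)/\beta_1$, yields the stated upper bound on $E$. This bound is meaningful precisely when its denominator $\beta_1\mu[F(\hat{\bm\theta}^0)-F^{\star}]-12\beta_2(2\beta_1+1)$ is positive, i.e., when the data heterogeneity is small relative to the initial optimality gap.

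I expect the main obstacle to be the quantitative summation of the second paragraph: \textbf{Lemma \ref{lem: sequence}} only certifies that $(a),(b),(c)$ vanish or stay bounded, not their precise $\mathcal{O}(1/T^2)$ versus $\mathcal{O}(1/T)$ orders, so the explicit telescoping of $M^t$ and the estimates $\sum_t(\tau+t)^k(\tau+T)^{-3}$ must be carried out by hand while simultaneously tracking how $C_1,C_2,C_3$ and the explicit MSE reshuffle into $A_1$ and $B_1$. A secondary subtlety is checking that the constants $6$ and $\tau=3L/\mu$ are chosen exactly so that the contraction factor equals the integer $3$, keeping the telescoping in closed form, and that the resulting schedule remains admissible for the per-round descent lemma underlying \textbf{Theorem \ref{thm: case3unbiasedconvex}}.
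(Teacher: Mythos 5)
Your derivation of the rate is correct, but it takes a genuinely different route from the paper. You unroll \textbf{Theorem \ref{thm: case3unbiasedconvex}} and evaluate the weighted sums (a)--(c) explicitly, exploiting that $\frac{\mu\eta^t E}{2}=\frac{3}{\tau+t}$ makes the product telescope, $M^t=\prod_{i=t}^{T-1}\bigl(1-\tfrac{3}{\tau+i}\bigr)=\frac{(\tau+t-1)(\tau+t-2)(\tau+t-3)}{(\tau+T-1)(\tau+T-2)(\tau+T-3)}$, and then estimating $\sum_t(\eta^t)^k M^{t+1}$ by hand; your bookkeeping of how $C_1,C_2,C_3$ and ${\sf MSE}^t\sim(\eta^t)^2$ reshuffle into $A_1$ and $B_1$ is right, as is the observation that the initial-gap term $[F(\hat{\bm\theta}^0)-F^\star]M^0=\mathcal{O}(\tau^3/T^3)$ is higher order. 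The paper instead works with the one-step recursion $\boldsymbol{\Delta}^{t+1}\leq\bigl(1-\tfrac{\mu\eta^t E}{2}\bigr)\boldsymbol{\Delta}^t+(\eta^t)^3A+(\eta^t)^2B$ (folding the MSE into the $(\eta^t)^2$ coefficient $B$, exactly as you do) and proves by induction the ansatz $\boldsymbol{\Delta}^t\leq(\eta^t)^2\gamma A+\eta^t\gamma B$ with $\gamma=\frac{6}{E\mu}$, the inductive step reducing to the elementary inequalities $(t+\tau-2)(t+\tau-1)\leq(t+\tau)^2$ and $(t+\tau+1)^2(t+\tau-2)\leq(t+\tau)^3$. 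The induction is shorter and avoids all summation estimates; your route is more labor but makes every term's order visible, including the initial-gap term that the paper's ansatz silently absorbs.

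The one place where the two routes genuinely diverge is the bound on $E$, and here your reconstruction is heuristic where the paper's is mechanical. In your direct-summation framework, \eqref{eq: optimalitycase3} holds for any $E$ compatible with the step-size cap --- no constraint on $E$ falls out of the sums. In the paper, the constraint is precisely the \emph{base case} of the induction: requiring $\boldsymbol{\Delta}^0\leq\frac{216A}{E^3\mu^3\tau^2}+\frac{36B}{E^2\mu^2\tau}$ at $\tau=\frac{3L}{\mu}$ and simplifying (with $(E-1)/E\leq1$) gives exactly the stated inequality on $E$, with positivity of its denominator being the heterogeneity condition you named. Your balancing argument lands on the right inequality and the right mechanism, but to make it rigorous you would have to either adopt the inductive ansatz or impose that base-case inequality explicitly; as written, "the drift saturates at a value proportional to $\beta_2(2\beta_1+1)/\beta_1$" is an assertion, not a derivation. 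Separately, your admissibility check is well founded and worth flagging: with $\tau=\frac{3L}{\mu}$ one gets $\eta^0=\frac{2}{EL}$, which exceeds the cap $\frac{1}{2LE}$ of \textbf{Theorem \ref{thm: case3unbiasedconvex}}; the paper does not remark on this, and a strictly compatible proof needs a larger $\tau$, which changes only constants.
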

\begin{proof}
	Please refer to Appendix \ref{sec: multiple}.
\end{proof}

\begin{Rem}
	The right hand side of \eqref{eq: optimalitycase3} is dominated by $\mathcal{O}((NET)^{-1})$, and the additional error decays faster with rate $\mathcal{O}((ET)^{-2})$. It is obvious that \airfedavgm in the strongly convex case with proper denoising factor achieves linear speed up in terms of the number of local updates and the number of edge devices.
\end{Rem}
%Suppose the system design with convergence guarantees refers to the scenario where the denoising factor is specifically designed as 

\subsection{Non-Convex Case}
\subsubsection{Main Results and Analysis}
We derive \textbf{Theorem \ref{thm: case3unbiasednonconvex}} for the error bound of the non-convex objectives.

\begin{thm}[Convergence of \airfedavgm under Non-convexity with Learning Rate Decay]\label{thm: case3unbiasednonconvex}
	Let \textbf{Assumptions \ref{ass: smooth}}, \textbf{\ref{ass: gradient variance}}, and \textbf{\ref{ass: bounded gradient dissimilarity}} hold. If the diminishing learning rate satisfies $0<\eta^t\leq \min\left\{\frac{1}{L\sqrt{2E(E-1)(2\beta_1+1)}},\frac{1}{2LE}\right\}$, then the weighted average norm of global gradients after $T$ communication rounds is upper bounded by
	\begin{align}\label{eq: case3unbiasednonconvex}\nonumber
		&\frac{1}{\Phi}\sum\limits_{t = 0}^{T-1} \eta^t\mathbb{E} \left[\left \|\nabla F(\hat{\bm \theta}^{t})\right \|_2^2\right]\\\nonumber
		\leq& \frac{4\left[F(\hat{\bm \theta}^{0}) - F^{\inf}\right] }{E\Phi} 
		+ \underbrace{\frac{4 C_1}{E\Phi}\sum_{t = 0}^{T-1} (\eta^t)^3}_{(a)}
		+ \underbrace{\frac{4 C_2 }{E\Phi}\sum_{t = 0}^{T-1} (\eta^t)^2}_{(b)}\\
		+ & \underbrace{\frac{ 4 C_3}{E\Phi} \sum_{t = 0}^{T-1} {\sf{MSE}}^t}_{(c)}, 
	\end{align}
	where $\Phi=\sum
	\limits_{j = 0}^{T-1} \eta^j$.
\end{thm}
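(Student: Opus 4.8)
The plan is to establish a one-step descent inequality for the noisy global iterate $\hat{\bm\theta}^t$ and then telescope it over the $T$ communication rounds, following the same mechanics as the proof of \textbf{Theorem \ref{thm: case3unbiasedconvex}} but replacing the strong-convexity contraction with a direct summation. Combining the global update \eqref{eq: AirFedAvg-Multiple} with \eqref{eq: AirFL} and \eqref{eq: cumulative gradient} gives $\hat{\bm\theta}^{t+1}-\hat{\bm\theta}^t = -\eta^t\sum_{n=1}^N p_n\sum_{e=0}^{E-1}\bm g_n^{(t,e)} + \tilde{\bm w}^t$, so I would start from $L$-smoothness (\textbf{Assumption \ref{ass: smooth}}) applied to $\hat{\bm\theta}^{t+1},\hat{\bm\theta}^t$. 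Taking conditional expectation and using that $\tilde{\bm w}^t$ is zero-mean and independent of the gradients annihilates the linear noise term, while the quadratic term separates cleanly into $\tfrac{L}{2}(\eta^t)^2\,\mathbb{E}\big[\|\sum_n p_n\sum_e \bm g_n^{(t,e)}\|_2^2\big]$ plus $\tfrac{L}{2}\mathbb{E}[\|\tilde{\bm w}^t\|_2^2]$; by \eqref{eq: MSE} the latter is exactly $C_3\,{\sf{MSE}}^t$ with $C_3=L/2$.

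For the inner-product term I would invoke unbiasedness (\textbf{Assumption \ref{ass: gradient variance}}), $\mathbb{E}[\bm g_n^{(t,e)}]=\nabla F_n(\bm\theta_n^{(t,e)})$, and then add and subtract $\nabla F_n(\hat{\bm\theta}^t)$ so that the aggregated drifted gradient becomes $E\,\nabla F(\hat{\bm\theta}^t)$ plus a perturbation $\sum_n p_n\sum_e\big(\nabla F_n(\bm\theta_n^{(t,e)})-\nabla F_n(\hat{\bm\theta}^t)\big)$. Young's inequality on the perturbation extracts a clean $-\tfrac{\eta^t E}{2}\|\nabla F(\hat{\bm\theta}^t)\|_2^2$ descent term while leaving a residual that, through $L$-smoothness, is controlled by the client drift $\mathbb{E}[\|\bm\theta_n^{(t,e)}-\hat{\bm\theta}^t\|_2^2]$.

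The technical crux is bounding this client drift. Writing $\bm\theta_n^{(t,e)}-\hat{\bm\theta}^t=-\eta^t\sum_{k=0}^{e-1}\bm g_n^{(t,k)}$, I would split variance from mean: the independent per-step mini-batch noise contributes a term of order $(\eta^t)^2 E\sigma_n^2$, while the mean part, after \textbf{Assumption \ref{ass: bounded gradient dissimilarity}} and Jensen's inequality, yields a term proportional to $(\eta^t)^2 E^2\big[\beta_2+\beta_1\|\nabla F(\hat{\bm\theta}^t)\|_2^2\big]$. Feeding these back, the per-device variance $\sum_n p_n\sigma_n^2$ and the dissimilarity constant $\beta_2$ assemble into the $C_1(\eta^t)^3$ term, whereas the variance of the aggregated gradient, $\sum_n p_n^2\sigma_n^2$, produces the $C_2(\eta^t)^2$ term. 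The remaining $\|\nabla F(\hat{\bm\theta}^t)\|_2^2$ contributions, arising both from the drift residual and from the mean of $\|\sum_n p_n\sum_e \bm g_n^{(t,e)}\|_2^2$, carry a factor of order $L^2 E(E-1)(2\beta_1+1)(\eta^t)^2$; the stated two-part cap $\eta^t\le\min\{\tfrac{1}{L\sqrt{2E(E-1)(2\beta_1+1)}},\tfrac{1}{2LE}\}$ is precisely what forces these residual coefficients below $\tfrac{\eta^t E}{4}$, so they can be absorbed, leaving a net descent coefficient $-\tfrac{\eta^t E}{4}$.

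Finally I would telescope the resulting one-step bound $\mathbb{E}[F(\hat{\bm\theta}^{t+1})]-\mathbb{E}[F(\hat{\bm\theta}^t)]\le -\tfrac{\eta^t E}{4}\mathbb{E}[\|\nabla F(\hat{\bm\theta}^t)\|_2^2]+C_1(\eta^t)^3+C_2(\eta^t)^2+C_3\,{\sf{MSE}}^t$ over $t=0,\dots,T-1$, lower-bound $\mathbb{E}[F(\hat{\bm\theta}^T)]$ by $F^{\inf}$, and divide through by $\tfrac{E}{4}\Phi$ with $\Phi=\sum_{j=0}^{T-1}\eta^j$ to obtain \eqref{eq: case3unbiasednonconvex}. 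I expect the drift bookkeeping to be the main obstacle: tracking the exact $(E-1)$ and $\beta_1$ dependence so the residual gradient coefficients line up with the learning-rate cap requires care, whereas the telescoping step is routine once the one-step inequality is secured and in fact reuses the same descent lemma underlying \textbf{Theorem \ref{thm: case3unbiasedconvex}}.
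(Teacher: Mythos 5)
Your overall architecture matches the paper's proof in Appendix \ref{sec: case3unbiased}: an $L$-smoothness one-step descent, zero-mean noise annihilating the linear and cross terms and leaving $C_3\,{\sf{MSE}}^t$, a client-drift bound feeding \textbf{Assumption \ref{ass: bounded gradient dissimilarity}}, and telescoping against $F^{\inf}$ followed by division by $\tfrac{E}{4}\Phi$. The genuine gap is in your treatment of the inner product. You apply Young's inequality to the perturbation, which yields $-\tfrac{\eta^t E}{2}\|\nabla F(\hat{\bm\theta}^t)\|_2^2+\tfrac{\eta^t E}{2}\,\mathbb{E}[\|\mathrm{pert}\|_2^2]$ but discards a term the paper keeps: the exact identity $2\langle \bm a,\bm b\rangle=\|\bm a\|_2^2+\|\bm b\|_2^2-\|\bm a-\bm b\|_2^2$ used in \eqref{eq: unbiased} retains the negative $-\tfrac{\eta^t E}{2}\mathbb{E}\big[\|\sum_n p_n\bm h_n^t\|_2^2\big]$. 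That term is essential: the smoothness quadratic, after \textbf{Lemma \ref{lem: Lemma 1}}, produces $+L(\eta^t)^2E^2\,\mathbb{E}\big[\|\sum_n p_n\bm h_n^t\|_2^2\big]$, and the paper cancels it exactly through the factor $\tfrac{\eta^t E}{2}(2\eta^t EL-1)\le 0$, which is precisely where the cap $\eta^t\le\tfrac{1}{2LE}$ enters in \eqref{eq: unbiased1}. Having discarded the negative term, you would instead have to bound $\mathbb{E}\big[\|\sum_n p_n\bm h_n^t\|_2^2\big]\le 2\|\nabla F(\hat{\bm\theta}^t)\|_2^2+2\,\mathbb{E}[\|\mathrm{pert}\|_2^2]$, and the resulting gradient coefficient $2L(\eta^t)^2E^2=(2L\eta^tE)\,\eta^tE$ is only bounded by $\eta^tE$ under the stated cap --- the same size as the entire descent term, not ``of order $L^2E(E-1)(2\beta_1+1)(\eta^t)^2$'' as your bookkeeping asserts --- and no choice of Young weights recovers a net $-\tfrac{\eta^tE}{4}$ coefficient at $\eta^t=\tfrac{1}{2LE}$. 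Your route would prove the theorem only under a strictly tighter cap (roughly $\eta^t\lesssim\tfrac{1}{8LE}$); to obtain the stated constants you must keep the paper's exact three-term expansion of the cross term.

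A secondary, smaller issue: in the drift bound you invoke \textbf{Assumption \ref{ass: bounded gradient dissimilarity}} on the mean part of $-\eta^t\sum_{k<e}\bm g_n^{(t,k)}$ directly, but those gradients sit at the drifted iterates $\hat{\bm\theta}_n^{(t,k)}$, whereas BGD is applied at the common point $\hat{\bm\theta}_n^{(t,0)}$. The paper handles this with a second add-and-subtract inside the drift, making the drift appear on both sides of the inequality; it is then solved as a self-bounding recursion through $H^t=2L^2(\eta^t)^2E(E-1)<1$ in \eqref{eq: T2}, and the condition $H^t\le\tfrac{1}{2\beta_1+1}$ (your first learning-rate cap) is what halves the descent coefficient via $1-\tfrac{H^t\beta_1}{1-H^t}\ge\tfrac12$. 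Your sketch names the right orders here, but this recursion is exactly where the $(E-1)$ and $\beta_1$ bookkeeping you flag as the ``main obstacle'' is actually resolved.
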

\begin{proof}
	Please refer to Appendix \ref{sec: case3unbiased}.
\end{proof}
Similarly, to analyze the convergence of each sequence in \textbf{Theorem \ref{thm: case3unbiasednonconvex}}, we introduce a key lemma to support our theoretical analysis for the non-convex case. 

\begin{lem}[Stolz–Ces$\grave{a}$ro Theorem]\label{lem: stolz}
	Let $\{a_n\}_{n\geq1}$ and $\{b_n\}_{n\geq1}$ be two sequences of real numbers. Assume that $\{b_n\}_{n\geq1}$ is a strictly monotone and divergent sequence, and the following limit exists:
	$$\lim_{n \to \infty}\frac{a_{n+1}-a_n}{b_{n+1}-b_n} = \ell.$$
	Then we have the following limit
	$$\lim_{n \to \infty}\frac{a_n}{b_n} = \ell.$$
\end{lem}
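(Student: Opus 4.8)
The plan is to prove this as the discrete analogue of L'Hôpital's rule, converting the hypothesis on consecutive differences into a statement about $a_m$ itself via a telescoping estimate, and then closing with a squeeze argument. First I would reduce to a canonical case: since $\{b_n\}$ is strictly monotone and divergent, it either increases to $+\infty$ or decreases to $-\infty$; replacing $(a_n, b_n)$ by $(-a_n, -b_n)$ if necessary — which leaves both the difference quotient $\frac{a_{n+1}-a_n}{b_{n+1}-b_n}$ and the ratio $\frac{a_n}{b_n}$ unchanged — I may assume without loss of generality that $\{b_n\}$ is strictly increasing with $b_n \to +\infty$. In particular $b_{n+1} - b_n > 0$ for all $n$, which is what lets me clear denominators without flipping inequalities.

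Next I would turn the assumed limit into a two-sided bound. Fix $\epsilon > 0$. There is an index $N$ such that for every $n \geq N$,
$$
(\ell - \epsilon)(b_{n+1} - b_n) < a_{n+1} - a_n < (\ell + \epsilon)(b_{n+1} - b_n),
$$
where positivity of $b_{n+1} - b_n$ preserves the directions of the inequalities. Summing over $n = N, N+1, \dots, m-1$ and telescoping both sides yields, for every $m > N$,
$$
(\ell - \epsilon)(b_m - b_N) < a_m - a_N < (\ell + \epsilon)(b_m - b_N).
$$
This is the heart of the argument: it is the only mechanism that upgrades the purely local control on consecutive terms into global control on $a_m$.

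Then I would divide through by $b_m$, which is eventually positive and tends to $+\infty$. Writing $\frac{a_m}{b_m} = \frac{a_N}{b_m} + \frac{a_m - a_N}{b_m}$ and using $\frac{b_m - b_N}{b_m} = 1 - \frac{b_N}{b_m}$, the telescoped inequality becomes a sandwich for $\frac{a_m}{b_m}$ whose outer terms converge. Letting $m \to \infty$, the residual terms $\frac{a_N}{b_m}$ and $\frac{b_N}{b_m}$ vanish because $b_m \to +\infty$ while $a_N, b_N$ are fixed, so
$$
\ell - \epsilon \leq \liminf_{m \to \infty} \frac{a_m}{b_m} \leq \limsup_{m \to \infty} \frac{a_m}{b_m} \leq \ell + \epsilon.
$$
Since $\epsilon > 0$ is arbitrary, the limit inferior and superior coincide and equal $\ell$, giving $\lim_{m \to \infty} a_m / b_m = \ell$.

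I expect the main obstacle to be bookkeeping rather than any deep idea: the care is all in ensuring that the un-summed initial segment (the $a_N/b_m$ term) is rendered negligible purely by the divergence of $b_m$, and in tracking the sign conventions through the reduction step so that the inequality directions survive. A secondary point worth flagging is that the proof as sketched presumes $\ell$ is finite; to accommodate $\ell = \pm\infty$ one keeps the same telescoping bound but replaces the two-sided squeeze with a one-sided estimate showing that $a_m/b_m$ eventually exceeds (respectively, falls below) any prescribed threshold.
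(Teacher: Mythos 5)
Your proof is correct and complete: the reduction to $b_n$ strictly increasing to $+\infty$, the telescoped two-sided bound, and the squeeze after dividing by $b_m$ constitute the standard proof of the Stolz--Ces\`aro theorem, and your remark about one-sided estimates for $\ell = \pm\infty$ is accurate (though only the finite case $\ell = 0$ is ever invoked in this paper, e.g.\ for terms (a) and (b) of Theorem~\ref{thm: case3unbiasednonconvex}). Note that the paper itself supplies no proof of Lemma~\ref{lem: stolz} at all, stating it as a known classical result, so there is nothing to compare against; your argument would serve as a self-contained justification.
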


Using the same diminishing learning rate as the strongly convex case, the partial sum $\Phi=\sum_{j = 0}^{T-1} \eta^j$ diverges due to the property of harmonic series.
For term (a) in \textbf{Theorem \ref{thm: case3unbiasednonconvex}}, its convergent property can be easily verified by \textbf{Lemma \ref{lem: stolz}}, which is a common criterion for proving the convergence of a sequence, as the numerator $a_T = \sum_{t = 0}^{T-1} (\eta^t)^3$ is of higher order than the denominator $b_T = \Phi = \sum_{t = 0}^{T-1} \eta^t$. Similarly, the diminishing property of term (b) in \textbf{Theorem \ref{thm: case3unbiasednonconvex}} can be verified as well.

As in the strongly convex case, to analyze the convergence property of term (c) in \textbf{Theorem \ref{thm: case3unbiasednonconvex}}, it is also required to provide elaborate classification with respect to the MSE. According to \textbf{Lemmas \ref{lem: sequence}} and \textbf{\ref{lem: stolz}}, the properties of terms (a), (b), and (c) in the above two theorems are equivalent. In view of this, the convergence condition of the MSE for non-convex cases with learning rate decay is identical to that in strongly convex cases. 

To conclude, the denoising factor adopted in the previous section can also be applied in the non-convex case, which guarantees the convergence property of term (c) in \textbf{Theorem \ref{thm: case3unbiasednonconvex}}.

\subsubsection{Case Study}
For case study in the non-convex case with denoising factor \eqref{eq: betabound}, we have the following two corollaries for the diminishing error bound and convergence rate.
\begin{cor}[Error Bound of \airfedavgm under Non-Convexity with Learning Rate Decay]\label{cor: case3nonconvex}
Given learning rate $\eta^{t} = \frac{\eta^{0}}{ t+ 1}$, and $\eta^{0} = \min\left\{\frac{1}{L\sqrt{2E(E-1)(2\beta_1+1)}},\frac{1}{2LE}\right\}$, the error bound of \airfedavgm under non-convexity is diminishing, i.e.,
\begin{equation}\label{eq: zero}
	\frac{1}{\Phi}\sum\limits_{t = 0}^{T-1} \eta^t\mathbb{E} \left[\left \|\nabla F(\hat{\bm \theta}^{t})\right \|_2^2\right]\stackrel{T\rightarrow\infty}{\longrightarrow} 0.
\end{equation}
\end{cor}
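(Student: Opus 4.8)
The plan is to start from the master bound \eqref{eq: case3unbiasednonconvex} in \textbf{Theorem \ref{thm: case3unbiasednonconvex}} and show that each of its four constituent terms vanishes as $T\to\infty$ under the prescribed harmonic learning rate $\eta^t=\eta^0/(t+1)$. The single structural fact driving everything is that the weight $\Phi=\sum_{j=0}^{T-1}\eta^j=\eta^0\sum_{j=0}^{T-1}\frac{1}{j+1}$ is a scaled harmonic sum, hence strictly increasing and divergent as $T\to\infty$. This is precisely the monotone, divergent denominator required to invoke the Stolz--Ces\`aro theorem (\textbf{Lemma \ref{lem: stolz}}), which the earlier discussion already used to treat terms (a) and (b).

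First I would dispatch the initialization term $\frac{4[F(\hat{\bm\theta}^{0})-F^{\inf}]}{E\Phi}$ immediately: its numerator is a fixed finite constant while $\Phi\to\infty$, so this contribution tends to $0$. Next I would handle terms (a), (b), and (c) uniformly through \textbf{Lemma \ref{lem: stolz}}. For term (a), set $a_T=\sum_{t=0}^{T-1}(\eta^t)^3$ and $b_T=\Phi$; then the incremental ratio telescopes to $\frac{a_{T+1}-a_T}{b_{T+1}-b_T}=\frac{(\eta^T)^3}{\eta^T}=(\eta^T)^2\to0$, so $a_T/\Phi\to0$ and term (a) vanishes. The identical manipulation gives the ratio $\eta^T\to0$ for term (b) (replacing $(\eta^T)^3$ by $(\eta^T)^2$ in the numerator), so term (b) vanishes as well.

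For term (c), I would substitute the explicit model-aggregation error pinned down by the denoising design \eqref{eq: betabound}, namely ${\sf MSE}^t=\frac{\tilde{G}^2E(\eta^t)^2}{dN^2{\sf SNR}}$, which is of order $(\eta^t)^2$. The corresponding Stolz--Ces\`aro ratio is $\frac{{\sf MSE}^T}{\eta^T}=\frac{\tilde{G}^2E\,\eta^T}{dN^2{\sf SNR}}\to0$, so term (c) converges to zero by the same argument. Summing the four vanishing contributions yields the limit \eqref{eq: zero}, establishing that the weighted average gradient norm diminishes.

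The main point requiring care—rather than a genuine obstacle—is the reduction of term (c) to a $(\eta^t)^2$-type sum: this hinges on the denoising-factor specification \eqref{eq: betabound}, which itself invoked the bounded-gradient \textbf{Assumption \ref{ass: BG}} to control the accumulated-gradient norm. Once the MSE is certified to be of order $(\eta^t)^2$, term (c) is structurally identical to term (b), so no new estimate is needed; the only substantive verification is that the chosen $\eta^t$ makes $\Phi$ diverge while forcing every incremental ratio to zero, both of which follow from the harmonic decay and the learning-rate conditions \eqref{eq: learning rate}.
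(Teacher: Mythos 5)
Your proposal is correct and follows essentially the same route as the paper, which likewise combines the master bound \eqref{eq: case3unbiasednonconvex} with the denoising design \eqref{eq: betabound} (making ${\sf MSE}^t\sim(\eta^t)^2$ via \textbf{Assumption \ref{ass: BG}}) and the divergence of the harmonic sum $\Phi$, invoking \textbf{Lemma \ref{lem: stolz}} to kill terms (a), (b), and (c). Your explicit computation of the Stolz--Ces\`aro increment ratios $(\eta^T)^2$, $\eta^T$, and ${\sf MSE}^T/\eta^T$ merely spells out what the paper leaves implicit.
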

\begin{proof}
	By combining \eqref{eq: non-decay2}, \eqref{eq: case3unbiasednonconvex} and \eqref{eq: betabound}, we obtain \eqref{eq: zero}.
\end{proof}

\begin{cor}[Error Bound of \airfedavgm under Non-convexity with Constant Learning Rate]\label{cor: case3}
	When the learning rate is a constant satisfying 
	\begin{align*}
		\eta = \frac{1}{L}\sqrt{\frac{N}{ET}}\leq\min\left\{\frac{1}{L\sqrt{2E(E-1)(2\beta_1+1)}},\frac{1}{2LE}\right\},
	\end{align*}
	the minimal gradient norm of the global objective function of \airfedavgm is bounded as follows
	\begin{flalign}\nonumber
		&\min_{t\in[T]} \mathbb{E} \left[\left \|\nabla F(\hat{\bm \theta}^{t})\right \|_2^2\right]\leq \frac{1}{T}\sum\limits_{t = 0}^{T-1} \mathbb{E} \left[\left \|\nabla F(\hat{\bm \theta}^{t})\right \|_2^2\right]\\\nonumber
		\leq&\mathcal{O}\left( \frac{1+\Sigma}{\sqrt{NET}}\right)
		 + \mathcal{O}\left(\frac{\tilde{C}N (\bar{\sigma}^2+E\beta_2)}{ET}\right)\\
		 +& \mathcal{O}\left(\frac{1}{d{\sf SNR}\sqrt{N^3ET}}\right),
	\end{flalign}
	where $\Sigma = N\sigma^2$ and $\tilde{C} = \frac{(E-1)(2\beta_1+1)}{\beta_1}$. The right hand side of the inequality is dominated by $\mathcal{O}\left( \frac{1+\Sigma}{\sqrt{NET}}\right)$.
\end{cor}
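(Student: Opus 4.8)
The plan is to specialize the general non-convex bound of \textbf{Theorem \ref{thm: case3unbiasednonconvex}} to a constant learning rate. First I would note that the inequality \eqref{eq: case3unbiasednonconvex} never uses the decaying nature of $\eta^t$; it holds for any sequence obeying the stated upper bound, so I may take $\eta^t \equiv \eta = \frac{1}{L}\sqrt{N/(ET)}$ throughout, invoking the corollary's hypothesis $\eta \leq \min\{\frac{1}{L\sqrt{2E(E-1)(2\beta_1+1)}}, \frac{1}{2LE}\}$ to guarantee admissibility of this step size. With a constant rate the normalizer becomes $\Phi = \sum_{j=0}^{T-1}\eta^j = T\eta$, so the weighted average on the left collapses to the plain average $\frac{1}{T}\sum_{t=0}^{T-1}\mathbb{E}[\|\nabla F(\hat{\bm\theta}^t)\|_2^2]$. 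The leftmost inequality of the claim then follows trivially, since the minimum over $t \in [T]$ is at most the average.

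Next I would evaluate the four terms on the right of \eqref{eq: case3unbiasednonconvex} under $\eta^t \equiv \eta$, using $\sum_{t=0}^{T-1}(\eta^t)^k = T\eta^k$. The initialization term becomes $\frac{4[F(\hat{\bm\theta}^0) - F^{\inf}]}{ET\eta}$, term (a) becomes $\frac{4C_1\eta^2}{E}$, and term (b) becomes $\frac{4C_2\eta}{E}$. For term (c) I substitute the denoising-factor-induced value ${\sf MSE}^t = \frac{\tilde{G}^2 E \eta^2}{dN^2{\sf SNR}}$ obtained by combining \eqref{eq: MSE} and \eqref{eq: betabound}; being constant in $t$, it yields $\frac{4C_3}{ET\eta}\cdot T\cdot\frac{\tilde{G}^2 E\eta^2}{dN^2{\sf SNR}} = \frac{4C_3\tilde{G}^2\eta}{dN^2{\sf SNR}}$.

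Then I would plug in $\eta = \frac{1}{L}\sqrt{N/(ET)}$, so that $ET\eta = \sqrt{NET}/L$ and $L\eta = \sqrt{N/(ET)}$, together with the explicit constants $C_1 = \frac{L^2 E(E-1)(2\beta_1+1)}{4\beta_1}[\bar{\sigma}^2 + 2E\beta_2]$, $C_2 = LE\sigma^2$, and $C_3 = L/2$, and track orders. The initialization term gives $\mathcal{O}(1/\sqrt{NET})$; term (b), rewriting $\sigma^2 = \Sigma/N$, gives $\mathcal{O}(\Sigma/\sqrt{NET})$, so the two combine into the dominant $\mathcal{O}((1+\Sigma)/\sqrt{NET})$; term (a) collapses the factor $\frac{4C_1}{E}\eta^2 = \tilde{C}[\bar{\sigma}^2 + 2E\beta_2]\cdot\frac{N}{ET}$ with $\tilde{C} = (E-1)(2\beta_1+1)/\beta_1$, matching the second term; and term (c) gives $\mathcal{O}(1/(d{\sf SNR}\sqrt{N^3 ET}))$. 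Collecting the four contributions produces the stated bound.

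The main subtlety, rather than an obstacle, lies in the step-size selection: the choice $\eta \propto \sqrt{N/(ET)}$ is precisely what balances the $1/(T\eta)$ initialization term, which decreases in $\eta$, against the $\mathcal{O}(\eta)$ variance and noise terms, which increase in $\eta$, thereby producing the $1/\sqrt{NET}$ scaling and the linear speedup in both $N$ and $E$. I would also verify that the admissibility constraint $\frac{1}{L}\sqrt{N/(ET)} \leq \min\{\cdots\}$ holds once $T$ is large enough, so that \textbf{Theorem \ref{thm: case3unbiasednonconvex}} genuinely applies; the remainder is routine order-of-magnitude bookkeeping.
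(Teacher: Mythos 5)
Your proposal is correct and follows exactly the route the paper intends (the corollary is stated without a separate proof, being the direct specialization of \textbf{Theorem \ref{thm: case3unbiasednonconvex}} to the constant step size $\eta = \frac{1}{L}\sqrt{N/(ET)}$ with the denoising-factor MSE from \eqref{eq: betabound}): your term-by-term evaluation, including $ET\eta = \sqrt{NET}/L$, the collapse of term (a) to $\tilde{C}[\bar{\sigma}^2 + 2E\beta_2]\frac{N}{ET}$, and the $\mathcal{O}(1/(d{\sf SNR}\sqrt{N^3ET}))$ noise term, all check out. The observations that the theorem's bound never uses decay of $\eta^t$ and that $\min_{t\in[T]}$ is dominated by the average are precisely the right justifications.
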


It is worth noting that \textbf{Corollary \ref{cor: case3}} recovers the results of the error-free case by setting $\sigma_{w}^2 = 0$, i.e., ${\sf SNR}\rightarrow \infty$ in \cite{wang2020tackling}.

\begin{Rem}\label{rem: airfedavgm noise}
	The impact of receiver noise on the convergence of \airfedavgm is mathematically similar to the gradient estimation noise caused by vanilla SGD. In addition, we observe from \textbf{Corollaries \ref{cor: case3convex}} and \textbf{\ref{cor: case3}} that a lower SNR amplifies the optimality gap for strongly convex cases and the error bound for non-convex objectives. 
\end{Rem}
\begin{Rem}\label{rem: airfedavgm E}
	It is clear that for both strongly convex and non-convex objectives, a greater number of local updates leads to a faster convergence rate at the cost of worsening the impact of non-IID data by enlarging the optimality gap in \textbf{Corollary \ref{cor: case3convex}} and the error bound in \textbf{Corollary \ref{cor: case3}}. Moreover, by setting $E = 1$ for \airfedavgm, the impact caused by non-IID data is eliminated. This means that by setting $E = 1$ and $\sigma_{w}^2 = 0$ in \textbf{Corollary \ref{cor: case3convex}} and \textbf{\ref{cor: case3}}, the result is consistent with the standard convergence rate for vanilla SGD \cite{bottou2018optimization} for both strongly convex and non-convex cases without data heterogeneity.
\end{Rem}

To conclude, \airfedavgm ($E\geq1$) is a typical example of \airfedavg in terms of algorithm analysis. 
%In this section, we provide the algorithm analysis to obtain the convergent upper bound on both strongly convex and non-convex cases and the requirements on the model aggregation MSE for \airfedavgm. 
Other important results will be discussed in the following section.

\section{Variants of \airfedavg}\label{sec: var}
As mentioned previously, local model, model difference and local gradient are three different forms of local information that can be exchanged between the edge server and edge devices for global model aggregation. In error-free \fedavg, these different forms are equivalent and transferable from each other in the training process.
However, with AirComp \eqref{eq: AirFL}, the variants of
\airfedavg are no longer identical due to the additive noise in each communication round.
In this section, different variants of \airfedavg will be presented, and the insights and results of these variations will be analyzed and discussed. 
\subsection{\airfedavg with Local Gradients}
Although \airfedavgm is an effective algorithm for AirFL \cite{COTAF,wei2021federated,xiaowen2021transmission,xu2021LR,fan2021temporal,sifaou2021robust,shao2021misaligned,lin2022relay,jing2022Federated}, it faces several issues as follows.
\begin{itemize}
%	\item From \textbf{Corollary \ref{cor: case3convex}} and \textbf{\ref{cor: case3}}, and \textbf{Remark \ref{rem: airfedavgm noise}}, one key observation is that to reach accuracy $\epsilon$, high SNR is preferred, which requires more transmit power;
    \item In \eqref{eq: cumulative gradient}, if local gradients are not multiplied by the learning rate, then
    \begin{align*}
    	\boldsymbol{z}_n^{t} = \Delta\bm{\theta}^{t+1}_n/\eta^t \triangleq  -\sum\limits_{e=0}^{E-1}\bm g_{n}^{(t,e)}.
    \end{align*}
	In this case, the global update is given by
	\begin{align}
		\hat{\bm{\theta}}^{t+1} \overset{\eqref{eq: MAE}}{=}\hat{\bm{\theta}}^{t}+\eta^t\left(\sum_{n=1}^{N}p_n \bm z_n^t+\boldsymbol{\varepsilon}^t\right).
	\end{align}
	If the learning rate changes every local update in one communication round, i.e., $\eta^t\rightarrow\eta^{(t,e)}$, then this scheme cannot be well generalized because it is difficult to select $\eta^{(t,e)}$ for obtaining $\bm{z}_n^t$ and model aggregation. 
%	{\color{blue} Explanations: to transmit local gradient for $E>1$, we need to divide the model difference by the learning rate. If the learning rate changes every local update, then it is quite confusing what to transmit and how to do model aggregation. There have been no related works considering this scenario.}
	\item From \textbf{Remark \ref{rem: airfedavgm E}}, we know that a larger number of local updates enlarges the aggregation bias due to non-IID data \cite{karimireddy2020scaffold}. To be more specific, for $E = 1$ and
$ 		\bm{\theta}_n^{(t,1)} = \bm{\theta}_n^{(t,0)} - \eta^t \bm{g}_n^{(t,0)}$, 
	$\bm{\theta}_n^{(t,1)}-\bm{\theta}_n^{(t,0)}$ is an unbiased estimator of $-\eta^t \nabla F(\bm{\theta}^{t})$. However, for $E = 2$, we have
		\begin{equation*}
		\bm{\theta}_n^{(t,2)} = \bm{\theta}_n^{(t,0)} - \eta^t \bm{g}_n^{(t,0)}- \eta^t \bm{g}_n^{(t,1)},
	\end{equation*}
	and $\bm{\theta}_n^{(t,2)}-\bm{\theta}_n^{(t,0)}$ is neither the unbiased estimator of $-\eta^t \nabla F(\bm{\theta}^{t})$ nor that of $-\eta^t \nabla F(\bm{\theta}^{t}-\eta^t\nabla F(\bm{\theta}^{t}))$ \cite{reisizadeh2020fedpaq}. This reflects the advantage of single local update.
	\item From the results of \airfedavgm, we know that the requirements for the MSE in \airfedavgm is quite stringent, i.e., the upper bound of \airfedavgm cannot converge with an arbitrary MSE. Simply setting $E = 1$ in \airfedavgm still requires strict restrictions on the model aggregation MSE.
\end{itemize}

To tackle these challenges, we propose to exploit an important variant of \airfedavg. Specially, when $E = 1$, the local computation can directly be local gradient estimation without performing a decent step, i.e.,
\begin{align}\label{eq: lgoutput}
	\boldsymbol{z}_n^{t} = \bm g_{n}^{(t,0)}.
\end{align}
We name this algorithm as \airfedavg with single local gradient, \airfedavgs. The global update of \airfedavgs is 
\begin{align}\label{eq: AirFedAvg-Single}
	\hat{\bm{\theta}}^{t+1} = \hat{\bm{\theta}}^{t}-\eta^t \hat{\boldsymbol{y}}^t\overset{\eqref{eq: MAE}}{=}\hat{\bm{\theta}}^{t}-\eta^t(\bar{\boldsymbol{y}}^t+\boldsymbol{\varepsilon}^t),
\end{align}
where $\hat{\boldsymbol{y}}^t$ is defined in \eqref{eq: AirFL} and $\bar{\bm y}^t = \sum_{n=1}^{N}p_n \bm z_n^t $.

As to the convergence analysis for \airfedavgs, the model aggregation of this algorithm is special. Compared with the global model update in \airfedavgm ($E = 1$) \eqref{eq: AirFedAvg-Multiple}, it is observed that the MAE of \airfedavgs is naturally multiplied by the learning rate for model aggregation while \airfedavgm does not have this property, which implies that the convergence of these two algorithms may not be identical and simply setting $E = 1$ in the convergence results of \airfedavgm is improper. Assume that this property is prone to relax the requirements on the model aggregation MSE as discussed in the previous section. We will prove this conjecture. 
%{\color{blue} Explanations: in the model aggregation for \airfedavgm, see \eqref{eq: AirFL} and \eqref{eq: AirFedAvg-Multiple}, the noise is added directly. In contrast, for \airfedavgs, the error is multiplied by the learning rate as in \eqref{eq: AirFedAvg-Single}. In this case, \airfedavgs has the potential to reduce the scale of the noise.}

The setting of learning rate $\eta^t$ in \airfedavgs can be chosen as follows:
\begin{itemize}
	\item For strongly convex objective functions, we unify to use a diminishing learning rate to attain diminishing optimality gap. The results for constant learning rate are presented as corollaries for non-diminishing optimality gap.
	\item For non-convex objective functions, we use a diminishing learning rate for convergence analysis as well. In addition, we will present convergence results with constant learning rate in terms of the convergence rate and the error bound.
\end{itemize}

\subsubsection{Main Results and Case Study}
To analyze the impact of receive noise in \airfedavgs, we summarize the convergence results with respect to the MSE in \textbf{Theorem \ref{thm: case1unbiasedconvex}} for strongly convex objectives.

\begin{thm}[Convergence of \airfedavgs under Strong Convexity with Learning Rate Decay]\label{thm: case1unbiasedconvex}
	Let \textbf{Assumption \ref{ass: smooth}, \ref{ass: strong convex}} and \textbf{\ref{ass: gradient variance}} hold, where $L, \mu, \sigma_{n}$ are defined. If the decaying learning rate satisfies $0<\eta^t\leq \frac{1}{L}$, then the upper bound on the optimality gap after $T$ communication rounds is given by
	\begin{align}\label{eq: case1unbiasedconvex}\nonumber
		&\mathbb{E}\left[ F(\hat{\bm \theta}^{T}) \right] - F^{\star} \\
		\leq& \left[F(\hat{\bm \theta}^{0}) - F^{\star}\right] P^0 + \frac{L}{2}\sum_{t=0}^{T-1}(\eta^t)^2 (\sigma^2+{\sf{MSE}}^t)P^{t+1},
	\end{align}
	where $P^t = \prod _{i=t}^{T-1}(1-\mu\eta^i), P^T=1$, and ${\sf{MSE}}^t$ is defined in \eqref{eq: MSE}.
\end{thm}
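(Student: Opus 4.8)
The plan is to treat the \airfedavgs update \eqref{eq: AirFedAvg-Single} as a single inexact SGD step on the global objective $F$, whose effective stochastic gradient is the AirComp-aggregated quantity $\bar{\bm y}^t + \bm\varepsilon^t$, and then to run the classical strongly-convex SGD descent argument. First I would invoke $L$-smoothness (Assumption \ref{ass: smooth}) on consecutive iterates, using $\hat{\bm\theta}^{t+1} - \hat{\bm\theta}^t = -\eta^t(\bar{\bm y}^t + \bm\varepsilon^t)$ from \eqref{eq: AirFedAvg-Single}, to obtain the descent inequality
\begin{equation*}
	F(\hat{\bm\theta}^{t+1}) \leq F(\hat{\bm\theta}^t) - \eta^t\bigl\langle \nabla F(\hat{\bm\theta}^t),\, \bar{\bm y}^t + \bm\varepsilon^t\bigr\rangle + \frac{L(\eta^t)^2}{2}\|\bar{\bm y}^t + \bm\varepsilon^t\|_2^2.
\end{equation*}

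Next I would take the conditional expectation over both the mini-batch sampling $\bm\xi^t$ and the channel noise $\tilde{\bm w}^t$ at round $t$. Two facts do the heavy lifting. By Assumption \ref{ass: gradient variance} together with $\nabla F = \sum_n p_n\nabla F_n$, the aggregate $\bar{\bm y}^t = \sum_n p_n\bm g_n^{(t,0)}$ is an unbiased estimate of $\nabla F(\hat{\bm\theta}^t)$; and since $\bm\varepsilon^t = \tilde{\bm w}^t$ is zero-mean and independent of the gradient sampling, its cross terms vanish, leaving $\mathbb{E}[\|\bar{\bm y}^t + \bm\varepsilon^t\|_2^2] = \mathbb{E}[\|\bar{\bm y}^t\|_2^2] + {\sf MSE}^t$. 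To control $\mathbb{E}[\|\bar{\bm y}^t\|_2^2]$, I would split it into $\|\nabla F(\hat{\bm\theta}^t)\|_2^2$ plus the aggregated variance; using independence of the per-device stochastic gradients, the variance collapses to $\sum_n p_n^2\sigma_n^2 = \sigma^2$. Collecting terms yields
\begin{equation*}
	\mathbb{E}[F(\hat{\bm\theta}^{t+1})] \leq F(\hat{\bm\theta}^t) - \eta^t\Bigl(1 - \tfrac{L\eta^t}{2}\Bigr)\|\nabla F(\hat{\bm\theta}^t)\|_2^2 + \frac{L(\eta^t)^2}{2}\bigl(\sigma^2 + {\sf MSE}^t\bigr).
\end{equation*}

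The step-size restriction $\eta^t \leq 1/L$ is what closes the argument: it forces $1 - L\eta^t/2 \geq 1/2$, so the gradient-norm coefficient is at most $-\eta^t/2$. Then I would apply strong convexity (Assumption \ref{ass: strong convex}) in its Polyak--{\L}ojasiewicz form $\|\nabla F(\hat{\bm\theta}^t)\|_2^2 \geq 2\mu(F(\hat{\bm\theta}^t) - F^\star)$ to convert the gradient-norm term into a contraction on the optimality gap, obtaining the one-step recursion
\begin{equation*}
	\mathbb{E}[F(\hat{\bm\theta}^{t+1})] - F^\star \leq (1 - \mu\eta^t)\bigl(\mathbb{E}[F(\hat{\bm\theta}^t)] - F^\star\bigr) + \frac{L(\eta^t)^2}{2}\bigl(\sigma^2 + {\sf MSE}^t\bigr).
\end{equation*}
Finally I would unroll this recursion from $t = 0$ to $T-1$, which telescopes into the product factors $P^t = \prod_{i=t}^{T-1}(1 - \mu\eta^i)$ and reproduces \eqref{eq: case1unbiasedconvex} exactly.

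I expect the main obstacle to be the bookkeeping in the conditional-expectation step rather than any deep inequality: one must argue carefully that $\bm\varepsilon^t$ is independent of the gradient-sampling randomness so that its cross terms drop, and that the per-device gradient noises are mutually independent so that the aggregated variance reduces to $\sum_n p_n^2\sigma_n^2$ rather than $(\sum_n p_n\sigma_n)^2$. The conceptual payoff, and the reason this differs from \airfedavgm, is already visible in the recursion: because the update scales the entire aggregated signal (including $\bm\varepsilon^t$) by $\eta^t$, the MSE enters the bound with a $(\eta^t)^2$ weight, exactly as the gradient variance $\sigma^2$ does, which is precisely what later relaxes the convergence conditions on the MSE.
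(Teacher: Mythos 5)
Your proposal is correct and follows essentially the same route as the paper's Appendix proof: smoothness descent on $\hat{\bm\theta}^{t+1}=\hat{\bm\theta}^t-\eta^t\hat{\bm y}^t$, expectation using zero-mean noise and unbiased gradients to kill cross terms, the variance decomposition $\mathbb{E}[\|\bar{\bm y}^t\|_2^2]\leq\sigma^2+\|\nabla F(\hat{\bm\theta}^t)\|_2^2$, the PL consequence of strong convexity, the step-size condition $\eta^t\leq 1/L$ to reduce $1-2\mu\eta^t(1-L\eta^t/2)$ to $1-\mu\eta^t$, and unrolling into the products $P^t$. The only cosmetic difference is that the paper absorbs the $1-L\eta^t/2$ factor after applying strong convexity rather than before, which changes nothing in the bound.
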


\begin{proof}
	Please refer to Appendix \ref{sec: case1unbiased}.
\end{proof}

Using \textbf{Lemma \ref{lem: sequence}}, we can easily prove that for strongly convex objective functions, the last term of \eqref{eq: case1unbiasedconvex} is a convergent sequence for all three conditions of the MSE listed in Section \ref{sec: casestudy1}.
The following two corollaries for the strongly convex cases can be obtained based on \textbf{Theorem \ref{thm: case1unbiasedconvex}}.
\begin{cor}[Optimality Gap of \airfedavgs under Convexity with Constant Learning Rate]\label{cor: case1constant}
	If the learning rate is a constant satisfying $\eta^{t}  = \eta= \frac{1}{L}$, the optimality gap $\mathbb{E}\left[ F(\hat{\bm \theta}^{T}) \right]  - F^{\star}$ of the convex case of \airfedavgs can be bounded by
	\begin{align}\nonumber
		\mathbb{E}\left[ F(\hat{\bm \theta}^{T}) \right]   - F^{\star}
		&\leq \left(1-\frac{\mu}{L}\right)^T\left[F(\hat{\bm \theta}^{0}) - F^{\star}\right]\\
		& +\left(1-\frac{\mu}{L}\right)^{T-t-1} \frac{1}{2L}\sum_{t=0}^{T-1} (\sigma^2+{\sf{MSE}}^t).
	\end{align}
	
\end{cor}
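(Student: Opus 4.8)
The plan is to obtain this corollary as a direct specialization of \textbf{Theorem \ref{thm: case1unbiasedconvex}} to the constant step size $\eta^t = \eta = \frac{1}{L}$. Since the general bound \eqref{eq: case1unbiasedconvex} already holds for any sequence $\{\eta^t\}$ with $0 < \eta^t \leq \frac{1}{L}$, and the choice $\eta = \frac{1}{L}$ satisfies this constraint with equality, no fresh estimation is needed; the whole argument reduces to evaluating the contraction factors $P^0$ and $P^{t+1}$ under the constant learning rate and simplifying the prefactor.

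First I would observe that with $\eta^i = \frac{1}{L}$ for every $i$, each factor in the telescoping product collapses to $1 - \mu\eta^i = 1 - \frac{\mu}{L}$. Hence $P^0 = \prod_{i=0}^{T-1}\left(1-\frac{\mu}{L}\right) = \left(1-\frac{\mu}{L}\right)^T$ and $P^{t+1} = \prod_{i=t+1}^{T-1}\left(1-\frac{\mu}{L}\right) = \left(1-\frac{\mu}{L}\right)^{T-t-1}$, with the empty product $P^T = 1$ matching the convention of the theorem. Substituting these, together with $(\eta^t)^2 = \frac{1}{L^2}$, collapses the prefactor $\frac{L}{2}(\eta^t)^2 = \frac{1}{2L}$ and yields
\begin{align*}
	\mathbb{E}\left[F(\hat{\bm\theta}^T)\right] - F^{\star} \leq \left(1-\tfrac{\mu}{L}\right)^T\!\left[F(\hat{\bm\theta}^0) - F^{\star}\right] + \frac{1}{2L}\sum_{t=0}^{T-1}\left(1-\tfrac{\mu}{L}\right)^{T-t-1}\!\left(\sigma^2 + {\sf{MSE}}^t\right),
\end{align*}
which is exactly the claimed bound.

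The only sanity check I would add is that $1 - \frac{\mu}{L} \in [0,1)$, so that the leading term truly contracts and the geometric weight discounts earlier noise contributions; this follows from the standard fact that $\mu$-strong convexity (\textbf{Assumption \ref{ass: strong convex}}) combined with $L$-smoothness (\textbf{Assumption \ref{ass: smooth}}) forces $\mu \leq L$. There is no genuine obstacle here: the derivation is a one-line substitution, and the main thing to be careful about is index bookkeeping in the product (ensuring $P^{t+1}$ carries exponent $T-t-1$, not $T-t$) and reading the factor $\left(1-\frac{\mu}{L}\right)^{T-t-1}$ in the corollary statement as sitting \emph{inside} the summation over $t$, since it depends on the summation index.
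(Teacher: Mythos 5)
Your proposal is correct and follows exactly the route the paper intends: the corollary is stated without a separate proof precisely because it is the direct specialization of \textbf{Theorem \ref{thm: case1unbiasedconvex}} to $\eta^t = \eta = \frac{1}{L}$, under which $P^0 = \left(1-\frac{\mu}{L}\right)^T$ and $P^{t+1} = \left(1-\frac{\mu}{L}\right)^{T-t-1}$ and the prefactor $\frac{L}{2}\eta^2$ collapses to $\frac{1}{2L}$, just as you computed. Your closing remark is also well taken --- the factor $\left(1-\frac{\mu}{L}\right)^{T-t-1}$ depends on the summation index and must sit inside the sum over $t$, so its placement outside the summation in the corollary's displayed bound is a typo in the statement rather than a flaw in your derivation.
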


The second term of the right hand side is a non-diminishing term. Note that this is the most frequently-used conclusion in wireless FL literature with respect to \airfedavgs. Subsequently, to provide case study with respect to the denoising factor \eqref{eq: precoding}, we have the following corollary for the strongly convex case.
\begin{cor}[Optimality Gap of \airfedavgs under Convexity with Learning Rate Decay]\label{cor: case1convex}
	If the precoding factor is a constant, and the learning rate is set as $\eta^{t} = \frac{2}{\mu(\tau + t)}$, with $\tau  = \frac{2L}{\mu}$, then the optimality gap $\mathbb{E}\left[ F(\hat{\bm \theta}^{T}) \right]  - F^{\star}$ of the convex case of \airfedavgs converges to zero with rate
	\begin{align}
		\mathbb{E}\left[ F(\hat{\bm \theta}^{T}) \right]  - F^{\star}\leq \mathcal{O}\left( \frac{\tilde{B}}{\mu^2NT}\right),
	\end{align}
	where $\tilde{B} = \max\left\{2L(\Sigma+\frac{\tilde{G}^2}{dN{\sf SNR}}),\mu^2N\tau(F(\hat{\bm{\theta}}^{0})  - F^{\star})\right\}$. 
\end{cor}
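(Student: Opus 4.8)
The plan is to start from \textbf{Theorem \ref{thm: case1unbiasedconvex}}, whose unrolled bound is the telescoped version of the one-step recursion
$$a^{t+1} \leq \left(1-\mu\eta^t\right)a^t + \frac{L}{2}\left(\eta^t\right)^2\left(\sigma^2 + {\sf{MSE}}^t\right),$$
where I abbreviate $a^t := \mathbb{E}\left[F(\hat{\bm\theta}^t)\right] - F^\star$ (so $a^0 = F(\hat{\bm\theta}^0) - F^\star$). The structural feature that makes this corollary work, inherited from the global update \eqref{eq: AirFedAvg-Single}, is that the MAE enters multiplied by $\eta^t$, so the noise contributes only through $(\eta^t)^2 {\sf{MSE}}^t$ rather than the bare ${\sf{MSE}}^t$ that appears for \airfedavgm. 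This is exactly why a constant precoding factor already suffices to kill the noise term.

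First I would pin down the MSE. With a constant denoising factor $\beta$, \eqref{eq: MSE} gives ${\sf{MSE}}^t = \sigma_w^2/\beta$, which is constant in $t$. Taking the largest $\beta$ admissible under the transmit-power constraint together with the BG \textbf{Assumption \ref{ass: BG}} (now applied to the single gradient $\bm g_n^{(t,0)}$) yields ${\sf{MSE}}^t \leq \tfrac{\tilde G^2}{dN^2{\sf SNR}}$, the same expression as in the \airfedavgm case but without the spurious factors $E$ and $(\eta^t)^2$. Using $\sigma^2 = \Sigma/N$, the per-round constant becomes $B := \sigma^2 + {\sf{MSE}}^t \leq \tfrac{1}{N}\!\left(\Sigma + \tfrac{\tilde G^2}{dN{\sf SNR}}\right)$, so the recursion collapses to $a^{t+1} \leq (1-\mu\eta^t)a^t + \tfrac{L}{2}(\eta^t)^2 B$.

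The main step is to substitute $\eta^t = \frac{2}{\mu(\tau+t)}$ with $\tau = 2L/\mu$ and prove by induction that $a^t \leq \frac{v}{\tau+t}$ with $v := \max\!\left\{\tfrac{2LB}{\mu^2},\, \tau a^0\right\}$. Writing $w := \tau+t$ so that $\mu\eta^t = 2/w$ and $(\eta^t)^2 = 4/(\mu^2 w^2)$, the recursion reads $a^{t+1} \leq (1-\tfrac{2}{w})\tfrac{v}{w} + \tfrac{2LB}{\mu^2 w^2}$; comparing against $\tfrac{v}{w+1} = \tfrac{v}{w} - \tfrac{v}{w(w+1)}$, the inductive step closes whenever $\tfrac{2LB}{\mu^2} \leq v\,\tfrac{w+2}{w+1}$, which holds for every $w$ once $v \geq \tfrac{2LB}{\mu^2}$, while the base case $t=0$ holds because $v \geq \tau a^0$. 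Evaluating at $t=T$ gives $a^T \leq \frac{v}{\tau+T} \leq v/T$, and substituting $v = \frac{1}{\mu^2 N}\max\!\left\{2L\!\left(\Sigma + \tfrac{\tilde G^2}{dN{\sf SNR}}\right),\, \mu^2 N\tau a^0\right\} = \frac{\tilde B}{\mu^2 N}$ delivers $\mathbb{E}\left[F(\hat{\bm\theta}^T)\right] - F^\star \leq \mathcal{O}\!\left(\frac{\tilde B}{\mu^2 N T}\right)$, as claimed.

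I expect the main obstacle to be bookkeeping rather than a deep idea: one must choose the single constant $v$ so that the base case and the inductive step are satisfied simultaneously, and verify that the telescoping gap $\frac{v}{w}-\frac{v}{w+1} = \frac{v}{w(w+1)}$ uniformly dominates the residual $\frac{2v}{w^2} - \frac{2LB}{\mu^2 w^2}$ in $w$ (equivalently $v\,\frac{w+2}{w+1} \geq \frac{2LB}{\mu^2}$). A secondary subtlety is justifying that the constant-precoder MSE is genuinely $t$-independent: $\tilde G^2$ conceals a $\max_n v_n^2/|h_n^t|^2$ that I would either bound by a worst-case channel realization or absorb into the constant, exactly as \textbf{Assumption \ref{ass: BG}} is invoked in the \airfedavgm analysis.
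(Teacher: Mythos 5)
Your proposal is correct and follows essentially the same route as the paper's own proof in Appendix E: the identical one-step recursion $a^{t+1}\leq(1-\mu\eta^t)a^t+\tfrac{L}{2}(\eta^t)^2(\sigma^2+{\sf MSE}^t)$, the same inductive hypothesis $a^t\leq v/(\tau+t)$, and the same constant $v=\max\{2LB/\mu^2,\tau a^0\}$, which matches the paper's $u=\max\{\beta^2C/(\beta\mu-1),\boldsymbol{\delta}^0\tau\}$ once $\beta=2/\mu$ is substituted (the paper merely carries a general $\beta$ before specializing, and closes the inductive step via $\tfrac{\tau+t-1}{(\tau+t)^2}\leq\tfrac{1}{\tau+t+1}$ rather than your telescoping-gap comparison, a cosmetic difference). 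Your added care about the $t$-dependence hidden in $\tilde G^2$ via $\max_n v_n^2/|h_n^t|^2$ is a reasonable refinement the paper glosses over, and note only that one should also record that $\eta^0=2/(\mu\tau)=1/L$ meets the step-size condition of \textbf{Theorem \ref{thm: case1unbiasedconvex}}, as the paper does.
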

\begin{proof}
	Please refer to Appendix \ref{sec: single}.
\end{proof}

Note that this result is equivalent to the statements in \cite{Li2020On,guo2021analog} with respect to the convergence rate.
According to \textbf{Corollary \ref{cor: case1constant}} and \textbf{Corollary \ref{cor: case1convex}}, we can conclude that in the strongly convex case, the expected objective values converge linearly to a neighborhood of the optimal value for \airfedavgs with a constant learning rate. However the impact of MAE, i.e., the model aggregation MSE, hinders further convergence to the optimal solution. On the other hand, it can converge to the optimal point at the expense of achieving a slower sublinear convergence speed, which is in the order of $\mathcal{O}\left( 1/NT\right)$.

Besides, the convergence results for non-convex objectives with diminishing learning rate can be found in \textbf{Theorem \ref{thm: case1unbiasednonconvex}}.

\begin{thm}[Convergence of \airfedavgs under Non-convexity with Learning Rate Decay]\label{thm: case1unbiasednonconvex}
	Let \textbf{Assumptions \ref{ass: smooth}} and \textbf{\ref{ass: gradient variance}} hold, and the learning rate satisfy $0<\eta^t\leq \frac{1}{L}$. The weighted average norm of global gradients after $T$ communication rounds is upper bounded by
	\begin{align}\label{eq: case1unbiasednonconvex}\nonumber
		&\frac{1}{\Phi}\sum\limits_{t = 0}^{T-1} \eta^t\mathbb{E} \left[\left \|\nabla F(\hat{\bm \theta}^{t})\right \|_2^2\right]\\
		\leq& \frac{2}{\Phi} \left[F(\hat{\bm \theta}^{0}) - F^{\inf}\right] 
		+  \frac{L}{\Phi}\sum_{t = 0}^{T-1} (\eta^t)^2(\sigma^2+{\sf{MSE}}^t).
	\end{align}
\end{thm}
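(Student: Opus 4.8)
The plan is to run the standard descent-lemma argument for non-convex stochastic optimization, specialized to the \airfedavgs update \eqref{eq: AirFedAvg-Single}. First I would apply the $L$-smoothness upper bound from \textbf{Definition \ref{def: L-smooth}} to the consecutive iterates $\hat{\bm\theta}^{t+1}$ and $\hat{\bm\theta}^{t}$, then substitute the one-step increment $\hat{\bm\theta}^{t+1}-\hat{\bm\theta}^{t} = -\eta^t(\bar{\bm y}^t+\bm\varepsilon^t)$, where $\bar{\bm y}^t = \sum_{n=1}^N p_n\bm g_n^{(t,0)}$, to obtain
\begin{equation*}
  F(\hat{\bm\theta}^{t+1}) \leq F(\hat{\bm\theta}^{t}) - \eta^t\bigl\langle \nabla F(\hat{\bm\theta}^{t}),\, \bar{\bm y}^t+\bm\varepsilon^t\bigr\rangle + \tfrac{L(\eta^t)^2}{2}\bigl\|\bar{\bm y}^t+\bm\varepsilon^t\bigr\|_2^2.
\end{equation*}

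Next I would take the expectation conditioned on $\hat{\bm\theta}^t$. By \textbf{Assumption \ref{ass: gradient variance}} the local gradients are unbiased, so $\mathbb{E}[\bar{\bm y}^t]=\sum_n p_n\nabla F_n(\hat{\bm\theta}^t)=\nabla F(\hat{\bm\theta}^t)$, and since the equivalent channel noise $\tilde{\bm w}^t$ is zero-mean and independent of the data sampling, $\mathbb{E}[\bm\varepsilon^t]=\bm 0$ and the cross term $\mathbb{E}[\langle\bar{\bm y}^t,\bm\varepsilon^t\rangle]$ vanishes; the linear term thus collapses to $-\eta^t\|\nabla F(\hat{\bm\theta}^t)\|_2^2$. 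For the quadratic term I would use the bias–variance split $\mathbb{E}[\|\bar{\bm y}^t\|_2^2]=\|\nabla F(\hat{\bm\theta}^t)\|_2^2+\mathbb{E}[\|\bar{\bm y}^t-\nabla F(\hat{\bm\theta}^t)\|_2^2]$, bound the aggregate variance through device-wise independence by $\sum_n p_n^2\sigma_n^2=\sigma^2$, and identify $\mathbb{E}[\|\bm\varepsilon^t\|_2^2]={\sf{MSE}}^t$ via \eqref{eq: MSE}. This produces the per-round descent inequality
\begin{equation*}
  \eta^t\Bigl(1-\tfrac{L\eta^t}{2}\Bigr)\mathbb{E}\bigl[\|\nabla F(\hat{\bm\theta}^t)\|_2^2\bigr] \leq \mathbb{E}[F(\hat{\bm\theta}^t)] - \mathbb{E}[F(\hat{\bm\theta}^{t+1})] + \tfrac{L(\eta^t)^2}{2}\bigl(\sigma^2+{\sf{MSE}}^t\bigr).
\end{equation*}

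Invoking the step-size condition $\eta^t\leq 1/L$ gives $1-\tfrac{L\eta^t}{2}\geq\tfrac12$, so the left coefficient is at least $\eta^t/2$. I would then sum over $t=0,\dots,T-1$ so the function values telescope, apply the lower bound $F(\hat{\bm\theta}^T)\geq F^{\inf}$, and multiply through by $2/\Phi$ with $\Phi=\sum_{j=0}^{T-1}\eta^j$ to arrive at \eqref{eq: case1unbiasednonconvex}.

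The main obstacle is the careful second-moment bookkeeping in the second step: one must justify that the receiver noise $\bm\varepsilon^t$ is independent of the stochastic gradients (so the cross term drops and ${\sf{MSE}}^t$ enters additively) and that the per-device gradient errors are mutually independent (so the aggregate variance contracts to the weighted sum $\sum_n p_n^2\sigma_n^2$ rather than $\sum_n p_n\sigma_n^2$). This $p_n^2$-weighting is exactly what yields the linear-speedup factor in the subsequent corollaries, and it is also what distinguishes \airfedavgs from the naive $E=1$ specialization of \airfedavgm, since here the aggregation error is premultiplied by $\eta^t$, relaxing the requirement on the MSE.
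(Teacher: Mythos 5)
Your proposal is correct and follows essentially the same route as the paper's proof in Appendix \ref{sec: case1unbiased}: the descent lemma applied to $\hat{\bm\theta}^{t+1}=\hat{\bm\theta}^{t}-\eta^t(\bar{\bm y}^t+\bm\varepsilon^t)$, elimination of the linear and cross terms via unbiasedness and the zero-mean, independent channel noise, the bias--variance split giving $\mathbb{E}[\|\bar{\bm y}^t\|_2^2]\leq \sigma^2+\|\nabla F(\hat{\bm\theta}^t)\|_2^2$ with the $p_n^2$-weighted variance, and then $\eta^t\leq 1/L$, telescoping, and division by $\Phi$. Your second-moment bookkeeping, including the device-wise independence justifying $\sum_n p_n^2\sigma_n^2$ (the paper's \textbf{Lemma \ref{lem: Lemma 0}} argument) and the observation that ${\sf MSE}^t$ enters premultiplied by $(\eta^t)^2$, matches the paper's derivation of \eqref{eq: F-F} and \eqref{eq: case1nonconvex2} exactly.
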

\begin{proof}
	Please refer to Appendix \ref{sec: case1unbiased}.
\end{proof}

Similar to the non-convex objective functions, the last term of \textbf{Theorem \ref{thm: case1unbiasednonconvex}} is a convergent sequence for all three conditions of the MSE listed in Section \ref{sec: casestudy1} by \textbf{Lemma \ref{lem: stolz}}.
In addition, the following corollary for non-convex case and denoising factor \eqref{eq: precoding} can be obtained based on \textbf{Theorem \ref{thm: case1unbiasednonconvex}}.

\begin{cor}[Error Bound of \airfedavgs under Non-convexity with Constant Learning Rate]\label{cor: case1}
	When the constant learning rate is given by
	\begin{equation*}
		\eta = \frac{1}{L}\sqrt{\frac{N}{T}}\leq\frac{1}{L},
	\end{equation*}
	the minimal gradient norm of the global objective function of the unbiased \airfedavgs will be bounded by
	\begin{align}\label{eq: Case1unbiasednonconvex}\nonumber
		&\min_{t\in[T]} \mathbb{E} \left[\left \|\nabla F(\hat{\bm \theta}^{t})\right \|_2^2\right]\leq \frac{1}{T}\sum\limits_{t = 0}^{T-1} \mathbb{E} \left[\left \|\nabla F(\hat{\bm \theta}^{t})\right \|_2^2\right]\\
		\leq&\mathcal{O}\left( \frac{1+\Sigma}{\sqrt{NT}}\right) + \mathcal{O}\left(\frac{1}{d{\sf SNR}\sqrt{N^3T}}\right),
	\end{align}
	where the right hand side is dominated by $\mathcal{O}\left( \frac{1+\Sigma}{\sqrt{NT}}\right)$.
\end{cor}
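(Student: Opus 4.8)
The plan is to specialize \textbf{Theorem \ref{thm: case1unbiasednonconvex}} to the constant step size and then insert the model-aggregation MSE produced by the denoising factor \eqref{eq: precoding}. First I would observe that with $\eta^t \equiv \eta$ the normalizing weight collapses to $\Phi = \sum_{t=0}^{T-1}\eta^t = T\eta$, so the weighted average on the left-hand side of \eqref{eq: case1unbiasednonconvex} reduces to the plain average $\frac{1}{T}\sum_{t=0}^{T-1}\mathbb{E}[\|\nabla F(\hat{\bm\theta}^t)\|_2^2]$. The first inequality in the statement is then immediate, since the minimum of a finite collection of nonnegative numbers never exceeds their arithmetic mean.

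Next I would control the MSE term. Because \airfedavgs transmits the single local gradient $\bm z_n^t = \bm g_n^{(t,0)}$ (i.e. $E=1$ and no learning-rate scaling inside $\bm z_n^t$), the denoising factor \eqref{eq: precoding} together with the BG \textbf{Assumption \ref{ass: BG}} gives a \emph{constant} bound ${\sf MSE}^t = \sigma_w^2/\beta^t \leq \tilde{G}^2/(dN^2{\sf SNR})$, in contrast to the $(\eta^t)^2$-scaled MSE of \airfedavgm in \eqref{eq: betabound}. This is exactly the distinction flagged after \eqref{eq: AirFedAvg-Single}: in \airfedavgs the aggregation error is already multiplied by $\eta^t$ inside the global update, so only the scaled quantity $(\eta^t)^2{\sf MSE}^t$ enters \eqref{eq: case1unbiasednonconvex}. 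Substituting this constant bound and using $\sum_{t=0}^{T-1}(\eta^t)^2 = T\eta^2$ yields the two-term estimate
\[
	\frac{2}{T\eta}\big[F(\hat{\bm\theta}^0)-F^{\inf}\big] + L\eta\Big(\sigma^2 + \tfrac{\tilde{G}^2}{dN^2{\sf SNR}}\Big).
\]

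Finally I would plug in the prescribed step size $\eta = \frac{1}{L}\sqrt{N/T}$ and simplify, after checking that it satisfies the admissibility constraint $\eta \leq 1/L$ (which holds whenever $N \leq T$). Using $T\eta = \sqrt{NT}/L$, the first term becomes $\frac{2L}{\sqrt{NT}}[F(\hat{\bm\theta}^0)-F^{\inf}] = \mathcal{O}(1/\sqrt{NT})$, while the second term splits into $\sqrt{N/T}\,\sigma^2$ and $\sqrt{N/T}\,\tilde{G}^2/(dN^2{\sf SNR})$. Invoking $\Sigma = N\sigma^2$, the former equals $\Sigma/\sqrt{NT} = \mathcal{O}(\Sigma/\sqrt{NT})$ and the latter equals $\mathcal{O}\big(1/(d{\sf SNR}\sqrt{N^3T})\big)$; collecting the $\sigma$-free and $\sigma$-dependent pieces gives the claimed $\mathcal{O}((1+\Sigma)/\sqrt{NT}) + \mathcal{O}(1/(d{\sf SNR}\sqrt{N^3T}))$, dominated by the first summand.

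The algebra is routine once the constant-learning-rate reduction is in place; the only subtle point — and the part most likely to trip up a reader — is recognizing that for \airfedavgs the relevant MSE is a \emph{constant} rather than $(\eta^t)^2$-proportional. This is precisely what lets the aggregation-noise term $L\eta\,{\sf MSE}$ share the same $\mathcal{O}(\sqrt{N/T})$ scaling as the gradient-variance term, so that both balance against the $\mathcal{O}(1/(T\eta))$ initialization term under the choice $\eta = \frac{1}{L}\sqrt{N/T}$ and produce the $\mathcal{O}(1/\sqrt{NT})$ rate. One should also confirm that setting $\sigma_w^2 = 0$ (equivalently ${\sf SNR}\to\infty$) makes the last summand vanish and recovers the error-free single-update rate, which serves as a consistency check on the bookkeeping.
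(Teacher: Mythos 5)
Your proposal is correct and follows exactly the route the paper intends: specialize \textbf{Theorem \ref{thm: case1unbiasednonconvex}} to a constant step size (so $\Phi = T\eta$), use the fact that for \airfedavgs the denoising factor \eqref{eq: precoding} with \textbf{Assumption \ref{ass: BG}} yields the constant bound ${\sf MSE}^t \leq \tilde{G}^2/(dN^2{\sf SNR})$, and substitute $\eta = \frac{1}{L}\sqrt{N/T}$ with $\sigma^2 = \Sigma/N$ to balance the $\mathcal{O}(1/(T\eta))$ and $\mathcal{O}(\eta)$ terms. Your bookkeeping — including the admissibility check $N \leq T$, the $\min$-versus-average step, and the $\sigma_w^2 = 0$ consistency check against the error-free rate — matches the paper's derivation of the corresponding \airfedavgm result in \textbf{Corollary \ref{cor: case3}} with $E=1$, so there is nothing to add.
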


\subsubsection{Comparison with \airfedavgm}
Compared with \airfedavgm ($E = 1$), we have the following corollary.
\begin{cor}[Equivalence of \airfedavgm ($E = 1$) and \airfedavgs]\label{cor: equ}
	When the denoising factor design satisfies \eqref{eq: precoding}, \airfedavgm ($E = 1$) and \airfedavgs are identical regardless of the strongly convex or non-convex objectives.
\end{cor}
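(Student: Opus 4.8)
The plan is to show that under denoising factor \eqref{eq: precoding} the two schemes generate the \emph{same} stochastic recursion for the global model, so that every convergence metric necessarily coincides. First I would write the $E=1$ instance of \airfedavgm explicitly: since $\bm z_n^t = -\eta^t \bm g_n^{(t,0)}$ by \eqref{eq: cumulative gradient}, substituting \eqref{eq: AirFL} into \eqref{eq: AirFedAvg-Multiple} gives $\hat{\bm\theta}^{t+1} = \hat{\bm\theta}^t - \eta^t\sum_{n=1}^N p_n \bm g_n^{(t,0)} + \tilde{\bm w}_M^t$. In parallel, for \airfedavgs we have $\bm z_n^t = \bm g_n^{(t,0)}$ from \eqref{eq: lgoutput}, and \eqref{eq: AirFedAvg-Single} yields $\hat{\bm\theta}^{t+1} = \hat{\bm\theta}^t - \eta^t\sum_{n=1}^N p_n \bm g_n^{(t,0)} - \eta^t \tilde{\bm w}_S^t$. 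The deterministic (gradient) parts are manifestly identical; the only distinction is that the aggregation error enters \airfedavgm additively, whereas it is scaled by $\eta^t$ in \airfedavgs.

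The crux is to verify that the two noise terms are identically distributed, which requires accounting for the fact that the denoising factor \eqref{eq: precoding} is computed \emph{from the signal actually transmitted}, and that signal differs between the two schemes by precisely the factor $\eta^t$. Writing $\beta_M^t$ and $\beta_S^t$ for the respective denoising factors, I would substitute $\|p_n \bm z_n^t\|_2^2 = (\eta^t)^2\|p_n\bm g_n^{(t,0)}\|_2^2$ for \airfedavgm versus $\|p_n \bm z_n^t\|_2^2 = \|p_n\bm g_n^{(t,0)}\|_2^2$ for \airfedavgs into \eqref{eq: precoding}, obtaining $\beta_M^t = (\eta^t)^{-2}\beta_S^t$. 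Recalling from \eqref{eq: MSE} that $\tilde{\bm w}^t \sim \mathcal{N}(0,(\sigma_w^2/\beta^t)\bm{I_d})$, the additive noise of \airfedavgm has per-component variance $\sigma_w^2/\beta_M^t = (\eta^t)^2\sigma_w^2/\beta_S^t$, while the scaled noise $\eta^t\tilde{\bm w}_S^t$ of \airfedavgs has per-component variance $(\eta^t)^2\sigma_w^2/\beta_S^t$. These coincide, and since both are zero-mean Gaussian (the sign being immaterial by symmetry), the effective per-round perturbations are equal in law.

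Consequently, the two recursions share the same deterministic drift and the same noise distribution at every round $t$, so the iterate sequences $\{\hat{\bm\theta}^t\}$ are identical in distribution, independent of whether $F$ is strongly convex or non-convex; hence the optimality gap in the former regime and the gradient-norm bound in the latter agree. Equivalently, in the convergence analysis the effective aggregation error ${\sf MSE}_M^t = \sigma_w^2/\beta_M^t$ of \airfedavgm equals the $\eta^t$-weighted quantity $(\eta^t)^2{\sf MSE}_S^t$ governing \airfedavgs, so the two cases reduce to one. I expect the main obstacle to be careful bookkeeping rather than anything deep: one must track that the $\eta^t$ which \airfedavgs places in the global-update step is \emph{exactly} the $\eta^t$ that \airfedavgm has already absorbed into $\bm z_n^t$, and therefore into $\beta_M^t$, which is what makes the cancellation exact rather than merely order-wise.
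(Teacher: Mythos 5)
Your proposal is correct, and its computational core coincides with the paper's own proof: both hinge on the observation that under the design \eqref{eq: precoding} the signal transmitted by \airfedavgm with $E=1$ is exactly $\eta^t$ times that of \airfedavgs, so that $\beta_M^t = (\eta^t)^{-2}\beta_S^t$ and hence ${\sf MSE}_M^t = (\eta^t)^2\, {\sf MSE}_S^t$, which is precisely the identity the paper displays. Where you diverge is in how the equivalence is concluded. The paper stops at the MSE identity and infers the equivalence by comparing the convergence corollaries (\textbf{Corollary \ref{cor: case3convex}} with \textbf{Corollary \ref{cor: case1convex}} in the strongly convex case, \textbf{Corollary \ref{cor: case3}} with \textbf{Corollary \ref{cor: case1}} in the non-convex case): in the \airfedavgm bounds the noise enters through ${\sf MSE}^t$ directly, in the \airfedavgs bounds through $(\eta^t)^2{\sf MSE}^t$, and the identity makes these terms match. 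You instead argue at the level of the stochastic recursions themselves: writing out \eqref{eq: AirFedAvg-Multiple} with \eqref{eq: cumulative gradient} against \eqref{eq: AirFedAvg-Single} with \eqref{eq: lgoutput}, you show that the drift terms coincide and that the effective per-round perturbations $\tilde{\bm w}_M^t$ and $-\eta^t\tilde{\bm w}_S^t$ are equal in law, both being zero-mean Gaussian with per-component variance $(\eta^t)^2\sigma_w^2/\beta_S^t$, the sign being immaterial by symmetry. This buys something real: distributional identity of the iterate sequences implies agreement of \emph{every} performance metric, not merely of the particular upper bounds derived in the corollaries, and it directly justifies the word ``identical'' in the statement, which the paper's bound-comparison argument only supports at the level of the analysis. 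One small point of care, which your bookkeeping implicitly handles: since $\beta^t$ in \eqref{eq: precoding} depends on the realized stochastic gradients $\bm g_n^{(t,0)}$, the Gaussian matching must be read conditionally on those gradients; because $\beta_M^t = (\eta^t)^{-2}\beta_S^t$ holds pathwise, the coupling is exact and the conclusion stands.
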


\begin{proof}
	For \airfedavgm ($E = 1$), we have
	\begin{equation*}
		{\sf{MSE}}_M^t = \frac{\sigma_{w}^2}{\beta^t} = (\eta^t)^2\max_{n\in \mathcal{N}}\frac{\left\|p_n\bm g_{n}^{(t,0)}\right\|_2^2}{ |h_n^t|^2d{\sf SNR}}.
	\end{equation*}
	For \airfedavgs, we have
	\begin{equation*}
		{\sf{MSE}}_S^t = \frac{\sigma_{w}^2}{\beta^t} = \max_{n\in \mathcal{N}}\frac{\left\|p_n\bm g_{n}^{(t,0)}\right\|_2^2}{ |h_n^t|^2d{\sf SNR}}.
	\end{equation*}
	Obviously, we have 
	\begin{equation*}
		{\sf{MSE}}_M^t = (\eta^t)^2{\sf{MSE}}_S^t.
	\end{equation*}
It is worth noting that \textbf{Corollary \ref{cor: equ}} can be derived by comparing \textbf{Corollary \ref{cor: case3convex}} with \textbf{Corollary \ref{cor: case1convex}} for strongly convex case, and \textbf{Corollary \ref{cor: case3}} with \textbf{Corollary \ref{cor: case1}} for non-convex case.
\end{proof}

To conclude, for both strongly convex and non-convex cases, \airfedavgs can be generalized by \airfedavgm with denoising factor in \eqref{eq: precoding}. With a constant MSE, \airfedavgs is more robust to receiver noise than \airfedavgm. This is due to the fact that MSE is multiplied by the learning rate, which is either decaying or a small real number, compared with \airfedavgm ($E = 1$).

Furthermore, the comparisons between \airfedavgm and \airfedavgs are discussed in the following.
\begin{Rem}\label{rem: remark1}
	In \airfedavgs, the detrimental impact of the additive noise on the optimality gap for strongly convex cases and the error bound for non-convex cases is relatively small compared with \airfedavgm, as the last terms of \eqref{eq: case1unbiasedconvex} and \eqref{eq: case1unbiasednonconvex} converge with arbitrary model aggregation MSE. 
	%	In addition, COTAF resembled precoder can also be applied in this case. 
\end{Rem}

\begin{Rem}\label{rem: hetero}
	Note that \textbf{Theorems \ref{thm: case1unbiasedconvex}} and \textbf{\ref{thm: case1unbiasednonconvex}}, together with \airfedavgm ($E = 1$) in \textbf{Theorems \ref{thm: case3unbiasedconvex}} and \textbf{\ref{thm: case3unbiasednonconvex}}, do not depend on \textbf{Assumption \ref{ass: bounded gradient dissimilarity}}, which indicates that performing only one local update has the potential to avoid the impact of heterogeneous data to some extent in comparison to the periodic averaging algorithms such as \airfedavgm. This confirms the discussion in \textbf{Remark \ref{rem: airfedavgm E}}. This phenomenon is also illustrated in Fig. \ref{fig: hetero}.%, since there is only one-step local model update and hence the local data dissimilarity has no effect to the training process of FL.
\end{Rem}

\begin{Rem}\label{rem: 13comparison}
	To compare the performance of \airfedavgm and \airfedavgs, we take the case study of non-convex case for example, where the denoising factor satisfies \eqref{eq: precoder condition}. Comparing \textbf{Corollary \ref{cor: case3nonconvex}} with \textbf{Corollary \ref{cor: case1}}, it is clear that the number of communication rounds in \airfedavgm can be reduced by the number of local updates $E$, whose maximum is in the order of $\min \left\{\mathcal{O}\left(\frac{T+(2\beta_1+1)N}{(2\beta_1+1)N}\right),\mathcal{O}\left(\frac{T}{N}\right)\right \}$. In summary, compared with \airfedavgs, \airfedavgm reduces the heavy communication cost, while guaranteeing faster convergence. However, \airfedavgm has more stringent requirements for the model aggregation MSE, while those requirements are quite loose in \airfedavgs. In addition, \airfedavgm may suffer from aggregation bias caused by non-IID data, while \airfedavgs is more robust to data heterogeneity at the cost of slower convergence speed.
\end{Rem}

\begin{figure}[tbp]
	\centering
	\includegraphics[width=0.8\linewidth]{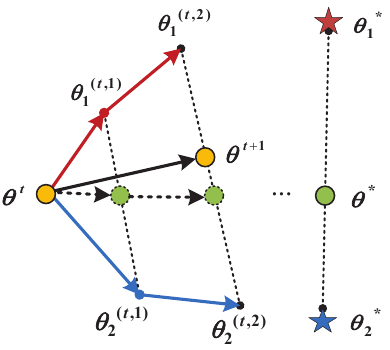}
	\caption{Model aggregation bias caused by data heterogeneity of \airfedavgm for 2 edge devices with 2 local updates, i.e., $N = 2$, and $E = 2$. The stars represent the local optima of two edge devices while the circle in green is the global optima. The circles in yellow represent the global model for \airfedavgm. The dotted circles in green illustrate the global model for fully synchronized \airfedavgs.}
	\label{fig: hetero}
	\vspace{-0.5cm}
\end{figure}

\begin{figure*}[tbp]
	\centering
	\includegraphics[width=1\linewidth]{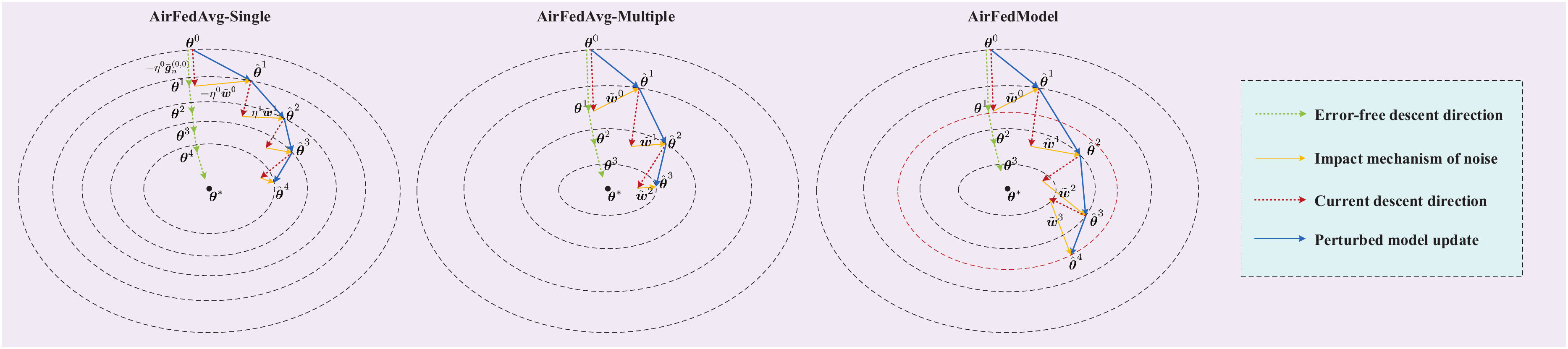}
	\caption{The impact of the additive noise on the convergence for \airfedavgs, \airfedavgm and \airfedmodel. The tighter the ellipse set is, the slower the convergence rate it represents. The dotted lines in green represent the error-free iterates of \fedavg and the dotted lines in red represent the current descent direction from the perturbed global model $\hat{\boldsymbol{\theta}}$. The yellow arrows are the impact mechanisms of the additive noise, $\tilde{\boldsymbol{w}}^t$ for \airfedavgm and \airfedmodel and  $-\eta^t\tilde{\boldsymbol{w}}^t$ for \airfedavgs. The noise vector leads to an offset on the descent direction of the initialization point and that of every global model hereafter. For \airfedavgm and \airfedavgs, existing approaches can eliminate the impact of noise as the training procedure proceeds, but not for \airfedmodel.}
	\label{fig: noiseimpact}
	\vspace{-0.5cm}
\end{figure*}

In a nutshell, we have the following conclusions for algorithm selection.
\begin{itemize}
	\item For the relatively simple learning task with strongly convex objectives, to support lightweight algorithm design, \airfedavgs is preferred with either constant or decaying learning rate to avoid non-diminishing optimality gap caused by multiple local updates, as in most recent works \cite{statisticsaware,RN37,RN86,zhu2020one,RN36,amiri2020federated,liu2020privacy,xiaowen2021optimized,sun2021dynamic,guo2021analog,wang2022IRS,liu2021reconfigurable,su2021data,guo2023dynamic}.
	\item For training non-convex DNNs and CNNs, the communication overhead is a critical issue due to the high dimensional model parameters. Since reducing the number of communication rounds between the edge devices and edge server alleviates the communication overhead, \airfedavgm algorithm is strongly recommended \cite{COTAF,wei2021federated,xiaowen2021transmission,xu2021LR,fan2021temporal,sifaou2021robust,shao2021misaligned,lin2022relay,jing2022Federated} based on the analysis above. 
	\airfedavgs is also a practical algorithm to train non-convex DNNs in some extreme scenarios, for example, highly non-IID data distribution across edge devices.
\end{itemize} 

\subsection{\airfedavg with Local Model}
As an equivalent variant of \fedavg, if the output of the local update operator is the local model parameter, i.e.,
\begin{align}\label{eq: model}
	\boldsymbol{z}_n^{t} = \bm \theta_{n}^{(t,E)},
\end{align}
the global model update becomes
\begin{align}\label{eq: AirFedModel}
	\hat{\bm{\theta}}^{t+1} = \hat{\boldsymbol{y}}^t,
\end{align}
for \airfedavg with local model (\airfedmodel). Besides, the impact mechanism of the additive noise for \airfedmodel and \airfedavgm is the same, i.e., adding the noise directly to the model aggregation step without multiplying the learning rate.

In this section, we will present the convergence results in \textbf{Corollary \ref{cor: case2unbiasedconvex}} for \airfedmodel by generalizing the analysis from Section \ref{sec: convm}.

\begin{thm}[Succession from \airfedavgm to \airfedmodel]
	In \textbf{Theorems \ref{thm: case3unbiasedconvex}} and \textbf{\ref{thm: case3unbiasednonconvex}}, we present the convergence results of \airfedavgm with strongly convex and non-convex objective functions, respectively. Due to the fact that the impact mechanism of the additive noise is identical for \airfedavgm and \airfedmodel, the convergence results of \airfedmodel is mathematically the same as that of \airfedavgm with the output of the local update operator being set as $\bm z_n^t = \bm \theta_{n}^{(t,E)}$, which reflects the flexibility and extensibility of our convergence analysis framework. 
%	{\color{blue} Explanations: in the model aggregation for \airfedavgm and \airfedmodel, see \eqref{eq: AirFL} and \eqref{eq: AirFedAvg-Multiple}, the noise is added directly. In contrast, for \airfedavgs, the error is multiplied by the learning rate as in \eqref{eq: AirFedAvg-Single}. In this case, when we do convergence analysis, \airfedavgm and \airfedmodel are the same mathematically.}
\end{thm}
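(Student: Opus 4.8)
The plan is to show that the one-step global recursion of \airfedmodel is algebraically identical to that of \airfedavgm once the broadcast-and-initialize convention is substituted in, after which every inequality in the proofs of \textbf{Theorems \ref{thm: case3unbiasedconvex}} and \textbf{\ref{thm: case3unbiasednonconvex}} applies verbatim with the transmitted signal taken to be $\bm z_n^t = \bm\theta_n^{(t,E)}$ as in \eqref{eq: model}. First I would note that Steps (a)--(b) are shared by both algorithms: the server broadcasts the noisy global model $\hat{\bm\theta}^t$, each device initializes $\bm\theta_n^{(t,0)} = \hat{\bm\theta}^t$, and runs the same $E$ local SGD steps \eqref{eq: definition gradient} to produce $\bm\theta_n^{(t,E)}$. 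The only divergence between the two schemes is therefore the transmitted signal and its matching aggregation rule in Steps (c)--(d).

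Next I would write the two recursions side by side. From \eqref{eq: AirFedModel} together with \eqref{eq: AirFL}, the \airfedmodel update reads
\begin{equation*}
	\hat{\bm\theta}^{t+1} = \hat{\bm y}^t = \sum_{n=1}^{N} p_n \bm\theta_n^{(t,E)} + \tilde{\bm w}^t,
\end{equation*}
whereas from \eqref{eq: AirFedAvg-Multiple}, \eqref{eq: cumulative gradient}, and \eqref{eq: AirFL} the \airfedavgm update reads
\begin{equation*}
	\hat{\bm\theta}^{t+1} = \hat{\bm\theta}^{t} + \sum_{n=1}^{N} p_n\left(\bm\theta_n^{(t,E)} - \bm\theta_n^{(t,0)}\right) + \tilde{\bm w}^t.
\end{equation*}
The crux is to substitute $\bm\theta_n^{(t,0)} = \hat{\bm\theta}^{t}$ and invoke $\sum_{n=1}^N p_n = 1$, which collapses $\sum_{n=1}^N p_n \bm\theta_n^{(t,0)} = \hat{\bm\theta}^{t}$ and cancels the leading $\hat{\bm\theta}^{t}$, reducing the \airfedavgm recursion to exactly the \airfedmodel one. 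Crucially, the additive noise $\tilde{\bm w}^t$ enters unscaled in both (it is \emph{not} multiplied by $\eta^t$), which is precisely the statement that the noise-impact mechanism is identical.

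Having matched the two driving recursions, I would observe that the model-aggregation error \eqref{eq: MAE} and its MSE \eqref{eq: MSE} are defined identically, and that the contraction argument for the strongly convex case and the descent-lemma argument for the non-convex case only ever invoke this recursion together with \textbf{Assumptions \ref{ass: smooth}--\ref{ass: bounded gradient dissimilarity}} on the local objectives and the unscaled noise term; none of them uses any property of $\bm z_n^t$ beyond the recursion itself. Consequently the bounds \eqref{eq: case3unbiasedconvex} and \eqref{eq: case3unbiasednonconvex}, stated in terms of ${\sf{MSE}}^t$, hold for \airfedmodel without modification, which is the content of the succession claim.

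The main subtlety to flag—rather than an obstacle in the proof—is the boundary of this ``mathematical sameness.'' While the abstract bounds in terms of ${\sf{MSE}}^t$ are identical, the physically realizable MSE under the power-constrained denoising factor \eqref{eq: precoding} is governed by $\|p_n \bm z_n^t\|_2^2$. For \airfedavgm this norm decays with $\eta^t$ since $\bm z_n^t$ is a model difference, so condition \eqref{eq: MSE condition} can be met; for \airfedmodel the norm $\|\bm\theta_n^{(t,E)}\|_2^2$ does not vanish as training proceeds, so \eqref{eq: MSE condition} may fail and term $(c)$ need not diminish. I would therefore close by emphasizing that the succession is exact at the level of the MSE-parameterized bounds, and that the qualitatively different convergence behavior of \airfedmodel emerges only after this identical bound is specialized through the transmit-power constraint—consistent with the earlier claim that transmitting the local model may cause divergence.
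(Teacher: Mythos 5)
Your proposal is correct and takes essentially the same route as the paper, which justifies this theorem by precisely the observation you make explicit: substituting $\bm\theta_n^{(t,0)}=\hat{\bm\theta}^{t}$ and $\sum_{n=1}^{N}p_n=1$ collapses the \airfedavgm recursion \eqref{eq: AirFedAvg-Multiple} into the \airfedmodel recursion \eqref{eq: AirFedModel} with the noise $\tilde{\bm w}^t$ entering unscaled in both, so the proofs of \textbf{Theorems \ref{thm: case3unbiasedconvex}} and \textbf{\ref{thm: case3unbiasednonconvex}} carry over verbatim with the bounds parameterized by ${\sf{MSE}}^t$. Your closing caveat—that the succession holds at the level of the ${\sf{MSE}}$-parameterized bounds while the realizable ${\sf{MSE}}^t$ under \eqref{eq: precoding} fails to decay for \airfedmodel since $\|\bm\theta_n^{(t,E)}\|_2^2$ does not vanish—matches the paper's own follow-up discussion via the bounded-$\Theta$ argument and its non-convergence conclusion.
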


Here we take the strongly convex case with $E = 1$ and a diminishing learning rate for case study.
\begin{cor}[Convergence of \airfedmodel ($E=1$) under Strong Convexity]\label{cor: case2unbiasedconvex}
	With \textbf{Assumptions \ref{ass: smooth}, \ref{ass: strong convex}}, \textbf{\ref{ass: gradient variance}}, and \textbf{\ref{ass: bounded gradient dissimilarity}}, if the decaying learning satisfies $0<\eta^t\leq \frac{1}{L}$, then the upper bound of the optimality gap after $T$ communication rounds is given by
	\begin{align}\label{eq: case2unbiasedconvex}\nonumber
		\mathbb{E}\left[F(\hat{\bm \theta}^{T})\right] - F^{\star}&\leq \left[F(\hat{\bm \theta}^{0}) - F^{\star}\right] Q^0 + C_{21} \sum_{t=0}^{T-1} (\eta^t)^2 Q^{t+1} \\
		&+ C_3 \sum_{t=0}^{T-1} {\sf{MSE}}^tQ^{t+1},
	\end{align}
	where $Q^t = \prod _{i=t}^{T-1}(1-\frac{\mu\eta^i}{2}), Q^T=1$, $C_{21}=L\sigma^2$, and $C_3=\frac{L}{2}$.
\end{cor}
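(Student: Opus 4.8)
The plan is to obtain the corollary as the $E=1$ specialization of the \airfedavgm analysis, justified by the \emph{Succession Theorem}, so that no genuinely new estimation is needed. First I would verify that the \airfedmodel recursion \eqref{eq: AirFedModel} with $E=1$ coincides with the \airfedavgm ($E=1$) recursion. Since the transmitted signal is $\bm z_n^t = \bm\theta_n^{(t,1)} = \hat{\bm\theta}^t - \eta^t\bm g_n^{(t,0)}$ and $\sum_{n=1}^N p_n = 1$, the noiseless aggregate is
\begin{align*}
\bar{\bm y}^t = \sum_{n=1}^N p_n\bm z_n^t = \hat{\bm\theta}^t - \eta^t\sum_{n=1}^N p_n\bm g_n^{(t,0)},
\end{align*}
so that $\hat{\bm\theta}^{t+1} = \bar{\bm y}^t + \bm\varepsilon^t = \hat{\bm\theta}^t - \eta^t\sum_{n=1}^N p_n\bm g_n^{(t,0)} + \bm\varepsilon^t$. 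This is exactly the \airfedavgm ($E=1$) update: the additive MAE $\bm\varepsilon^t$ enters the aggregation directly, unscaled by $\eta^t$, confirming the hypothesis of the Succession Theorem.

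With the recursions identified, I would invoke \textbf{Theorem \ref{thm: case3unbiasedconvex}} at $E=1$ and collapse its constants. The contraction factor $M^t=\prod_{i=t}^{T-1}(1-\tfrac{\mu\eta^i E}{2})$ becomes $Q^t=\prod_{i=t}^{T-1}(1-\tfrac{\mu\eta^i}{2})$; because $C_1\propto E(E-1)$ vanishes at $E=1$, the cubic term (a) disappears, and with it the BGD constants $\beta_1,\beta_2$ (consistent with \textbf{Remark \ref{rem: hetero}}); $C_2=LE\sum_{n}p_n^2\sigma_n^2$ reduces to $L\sigma^2=C_{21}$ with $\sigma^2:=\sum_{n=1}^N p_n^2\sigma_n^2$; and $C_3=\tfrac{L}{2}$ is unchanged. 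Substituting these into \eqref{eq: case3unbiasedconvex} reproduces \eqref{eq: case2unbiasedconvex}, with the generic ${\sf MSE}^t$ still given by \eqref{eq: MSE}.

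The one step I would not treat as pure bookkeeping is the step-size admissibility. The general bound requires $\eta^t\le\frac{1}{2LE}$, which at $E=1$ reads $\frac{1}{2L}$, whereas the corollary asserts the looser $\eta^t\le\frac{1}{L}$. The stricter condition in \textbf{Theorem \ref{thm: case3unbiasedconvex}} is present to control the multi-step client drift $\|\bm\theta_n^{(t,e)}-\hat{\bm\theta}^t\|_2$ accumulated over $E-1$ inner iterations; with a single local update this drift is absent, so I would re-run the $L$-smoothness descent step directly on the one-step recursion above to certify that $\eta^t\le\frac{1}{L}$ suffices. This is the main (and essentially the only) obstacle; everything else is a transparent specialization, and I would also flag that although the abstract inequality is inherited unchanged, the ${\sf MSE}^t$ here is computed for the full model signal $\bm\theta_n^{(t,1)}$ rather than a decaying difference signal, which is precisely why \airfedmodel cannot later enjoy the decaying-MSE guarantees of \airfedavgm and \airfedavgs.
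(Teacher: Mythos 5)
Your proposal is correct and follows the same route the paper implicitly takes: the corollary is stated as a case study of the Succession theorem, i.e., as \textbf{Theorem \ref{thm: case3unbiasedconvex}} specialized to $E=1$ with $\bm z_n^t=\bm\theta_n^{(t,1)}$, and your verification that the \airfedmodel recursion $\hat{\bm\theta}^{t+1}=\hat{\bm\theta}^t-\eta^t\sum_n p_n\bm g_n^{(t,0)}+\bm\varepsilon^t$ coincides with the \airfedavgm ($E=1$) recursion, together with the constant collapse $C_1\to 0$, $C_2\to L\sigma^2=C_{21}$, $M^t\to Q^t$, is exactly the intended bookkeeping. Where you add genuine value is the step-size admissibility, which the paper leaves unaddressed: verbatim substitution of $E=1$ into \textbf{Theorem \ref{thm: case3unbiasedconvex}} only certifies $\eta^t\le\frac{1}{2L}$, not the $\eta^t\le\frac{1}{L}$ asserted in the corollary. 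Your diagnosis of the source is accurate --- the factor $2$ enters through the inequality $\|\bm a+\bm b\|_2^2\le 2\|\bm a\|_2^2+2\|\bm b\|_2^2$ in \textbf{Lemma \ref{lem: Lemma 1}}, which is needed only because for $E>1$ the drifted averages $\bm h_n^t$ do not sum to $\nabla F(\hat{\bm\theta}^t)$; at $E=1$ one has $\sum_n p_n\bm h_n^t=\nabla F(\hat{\bm\theta}^t)$ exactly, so the exact bias--variance decomposition of Appendix \ref{sec: case1unbiased} (eq. \eqref{eq: Ebargradient}) applies and the descent step gives $-\eta^t\left(1-\frac{L\eta^t}{2}\right)\|\nabla F(\hat{\bm\theta}^t)\|_2^2+\frac{L(\eta^t)^2}{2}\sigma^2+\frac{L}{2}{\sf MSE}^t$, with the MAE contribution unscaled by $\eta^t$ since the noise enters the model update directly. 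Under $\eta^t\le\frac{1}{L}$ and strong convexity this yields a contraction factor $1-\mu\eta^t$ and constant $\frac{L}{2}\sigma^2$, both strictly stronger than the stated $Q^t=\prod_i\left(1-\frac{\mu\eta^i}{2}\right)$ and $C_{21}=L\sigma^2$ (note $\mu\le L$ guarantees $1-\mu\eta^t\ge 0$, so the weakening is legitimate), hence the corollary follows. Your closing observation is also the right one: the inherited inequality is agnostic to the form of $\bm z_n^t$, but ${\sf MSE}^t$ for the full model signal is $\mathcal{O}(1)$ rather than $\mathcal{O}((\eta^t)^2)$, which is precisely why term (c) is non-diminishing for \airfedmodel by \textbf{Lemma \ref{lem: sequence}}.
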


However, the optimality gap for strongly convex case and error bound for non-convex case are non-diminishing unless the MSE approaches to zero in each communication round when the model parameter is transmitted in \airfedmodel according to the analysis and results in Section \ref{sec: casestudy1}. 
In order to support this conclusion, suppose that each local model parameter is bounded by a constant, i.e.,
\begin{align*}%\label{eq: common bound}
	\mathbb{E}[\|\bm{\theta}_n^{(t,E)}\|_2^2]\leq \Theta^2,
\end{align*}
where $\Theta>0$ is a constant, then by \eqref{eq: precoding}, the model aggregation MSE can be written as
\begin{equation*}
	{\sf{MSE}}^t = \frac{\sigma_{w}^2\tilde{\Theta}^2}{dN^2P_0} = \frac{\tilde{\Theta}^2}{dN^2{\sf SNR}},
\end{equation*}
where $\tilde{\Theta}^2 = \Theta^2\max_{n\in \mathcal{N}}\frac{v_n^2}{|h_n^t|^2}$.
This indicates that the local model has neither decaying property like the accumulated local gradients (proportional to $\eta^2$) nor has the property as \airfedavgs, where the receiver noise is multiplied by the learning rate, as depicted in Fig. \ref{fig: noiseimpact}.

In this case, if the model parameters to be aggregated are perturbed, and the aggregation bias caused by stacked perturbations cannot be eliminated over time. This results in the model parameter being off the track, and the entire training procedure will be ruined.
Then our FL model is very likely to fail to converge, unless ${\sf{MSE}}^t\rightarrow0$, which approximates to the error-free case or under an infinite large SNR. 

For strongly convex cases, the non-convergence property of \airfedmodel is reflected in the non-diminishing optimality gap. However, for non-convex DNNs and CNNs, highly precise model parameter is required in the training process, which means the training loss of \airfedmodel may diverge after only a few communication rounds.

In fact, there are few works considering transmitting local model parameter via AirComp \cite{RN35,wei2021federated,wang2022edge,guo2022Joint}. We believe that \airfedmodel is not an appropriate choice for the aforementioned reasons. We will verify our findings in Section \ref{sec: exp}.

\section{Discussions}\label{sec: extensions}
In this section, we further extend the analysis of \airfedavgm and \airfedavgs with unbiased MAE to the biased MAE cases for practical implementations. Other extensions are also discussed.

\subsection{Signal Processing Schemes}
In Section \ref{sec: system model}, we adopt channel inversion \eqref{eq: inversion} with denoising factor \eqref{eq: precoding} for signal pre-processing and post-processing. However, there are also efforts that consider other signal pre-processing and post-processing methods and precoding (denoising) factors to improve the communication efficiency.

%However, this adopted scheme may potentially suffer from deep fading scenarios.  To address these issues, other signal pre-processing and post-processing methods and precoding techniques can be considered.
%
\subsubsection{Normalization Methods}
To facilitate power control, the transmitted symbols can be normalized to have zero mean and unit variance. The normalization and de-normalization procedure is summarized in \cite{RN35}. This signal pre-processing scheme facilitates the MSE minimization problem considered in \cite{RN34,wang2022IRS,xu2021LR}.

\subsubsection{Multi-antenna Scenarios}

In order to make full use of the spatial multiplexing, effectively resist the influence of multipath fading, and improve communication quality, multi-antenna technology has been widely applied. Although our work mainly focuses on the single-input-single-output (SISO) scenario, it can also be extended to the single-input-multiple-output (SIMO), multiple-input-single-output (MISO), and multiple-input-multiple-output (MIMO) scenarios for AirFL \cite{xiao2023otafl}. For each time slot, the estimated signal can be generalized as 
\begin{align}
	y := \frac{1}{\sqrt{\beta}}\sum_{n=1}^{N} b_n p_n z_n + \frac{1}{\sqrt{\beta}} w.
\end{align}
With $N_t$ antennas at edge devices and $N_r$ antennas at the edge server for SIMO, MISO and MIMO systems, the settings of $P_n$ and $w$ are shown in Table \ref{tab: table3}, where $\boldsymbol{m}\in \mathbb{C}^{N_r}$ denotes the receive beamforming vector. Then, a uniform-forcing based precoder can be designed based on Table \ref{tab: table3} \cite{chen2018auniform}. 

\begin{table}[h]	
	\caption{Parameter Settings for Transceivers with Different Number of Antennas}\label{tab: table3}
	\centering\renewcommand\arraystretch{1.2}
	%\fontsize{8}{10}\selectfont    %{字体尺寸}{行距}
	\begin{tabular}{c|cccc}
		%\toprule 
		\hline
		\diagbox {Value}{Scenario} & SISO & MISO & SIMO & MIMO \\ 
		\hline
		$b_n$ & $h_n\alpha_n$ & $\boldsymbol{h}_n^{\sf T}\boldsymbol{\alpha}_n$ &$\boldsymbol{m}^{\sf H}\boldsymbol{h}_n\alpha_n$ &$\boldsymbol{m}^{\sf H}\boldsymbol{H}_n\boldsymbol{\alpha}_n$ \\ \hline
		$w$ & $w$ & $ w $ &$ \boldsymbol{m}^{\sf H}\boldsymbol{w} $ & $ \boldsymbol{m}^{\sf H}\boldsymbol{w} $\\ \hline
		%$y$ &$y$ & $y$&$\boldsymbol{y}$ &$\boldsymbol{y}$
		%\\ \hline
		%\bottomrule
	\end{tabular}
\end{table}

For instance, for the SIMO scenario, the precoding factor can be set as
\begin{align}
	\alpha_n = \sqrt{\beta}\frac{(\boldsymbol{m}^{\sf H}\boldsymbol{h}_n)^{\sf H}}{\|\boldsymbol{m}^{\sf H}\boldsymbol{h}_n\|_2^2},
\end{align}
where the denoising factor is set based on the case study in the previous sections
\begin{align}
	\beta  := \min_{n\in \mathcal{N}}\frac{P_0\|\boldsymbol{m}^{\sf H}\boldsymbol{h}_n\|_2^2}{ |p_n z_n|_2^2}.
\end{align}
The MSE is thus given by
\begin{align}
	{\sf MSE} := \frac{\|\bm m\|_2^2\sigma_{w}^2}{\beta} = \frac{\|\bm m\|_2^2}{{\sf SNR}}\max_{n\in \mathcal{N}}\frac{|p_n z_n|_2^2}{\|\boldsymbol{m}^{\sf H}\boldsymbol{h}_n\|_2^2}.
\end{align}
This illustrates that our convergence analysis framework can be directly applied in multi-antenna scenarios, where the receive beamforming vector design \cite{RN34} and learning rate optimization \cite{xu2021LR} by MSE minimization were considered. 
%More research directions remain to be discovered under this setting.

\subsubsection{Compressive Sensing Scheme}
The communication overhead is enormous as the model dimensions of the DNNs and CNNs are typically high-dimensional.
To reduce the dimension of the gradient information to be transmitted by the edge devices, compressive sensing was considered in \cite{RN36,fan2021temporal,jeon2021compressive}. In the following, we take \airfedavgm with compressive sensing for a case study. In particular, the sparsification procedure is given by
\begin{align}
	\Delta\bm{\theta}_{n,t+1}^{sp} = \operatorname{Top}_k(\Delta\bm{\theta}^{t+1}_n),
\end{align}
where operator $\operatorname{Top}_k(\cdot)$ sets all elements of $\Delta\bm{\theta}^{t+1}_n$ to zero but the first $k$ largest ones. Subsequently, a consensus mapping matrix $\mathbf{A}$ is used for compression of the sparsified vector $\Delta\bm{\theta}_{n,t+1}^{sp}$, i.e.,
\begin{align}
	\Delta\bm{\theta}_{n,t+1}^{cp} = \mathbf{A}\Delta\bm{\theta}_{n,t+1}^{sp},
\end{align}
and the local information to be transmitted is $\bm z_n^t = \Delta\bm{\theta}_{n,t+1}^{cp}$.
In this case, together with the AirComp model in Section \ref{sec: system model}, the estimated signal at the edge server is given by
\begin{align}\label{eq: estimate signal 2}\nonumber
	\hat{\bm y }_{cp}^t=&\sum_{n=1}^{N}p_n \boldsymbol z_n^t+ \tilde{\boldsymbol w}^t\\
	=&\mathbf{A}\sum_{n=1}^{N}p_n \Delta\bm{\theta}_{n,t+1}^{sp} + \tilde{\bm w}^t = \mathbf{A}\bm{x}_{sp} + \tilde{\bm w}^t.
\end{align}
The edge server can recover the signal for model aggregation by applying the approximate message passing (AMP) algorithm \cite{donoho2009message}. In view of this, the model aggregation MSE can still be used for analyzing the impact of signal estimation error on the convergence of AirFL. 
Besides, quantization was also considered in \cite{zhu2020one}
%for digital modulation and analog aggregation 
to further reduce communication overhead. When the channel inversion precoder of the edge devices is imperfect due to the hardware limitation and inaccurate channel estimation along with the imperfect synchronization, the misaligned transmit scheme proposed in \cite{shao2021misaligned} can be adopted.

\subsection{Unbiased and Biased MAE}
As various communication schemes can be applied in AirFL, different schemes cause different forms of MAE. 
According to the statistical characteristics of MAE, we can divide the communication model of AirFL into the following two categories:
\begin{itemize}
	\item \textbf{Unbiased AirFL}: The MAE is unbiased from the statistical point view, where the error satisfies \begin{equation*}
		{\sf{Bias}}^t:= \mathbb{E}\left[ \bm \varepsilon^t \right]=0, ~{\sf{MSE}}^t:= \mathbb{E}\left[\left \| \bm \varepsilon^t\right \|_2^2 \right] \neq 0,
	\end{equation*}
	This indicates that the edge server receives an unbiased estimation of the transmitted signal;
	\item \textbf{Biased AirFL}: The MAE is biased with the error satisfying \begin{equation*}
		{\sf{Bias}}^t\neq0, ~{\sf{MSE}}^t\neq 0.
	\end{equation*}
	Opposite to the unbiased case, the edge server receives a biased estimation of the transmitted signal.
\end{itemize}
We next provide evidence for the classification.
With the channel inversion model in \eqref{eq: inversion} and \eqref{eq: AirFL}, it is guaranteed that the signal estimated by the edge server is only perturbed by the Gaussian noise. In other words, the MAE is given by
\begin{align}\label{eq: error fad}
	\bm \varepsilon^{t} = \tilde{\bm w}^t \sim \mathcal{N}(0,\frac{\sigma_w^2}{\beta^t}\bm{I}_d).
\end{align}
As such, by applying channel inversion to compensate for channel fading, the signal after post-processing at the edge server is an unbiased estimation of the transmitted one, which is a special case of unbiased AirFL.

Recently, only compensating for the phase of the fading channel first and then optimizing the power control scheme in AirFL was put forward and applied in \cite{RN37,xiaowen2021optimized,paul2021accelerated,xiaowen2021transmission,wang2022inference,guo2023dynamic}. The basic idea is that the amplitude alignment among the transmitted signals of edge devices is no longer required, e.g., the precoding factor can be set as
\begin{align}\label{eq: biased factor}
	\alpha_n^t  := \sqrt{\beta_n^t}\frac{(h_n^t)^{\sf{H}}}{|h_n^t|},
\end{align}
where $\sqrt{\beta_n^t}$ is the power control factor. The estimated signal at the edge server is given by
\begin{align}\label{eq: biased signal}\nonumber
	\hat{\bm y }^t:= \frac{ \hat{\bm s}^t}{\sqrt{\beta^t}}  \overset{\eqref{eq: received signal}}{=} &\frac{1}{\sqrt{\beta^t}}\sum_{n=1}^{N}h_n^t  \alpha_n^t p_n \boldsymbol z_n^t+ \tilde{\boldsymbol w}^t\\
	=&\frac{1}{\sqrt{\beta^t}}\sum_{n=1}^{N}|h_n^t|\sqrt{\beta_n^t}p_n \bm z_n^t + \tilde{\bm w}^t.
\end{align}
Compared with the channel inversion approach that compensates for both the amplitude and phase of the channel coefficient, this scenario only regards the magnitude of the channel coefficient as aggregation weights.

For the case without magnitude alignment, as presented in \eqref{eq: biased factor} and \eqref{eq: biased signal}, the MAE is given by
\begin{align}\label{eq: error biased}
	\bm \varepsilon^{t} = \sum_{n=1}^{N} \left(\frac{|h_n^t|\sqrt{\beta_n^t}}{\sqrt{\beta^t}}-1\right) p_n \bm z_n^t + \tilde{\bm w}^t.
\end{align}
In this case, the MAE is not guaranteed to have the zero-mean, which implies the MAE is biased.
For instance, the authors in \cite{xiaowen2021optimized} designed transmit power control policies in the fading scenario to minimize the optimality gap caused by the biased MAE. 
%The technique of matched filter was considered in \cite{RN37,paul2021accelerated} to maximize the received SNR. 
%Note that the compressive sensing model considered in the last subsection is a biased scheme as well.

To summarize, the biasedness of the MAE depends on the transceiver design for AirFL. 
As the convergence analysis of unbiased MAE has been conducted in previous sections, we shall present the influence caused by the biased MAE on the model convergence in the following.
%The biased MAE is implicit without specific expression in this paper and has a particular formulation for a particular system.
In this paper, we do not compare the learning performance under biased MAE with that of unbiased MAE, but point out the dominant terms of the convergence results under biased MAE for extension.

The results for \airfedavgm under biased MAE with error $\bm \varepsilon^{t}$ are summarized in \textbf{Theorem \ref{thm: case3biasedconvex}} for strongly convex cases and \textbf{Theorem \ref{thm: case3biasednonconvex}} for non-convex cases with a diminishing learning rate, respectively.
\begin{thm}[Convergence of \airfedavgm with Biased MAE under Strong Convexity]\label{thm: case3biasedconvex}
	Let \textbf{Assumptions \ref{ass: smooth}, \ref{ass: strong convex}}, \textbf{\ref{ass: gradient variance}}, and \textbf{\ref{ass: bounded gradient dissimilarity}} hold. By setting the decaying learning rate as $0<\eta^t\leq \min\left\{\frac{1}{L\sqrt{2E(E-1)(4\beta_1+1)}},\frac{1}{4LE}\right\}$, the upper bound on the optimality gap after $T$ communication rounds is given by
	\begin{align}\label{eq: theorem4}\nonumber
		&\mathbb{E}\left[F(\hat{\bm \theta}^{T})\right] - F^{\star}\\\nonumber
		\leq& \left[F(\hat{\bm \theta}^{0}) - F^{\star}\right] J^0 + C_1^{'} \sum_{t=0}^{T-1} (\eta^t)^3 J^{t+1}\\\nonumber
		+& C_2^{'} \sum_{t=0}^{T-1} (\eta^t)^2 J^{t+1} 
		+  \underbrace{\sum_{t=0}^{T-1} \eta^t \left(C_3^{'}+2L^2E {\sf{MSE}}^t \right)J^{t+1}}_{(d)} \\
		+&  \underbrace{C_4^{'} \sum_{t=0}^{T-1} {\sf{MSE}}^t  J^{t+1}}_{(e)}
		+ \underbrace{C_5^{'}\sum_{t=0}^{T-1} \frac{\left \| {\sf{Bias}}^t \right \|_2^2J^{t+1}}{\eta^t}}_{(f)},
	\end{align}
	where $J^t = \prod _{i=t}^{T-1}(1-\frac{\mu\eta^iE}{4})$, $J^T=1$, $C_1^{'}=\frac{L^2E(E-1)(4\beta_1+1) }{8\beta_1} \left[\bar{\sigma}^2 + 2E\beta_2\right]$, $C_2^{'}=LE\sigma^2$, $C_3^{'}=\frac{1}{4}\sigma^2$, $C_4^{'}=\frac{L}{2}$ and $C_5^{'}=\frac{1}{E}$.
\end{thm}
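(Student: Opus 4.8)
The plan is to reuse the one-step descent machinery behind \textbf{Theorem \ref{thm: case3unbiasedconvex}}, modifying it only where the nonzero mean of the MAE intervenes. First I would apply the $L$-smoothness inequality of \textbf{Definition \ref{def: L-smooth}} to the global update \eqref{eq: AirFedAvg-Multiple}, writing $\hat{\bm \theta}^{t+1}-\hat{\bm \theta}^t = \bar{\bm y}^t + \bm\varepsilon^t$ with $\bar{\bm y}^t = -\eta^t\sum_{n=1}^N p_n\sum_{e=0}^{E-1}\bm g_n^{(t,e)}$, to obtain
\begin{equation*}
F(\hat{\bm \theta}^{t+1}) \leq F(\hat{\bm \theta}^{t}) + \langle \nabla F(\hat{\bm \theta}^{t}), \bar{\bm y}^t + \bm\varepsilon^t\rangle + \frac{L}{2}\|\bar{\bm y}^t + \bm\varepsilon^t\|_2^2.
\end{equation*}

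Next I would take the conditional expectation over the stochastic gradients and the channel noise. The decisive departure from the unbiased analysis is the cross term $\mathbb{E}\langle \nabla F(\hat{\bm \theta}^t), \bm\varepsilon^t\rangle = \langle \nabla F(\hat{\bm \theta}^t), {\sf{Bias}}^t\rangle$, which no longer vanishes. I would control it with Young's inequality,
\begin{equation*}
\langle \nabla F(\hat{\bm \theta}^t), {\sf{Bias}}^t\rangle \leq \frac{\eta^t E}{4}\|\nabla F(\hat{\bm \theta}^t)\|_2^2 + \frac{1}{\eta^t E}\|{\sf{Bias}}^t\|_2^2,
\end{equation*}
which both siphons off one quarter of the gradient budget, forcing the contraction factor to weaken from $1-\mu\eta^t E/2$ to $1-\mu\eta^t E/4$, and produces term (f) with $C_5'=1/E$. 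The tightened stepsize ceiling ($4\beta_1$ in place of $2\beta_1$ and $1/(4LE)$ in place of $1/(2LE)$) is exactly what guarantees the remaining gradient budget stays nonnegative after the local-drift terms are subtracted.

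For the quadratic term I would split $\|\bar{\bm y}^t+\bm\varepsilon^t\|_2^2 \leq 2\|\bar{\bm y}^t\|_2^2 + 2\|\bm\varepsilon^t\|_2^2$ and recycle the unbiased bounds: the drift $\mathbb{E}\|\bar{\bm y}^t\|_2^2$ is decomposed via \textbf{Assumption \ref{ass: gradient variance}} and \textbf{Assumption \ref{ass: bounded gradient dissimilarity}} into the per-device variance, the non-IID dissimilarity, and the multiple-update drift, reproducing $C_1'$ and $C_2'$; the residual $\mathbb{E}\|\bm\varepsilon^t\|_2^2 = {\sf{MSE}}^t$ furnishes $C_4'=L/2$ in term (e). The coupling between the perturbed global model and the local drift, which routes the gradient-variance constant $\sigma^2$ together with an $L^2E\,{\sf{MSE}}^t$ correction through the surviving linear-in-$\eta^t$ budget, is what assembles term (d). After substituting the $\mu$-strong-convexity bound $\|\nabla F(\hat{\bm \theta}^t)\|_2^2 \geq 2\mu[F(\hat{\bm \theta}^t)-F^\star]$ from \textbf{Definition \ref{def: sc}}, the per-round inequality becomes the recursion $\mathbb{E}[F(\hat{\bm \theta}^{t+1})]-F^\star \leq (1-\mu\eta^t E/4)(\mathbb{E}[F(\hat{\bm \theta}^t)]-F^\star) + (\text{round-}t\text{ error})$, which I would unroll over $T$ rounds to introduce $J^{t+1}=\prod_{i=t+1}^{T-1}(1-\mu\eta^i E/4)$ and conclude \eqref{eq: theorem4}.

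The main obstacle will be the bookkeeping around the Young's-inequality split: the coefficient allocated to $\|\nabla F(\hat{\bm \theta}^t)\|_2^2$ must be chosen so that, after the multiple-local-update drift (which scales like $\eta^t E(E-1)$ under \textbf{Assumption \ref{ass: bounded gradient dissimilarity}}) is absorbed, a strictly positive multiple of $\|\nabla F(\hat{\bm \theta}^t)\|_2^2$ remains for strong convexity; it is this balance that dictates both the tighter learning-rate ceiling and the precise constants $C_1',\dots,C_5'$. A secondary subtlety is that the MSE now enters through two distinct channels, once linearly with $\eta^t$ in (d) and once directly in (e), so the second-moment expansion of $\bm\varepsilon^t$ must be tracked separately from its mean rather than merged as in the unbiased case.
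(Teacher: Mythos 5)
Your high-level skeleton matches the paper's Appendix: the bias enters through the first-order cross term $\langle\nabla F(\hat{\bm\theta}^t),\mathbb{E}[\bm\varepsilon^t]\rangle$, which the paper also controls by Young's inequality as $\frac{\eta^t E}{4}\|\nabla F(\hat{\bm\theta}^t)\|_2^2+\frac{1}{\eta^t E}\|{\sf{Bias}}^t\|_2^2$, producing term (f) with $C_5'=1/E$, weakening the contraction to $1-\mu\eta^tE/4$, and forcing $H^t\leq\frac{1}{4\beta_1+1}$ (hence the $4\beta_1+1$ and $\frac{1}{4LE}$ ceilings); the drift bounds via \textbf{Assumptions \ref{ass: gradient variance}} and \textbf{\ref{ass: bounded gradient dissimilarity}} and the unrolling with $J^{t+1}$ are likewise identical.

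The genuine gap is in your handling of the quadratic term. You propose $\|\bar{\bm y}^t+\bm\varepsilon^t\|_2^2\leq 2\|\bar{\bm y}^t\|_2^2+2\|\bm\varepsilon^t\|_2^2$, which discards the noise--gradient cross term entirely. But in the biased regime that cross term is not ignorable: under precoding without magnitude alignment (cf.\ \eqref{eq: error biased}) $\bm\varepsilon^t$ contains the transmitted $\bm z_n^t$ and is therefore correlated with the stochastic gradients, so $\mathbb{E}\bigl[(\bm\varepsilon^t)^{\sf T}\sum_{n}p_n\bm d_n^t\bigr]\neq 0$. The paper expands the square exactly and bounds this cross term by the asymmetric Young split $L\eta^tE\bigl(2L\,{\sf{MSE}}^t+\frac{1}{8L}\mathbb{E}\bigl[\|\sum_n p_n\bm d_n^t\|_2^2\bigr]\bigr)$; via \textbf{Lemma \ref{lem: Lemma 1}} the variance part of the second piece yields $\frac{\eta^t}{4}\sigma^2$, and this is exactly the origin of term (d), i.e., of $C_3'=\frac{1}{4}\sigma^2$ and the $2L^2E\,{\sf{MSE}}^t$ correction, while keeping $C_4'=\frac{L}{2}$ and $C_2'=LE\sigma^2$. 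Your crude split instead gives $C_4'=L$ and $C_2'=2LE\sigma^2$ and generates no linear-in-$\eta^t$ term at all, so the stated constants cannot be recovered; moreover, since $L\leq\frac{L}{2}+2L^2E\eta^t$ only at the boundary $\eta^t=\frac{1}{4LE}$, your bound does not imply the theorem's inequality either. Your attribution of term (d) to a "coupling between the perturbed global model and the local drift" is thus unsupported by your own decomposition — after the $2a^2+2b^2$ split no such coupling survives. The fix is to retain the exact cross term and apply the weighted Young's inequality above, as in the paper's Appendix \ref{sec: case3biased}.
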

\begin{proof}
	Please refer to Appendix \ref{sec: case3biased} for details.
\end{proof}
\begin{thm}[Convergence of \airfedavgm with Biased MAE under Non-convexity]\label{thm: case3biasednonconvex}
	With \textbf{Assumptions \ref{ass: smooth}}, \textbf{\ref{ass: gradient variance}}, and \textbf{\ref{ass: bounded gradient dissimilarity}}, if the decaying learning rate satisfies $0<\eta^t\leq \min\left\{\frac{1}{L\sqrt{2E(E-1)(4\beta_1+1)}},\frac{1}{4LE}\right\}$, then the weighted average norm of global gradients after $T$ communication rounds is upper bounded by
	\begin{align}\label{eq: theorem5}\nonumber
		&\frac{1}{\Phi}\sum\limits_{t = 0}^{T-1} \eta^t\mathbb{E} \left[\left \|\nabla F(\hat{\bm \theta}^{t})\right \|_2^2\right]\\\nonumber
		\leq& \frac{8 \left[F(\hat{\bm \theta}^{0}) - F^{\inf}\right] }{E\Phi}
		+  \frac{8 C_1^{'}}{E\Phi}\sum_{t = 0}^{T-1} (\eta^t)^3
		+  \frac{8 C_2^{'} }{E\Phi}\sum_{t = 0}^{T-1} (\eta^t)^2\\\nonumber
		+& \underbrace{ \frac{8  }{E\Phi}\sum_{t = 0}^{T-1} \eta^t \left(C_3^{'}+2L^2E{\sf{MSE}}^t\right)}_{(d)}+ \underbrace{\frac{ 8 C_4^{'}}{E\Phi} \sum_{t=0}^{T-1} {\sf{MSE}}^t}_{(e)}\\
		+& \underbrace{\frac{8C_5^{'}}{E\Phi}\sum_{t=0}^{T-1} \frac{\left \|{\sf{Bias}}^t\right \|_2^2}{\eta^t }}_{(f)}.
	\end{align}
\end{thm}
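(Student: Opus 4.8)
The plan is to follow the same one-step descent argument used for the unbiased non-convex result (\textbf{Theorem \ref{thm: case3unbiasednonconvex}}), but to carry the nonzero mean ${\sf{Bias}}^t$ of the aggregation error through every inner product it touches. Since the objective is only $L$-smooth, I would begin from the descent inequality in \textbf{Definition \ref{def: L-smooth}} applied to consecutive global iterates,
\begin{equation*}
	F(\hat{\bm\theta}^{t+1}) \leq F(\hat{\bm\theta}^{t}) + \langle \nabla F(\hat{\bm\theta}^{t}), \hat{\bm\theta}^{t+1}-\hat{\bm\theta}^{t}\rangle + \frac{L}{2}\|\hat{\bm\theta}^{t+1}-\hat{\bm\theta}^{t}\|_2^2,
\end{equation*}
and substitute the \airfedavgm update $\hat{\bm\theta}^{t+1}-\hat{\bm\theta}^{t} = \bar{\bm y}^t + \bm\varepsilon^t$ with $\bar{\bm y}^t = -\eta^t\sum_{n=1}^N p_n\sum_{e=0}^{E-1}\bm g_n^{(t,e)}$.

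Next I would take the conditional expectation and split $\bm\varepsilon^t$ into its mean ${\sf{Bias}}^t$ and a zero-mean fluctuation. The linear term produces three contributions. First, the ``signal'' inner product $-\eta^t\sum_{e}\sum_n p_n\langle\nabla F(\hat{\bm\theta}^t),\nabla F_n(\bm\theta_n^{(t,e)})\rangle$, which I would reshape with a polarization identity so that a descent term proportional to $-\eta^t E\|\nabla F(\hat{\bm\theta}^t)\|_2^2$ is exposed and the residual becomes a local-drift term $\|\nabla F_n(\bm\theta_n^{(t,e)})-\nabla F_n(\hat{\bm\theta}^t)\|_2^2\leq L^2\|\bm\theta_n^{(t,e)}-\hat{\bm\theta}^t\|_2^2$. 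Bounding this drift via the accumulated local updates together with \textbf{Assumptions \ref{ass: gradient variance}} and \textbf{\ref{ass: bounded gradient dissimilarity}} yields the $(\eta^t)^3$ and $(\eta^t)^2$ terms (a) and (b). Second, the cross term $\langle\nabla F(\hat{\bm\theta}^t),{\sf{Bias}}^t\rangle$ I would control by Young's inequality, absorbing a fraction of order $\eta^t E\|\nabla F(\hat{\bm\theta}^t)\|_2^2$ into the descent and leaving the $\|{\sf{Bias}}^t\|_2^2/\eta^t$ contribution that forms term (f). Third, the quadratic term $\tfrac{L}{2}\mathbb{E}\|\bar{\bm y}^t+\bm\varepsilon^t\|_2^2$ contributes the variance and gradient pieces together with $\tfrac{L}{2}{\sf{MSE}}^t$ (giving $C_4'=\tfrac{L}{2}$ in term (e)), while the coupling between the perturbed trajectory and the smoothness-based drift generates the extra $2L^2E\,{\sf{MSE}}^t$ appearing inside term (d).

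Collecting everything, I would arrive at a per-round recursion of the form
\begin{equation*}
	\mathbb{E}[F(\hat{\bm\theta}^{t+1})] \leq \mathbb{E}[F(\hat{\bm\theta}^{t})] - \frac{\eta^t E}{8}\mathbb{E}\|\nabla F(\hat{\bm\theta}^t)\|_2^2 + R^t,
\end{equation*}
where $R^t$ collects the terms proportional to $(\eta^t)^3$, $(\eta^t)^2$, $\eta^t(C_3'+2L^2E{\sf{MSE}}^t)$, ${\sf{MSE}}^t$, and $\|{\sf{Bias}}^t\|_2^2/\eta^t$. The learning-rate ceiling $\eta^t\leq\min\{\tfrac{1}{L\sqrt{2E(E-1)(4\beta_1+1)}},\tfrac{1}{4LE}\}$ is precisely what keeps the net coefficient of $\|\nabla F(\hat{\bm\theta}^t)\|_2^2$ negative after the drift and Young's-inequality terms are reabsorbed; the factor $4$ in the denominators, rather than the factor $2$ of the unbiased case, is what leaves room for the bias split. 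I would then telescope over $t=0,\ldots,T-1$, use $F(\hat{\bm\theta}^T)\geq F^{\inf}$, divide through by $\tfrac{E}{8}\Phi$ with $\Phi=\sum_{j=0}^{T-1}\eta^j$, and rearrange to obtain \eqref{eq: theorem5}.

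The main obstacle I anticipate is bookkeeping the bias consistently through the two places it enters---the linear cross term and the coupling term---while keeping the Young's-inequality split tight enough that the reserved descent fraction still dominates the drift and MSE-induced terms under the stated step-size bound. In particular, correctly tracing where the $2L^2E{\sf{MSE}}^t$ factor in term (d) originates (the interaction of the accumulated-gradient drift with the perturbed iterate) is the delicate part; the rest of the argument is a routine adaptation of the unbiased non-convex proof, and indeed the per-round inequality is shared with the strongly convex biased result of \textbf{Theorem \ref{thm: case3biasedconvex}}, the only difference being that here one telescopes the gradient-norm sum instead of forming a strong-convexity contraction.
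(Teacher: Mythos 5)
Your proposal follows the paper's proof essentially step for step: the same $L$-smoothness descent inequality on $\hat{\bm\theta}^{t+1}-\hat{\bm\theta}^{t}=\bm\varepsilon^t-\eta^tE\sum_{n}p_n\bm d_n^t$, Young's inequality on $\langle\nabla F(\hat{\bm\theta}^t),\mathbb{E}[\bm\varepsilon^t]\rangle$ yielding term (f) with $C_5'=1/E$, a second Young split on the coupling $-L\eta^tE\,\mathbb{E}[(\bm\varepsilon^t)^{\sf T}\sum_n p_n\bm d_n^t]$ producing the $2L^2E\,{\sf{MSE}}^t$ factor in term (d) (the paper uses the full second moment here, which is necessary since in the biased model $\bm\varepsilon^t$ is correlated with the transmitted gradients), the unbiased case's drift bounds giving $C_1'$ and $C_2'$, the condition $1-\tfrac{2H^t\beta_1}{1-H^t}\geq\tfrac{1}{2}$ explaining the $4\beta_1+1$ and $\tfrac{1}{4LE}$ step-size ceiling and the $\tfrac{\eta^tE}{8}$ descent coefficient, and finally telescoping with $F\geq F^{\inf}$ and dividing by $\tfrac{E}{8}\Phi$. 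The only imprecision is cosmetic: the $2L^2E\,{\sf{MSE}}^t$ term arises from the cross term in expanding $\tfrac{L}{2}\|\bm\varepsilon^t-\eta^tE\sum_n p_n\bm d_n^t\|_2^2$ rather than from the local-drift recursion, and, as you correctly observe, the paper derives one per-round inequality shared with \textbf{Theorem \ref{thm: case3biasedconvex}} and specializes it by telescoping here.
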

\begin{proof}
	Please refer to Appendix \ref{sec: case3biased}.
\end{proof}

\begin{Rem}\label{rem: remarkbiased}
	\textbf{Theorems \ref{thm: case3biasedconvex}} and \textbf{\ref{thm: case3biasednonconvex}} focus on the convergence results of the strongly convex and non-convex loss functions, respectively. We have the following key observations: term (d) is caused by gradient estimation variance and the model aggregation MSE; term (e) is caused by the model aggregation MSE; term (f) results from the biased MAE. The upper bound is a function of both the model aggregation bias and MSE.
\end{Rem}

According to \textbf{Lemmas \ref{lem: sequence}} and \textbf{\ref{lem: stolz}}, for any communication model with biased MAE, both the squared norm of bias, i.e., $\left \|{\sf{Bias}}^t\right \|_2^2$, and $ {\sf{MSE}}^t $ need to be designed or minimized to improve the model training performance of \airfedavgm for strongly convex and non-convex loss functions.
One possible solution is to jointly minimize $\left \|{\sf{Bias}}^t\right \|_2^2$ and $ {\sf{MSE}}^t $. Without loss of generality, the optimization problem can be established as
\begin{align}\label{eq: problem biased3}
	\mathscr{P}_1: 
	\begin{array}{ll}
		\text { minimize } &\lambda {\sf{MSE}}^t+(1-\lambda)\left \|{\sf{Bias}}^t\right \|_2^2\\
		\text { subject~to } & 0\leq\beta_n^t\leq P^{\max}_n, \forall n \in \mathcal{N},  t \in [T],
	\end{array}
\end{align}
where $P^{\max}_n$ is the maximum transmit power for edge device $n$.
This key observation is in line with the work in \cite{xiaowen2021transmission}. Another possible solution is to ensure these two variables satisfy the following condition with learning rate decay:
\begin{equation}
	{\sf{MSE}}^t\sim (\eta^t)^{\rho_1},~{\sf{Bias}}^t\sim (\eta^t)^{\rho_2},~\rho_1,\rho_2>1.
\end{equation}
However, this method may be difficult to be applied in practical systems.

To conclude, we mainly discuss the convergence results and intuitions for \airfedavgm with biased MAE, which are corresponding to the analysis in Section \ref{sec: convm}. In comparison, when the MAE is unbiased, the upper bound is a function of the model aggregation MSE. Otherwise it is related to both the squared norm of bias and the MSE.
%, which creates new performance metric for optimization problems in \airfedavgm.

Besides, the convergence results for \airfedavgs with biased MAE are summarized in \textbf{Theorem \ref{thm: case1biasedconvex}} for strongly convex cases and \textbf{Theorem \ref{thm: case1biasednonconvex}} for non-convex cases, respectively.

\begin{thm}[Convergence of \airfedavgs with Biased MAE under Strong Convexity]\label{thm: case1biasedconvex}
	With \textbf{Assumptions \ref{ass: smooth}, \ref{ass: strong convex}} and \textbf{\ref{ass: gradient variance}}, if the decaying learning rate with $0<\eta^t\leq\frac{1}{4L}$, then the upper bound on the cumulative gap after $T$ communication rounds is given by
	\begin{align}\nonumber
		&\mathbb{E}\left[F(\hat{\bm \theta}^{T})\right] - F^{\star}\\\nonumber
		\leq& \left[F(\hat{\bm \theta}^{0}) - F^{\star}\right] Q^0 + \frac{1}{2} \sum_{t=0}^{T-1} \eta^t \left \|{\sf{Bias}}^t\right \|_2^2 Q^{t+1}\\
		+& \frac{L}{2} \sum_{t=0}^{T-1} (\eta^t)^2\left[ {\sf{MSE}}^t + \left \|{\sf{Bias}}^t\right \|_2^2+ \sigma^2 \right] Q^{t+1}.
	\end{align}
\end{thm}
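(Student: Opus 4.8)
The plan is to run the standard one-step descent recursion on the noisy global model $\hat{\bm\theta}^t$ and then unroll it into the product $Q^t$. Writing the \airfedavgs update \eqref{eq: AirFedAvg-Single} as $\hat{\bm\theta}^{t+1}=\hat{\bm\theta}^t-\eta^t(\bar{\bm g}^t+\bm\varepsilon^t)$ with $\bar{\bm g}^t:=\sum_{n=1}^N p_n\bm g_n^{(t,0)}$, I would first apply the $L$-smoothness descent inequality of \textbf{Definition \ref{def: L-smooth}} with $\bm x=\hat{\bm\theta}^{t+1}$ and $\bm y=\hat{\bm\theta}^t$, which gives
\begin{equation*}
	F(\hat{\bm\theta}^{t+1})\leq F(\hat{\bm\theta}^t)-\eta^t\langle\nabla F(\hat{\bm\theta}^t),\bar{\bm g}^t+\bm\varepsilon^t\rangle+\frac{L(\eta^t)^2}{2}\|\bar{\bm g}^t+\bm\varepsilon^t\|_2^2.
\end{equation*}
Taking the conditional expectation $\mathbb{E}_t[\cdot]$ over the stochastic gradients and the MAE, I would use \textbf{Assumption \ref{ass: gradient variance}} to get $\mathbb{E}_t[\bar{\bm g}^t]=\nabla F(\hat{\bm\theta}^t)$ together with the biased-MAE moments $\mathbb{E}_t[\bm\varepsilon^t]={\sf Bias}^t$ and $\mathbb{E}_t[\|\bm\varepsilon^t\|_2^2]={\sf MSE}^t$.

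Next I would bound the two resulting terms separately. For the linear term, $\mathbb{E}_t\langle\nabla F,\bar{\bm g}^t+\bm\varepsilon^t\rangle=\|\nabla F\|_2^2+\langle\nabla F,{\sf Bias}^t\rangle$, and I would control the cross term by Young's inequality, $\langle\nabla F,{\sf Bias}^t\rangle\geq-\tfrac12\|\nabla F\|_2^2-\tfrac12\|{\sf Bias}^t\|_2^2$; this is precisely where the bias enters at order $\eta^t$. For the quadratic term I would split mean and fluctuation, bound $\|\nabla F+{\sf Bias}^t\|_2^2\leq 2\|\nabla F\|_2^2+2\|{\sf Bias}^t\|_2^2$, use the aggregated-gradient variance $\sum_{n}p_n^2\sigma_n^2=\sigma^2$ and the identity ${\sf Var}_t(\bm\varepsilon^t)={\sf MSE}^t-\|{\sf Bias}^t\|_2^2$, yielding $\mathbb{E}_t\|\bar{\bm g}^t+\bm\varepsilon^t\|_2^2\leq 2\|\nabla F\|_2^2+\|{\sf Bias}^t\|_2^2+\sigma^2+{\sf MSE}^t$. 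Substituting both bounds, the net coefficient of $\|\nabla F\|_2^2$ becomes $-\eta^t(\tfrac12-L\eta^t)$, which the learning-rate restriction $\eta^t\leq\tfrac{1}{4L}$ forces to be at most $-\tfrac{\eta^t}{4}$.

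I would then convert the gradient-norm term to an optimality gap using the PL inequality of \textbf{Definition \ref{def: PL}} (implied by $\mu$-strong convexity, \textbf{Assumption \ref{ass: strong convex}}), namely $\|\nabla F(\hat{\bm\theta}^t)\|_2^2\geq 2\mu[F(\hat{\bm\theta}^t)-F^{\star}]$, producing the contraction
\begin{equation*}
	\mathbb{E}_t[F(\hat{\bm\theta}^{t+1})]-F^{\star}\leq\Big(1-\tfrac{\mu\eta^t}{2}\Big)\big[F(\hat{\bm\theta}^t)-F^{\star}\big]+\tfrac{\eta^t}{2}\|{\sf Bias}^t\|_2^2+\tfrac{L(\eta^t)^2}{2}\big[{\sf MSE}^t+\|{\sf Bias}^t\|_2^2+\sigma^2\big].
\end{equation*}
Taking total expectation and unrolling this scalar recursion from $t=0$ to $T-1$ with the factor $Q^t=\prod_{i=t}^{T-1}(1-\tfrac{\mu\eta^i}{2})$ gives the initial term $[F(\hat{\bm\theta}^0)-F^{\star}]Q^0$ and accumulates each perturbation with weight $Q^{t+1}$, recovering the claimed bound exactly.

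The main obstacle is the bookkeeping of the bias rather than any deep estimate: the cross term $\langle\nabla F,{\sf Bias}^t\rangle$ injects the bias at order $\eta^t$, strictly lower than the $(\eta^t)^2$ order of the MSE and gradient-variance terms, so ${\sf Bias}^t$ becomes the dominant (slowest-decaying) error and must be kept in its own sum with coefficient $\tfrac12$. A secondary delicate point is the variance decomposition of $\bar{\bm g}^t+\bm\varepsilon^t$: since for \airfedavgs the biased MAE in \eqref{eq: error biased} depends on the transmitted gradients $\bm z_n^t=\bm g_n^{(t,0)}$, one must justify dropping (or absorbing) the cross covariance between the channel fluctuation and the stochastic-gradient noise, after which the clean split into $\sigma^2$ and ${\sf MSE}^t-\|{\sf Bias}^t\|_2^2$ is legitimate. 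Tracking the exact constant $\tfrac14$ is what pins down the learning-rate constraint $\eta^t\leq\tfrac{1}{4L}$ stated in the theorem.
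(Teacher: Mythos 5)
Your proposal is correct and follows essentially the same route as the paper's proof in Appendix \ref{sec: case1biased}: the $L$-smoothness descent step, Young's inequality injecting $\|{\sf{Bias}}^t\|_2^2$ at order $\eta^t$, the bound $\mathbb{E}\|\bar{\bm y}^t\|_2^2\leq\sigma^2+\|\nabla F\|_2^2$, the $\eta^t\leq\frac{1}{4L}$ restriction yielding the $-\frac{\eta^t}{4}$ coefficient, strong convexity via the PL inequality, and the unrolled recursion with $Q^t$. Your mean--variance decomposition of the quadratic term (using ${\sf Var}_t(\bm\varepsilon^t)={\sf{MSE}}^t-\|{\sf{Bias}}^t\|_2^2$) is only a cosmetic rearrangement of the paper's direct expansion plus a second Young's inequality -- both give the identical bound $2\|\nabla F\|_2^2+\|{\sf{Bias}}^t\|_2^2+\sigma^2+{\sf{MSE}}^t$ -- and your explicit flagging of the dropped cross-covariance between gradient noise and the MAE is a point the paper handles only implicitly.
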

\begin{proof}
	Please refer to Appendix \ref{sec: case3biased}.
\end{proof}
\begin{thm}[Convergence of \airfedavgs with Biased MAE under Non-convexity]\label{thm: case1biasednonconvex}
	With \textbf{Assumptions \ref{ass: smooth}}, \textbf{\ref{ass: gradient variance}}, and \textbf{\ref{ass: bounded gradient dissimilarity}}, if the decaying learning rate satisfies $0<\eta^t\leq \frac{1}{4L}$, then the weighted average norm of global gradients after $T$ communication rounds is upper bounded by
	\begin{align}\nonumber
		&\frac{1}{\Phi}\sum\limits_{t = 0}^{T-1} \eta^t\mathbb{E} \left[\left \|\nabla F(\hat{\bm \theta}^{t})\right \|_2^2\right]\\\nonumber
		\leq& \frac{4}{\Phi} \left[F(\hat{\bm \theta}^{0}) - F^{\inf}\right] +\frac{2}{\Phi}\sum_{t=0}^{T-1} \eta^t\left \|{\sf{Bias}}^t\right \|_2^2\\
		+&\frac{2L}{\Phi} \sum_{t=0}^{T-1} (\eta^t)^2\left[{\sf{MSE}}^t + \left \| {\sf{Bias}}^t \right \|_2^2+ \sigma^2 \right].
	\end{align}
\end{thm}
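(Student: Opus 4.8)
The plan is to run the standard non-convex descent argument, treating the biased MAE $\bm\varepsilon^t$ as a perturbation characterized only by its first two moments ${\sf Bias}^t=\mathbb{E}[\bm\varepsilon^t]$ and ${\sf MSE}^t=\mathbb{E}[\|\bm\varepsilon^t\|_2^2]$ (and, as in the channel-inversion model, independent of the data sampling). Writing the \airfedavgs update \eqref{eq: AirFedAvg-Single} as $\hat{\bm\theta}^{t+1}=\hat{\bm\theta}^{t}-\eta^t\bm d^t$ with effective direction $\bm d^t:=\sum_{n=1}^N p_n\bm g_n^{(t,0)}+\bm\varepsilon^t$, I would first apply the $L$-smoothness descent inequality of Definition \ref{def: L-smooth}, $F(\hat{\bm\theta}^{t+1})\leq F(\hat{\bm\theta}^{t})-\eta^t\langle\nabla F(\hat{\bm\theta}^{t}),\bm d^t\rangle+\tfrac{L(\eta^t)^2}{2}\|\bm d^t\|_2^2$, and then take the expectation conditioned on $\hat{\bm\theta}^t$.

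Using the unbiasedness part of \textbf{Assumption \ref{ass: gradient variance}}, $\mathbb{E}[\bm g_n^{(t,0)}]=\nabla F_n(\hat{\bm\theta}^{t})$, together with $\sum_n p_n\nabla F_n=\nabla F$, the conditional mean of the direction is $\nabla F(\hat{\bm\theta}^{t})+{\sf Bias}^t$. The first-order term thus contributes $-\eta^t\|\nabla F(\hat{\bm\theta}^{t})\|_2^2-\eta^t\langle\nabla F(\hat{\bm\theta}^{t}),{\sf Bias}^t\rangle$, and I would tame the cross term by Young's inequality, $-\langle\nabla F,{\sf Bias}^t\rangle\leq\tfrac12\|\nabla F\|_2^2+\tfrac12\|{\sf Bias}^t\|_2^2$. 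For the second-order term I would use the mean–variance split $\mathbb{E}[\|\bm d^t\|_2^2]=\|\nabla F+{\sf Bias}^t\|_2^2+\mathbb{E}\|\bm d^t-\mathbb{E}[\bm d^t]\|_2^2$, bounding the variance of the centred direction by the gradient-sampling variance $\sigma^2=\sum_n p_n^2\sigma_n^2$ (which uses independence across devices and the bounded-variance part of \textbf{Assumption \ref{ass: gradient variance}}) plus the centred MAE energy ${\sf MSE}^t-\|{\sf Bias}^t\|_2^2$, and then applying $\|\nabla F+{\sf Bias}^t\|_2^2\leq 2\|\nabla F\|_2^2+2\|{\sf Bias}^t\|_2^2$. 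This gives $\mathbb{E}[\|\bm d^t\|_2^2]\leq 2\|\nabla F\|_2^2+\|{\sf Bias}^t\|_2^2+\sigma^2+{\sf MSE}^t$.

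Substituting both bounds back, the net coefficient of $\|\nabla F(\hat{\bm\theta}^{t})\|_2^2$ becomes $-\eta^t+\tfrac{\eta^t}{2}+L(\eta^t)^2=-\eta^t(\tfrac12-L\eta^t)$, and the hypothesis $\eta^t\leq\tfrac{1}{4L}$ guarantees $\tfrac12-L\eta^t\geq\tfrac14$, so this coefficient is at most $-\tfrac{\eta^t}{4}$. Rearranging the per-round inequality isolates $\tfrac{\eta^t}{4}\mathbb{E}[\|\nabla F(\hat{\bm\theta}^{t})\|_2^2]$, with $F(\hat{\bm\theta}^{t})-\mathbb{E}[F(\hat{\bm\theta}^{t+1})]$, the term $\tfrac{\eta^t}{2}\|{\sf Bias}^t\|_2^2$, and $\tfrac{L(\eta^t)^2}{2}\big[\|{\sf Bias}^t\|_2^2+\sigma^2+{\sf MSE}^t\big]$ on the right. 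Taking total expectation, summing over $t\in[T]$ so that the function values telescope to $F(\hat{\bm\theta}^{0})-\mathbb{E}[F(\hat{\bm\theta}^{T})]\leq F(\hat{\bm\theta}^{0})-F^{\inf}$ (since $F$ is lower bounded by $F^{\inf}$), and dividing by $\Phi/4=\tfrac14\sum_j\eta^j$ reproduces the claimed bound term by term.

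The main obstacle is the second-order term: unlike the unbiased case (\textbf{Theorem \ref{thm: case1unbiasednonconvex}}), the nonzero mean of $\bm\varepsilon^t$ blocks the direct cancellation, so I must simultaneously peel the bias off the conditional mean via the factor-two expansion without losing too much of the negative $\|\nabla F\|_2^2$ drift, and keep the MSE and variance contributions cleanly separated from the bias so that the final $(\eta^t)^2$ bracket reads exactly ${\sf MSE}^t+\|{\sf Bias}^t\|_2^2+\sigma^2$. It is precisely the interplay between the Young's-inequality cross term in the first-order part and the factor-two expansion in the second-order part that fixes the net drift coefficient and pins down the step-size restriction $\eta^t\leq\tfrac{1}{4L}$; the remaining telescoping is routine. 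I note that, consistent with \textbf{Remark \ref{rem: hetero}}, this single-update argument does not actually invoke the bounded-gradient-dissimilarity \textbf{Assumption \ref{ass: bounded gradient dissimilarity}}, because ${\sf Bias}^t$ and ${\sf MSE}^t$ are carried abstractly rather than expanded through the per-device gradients.
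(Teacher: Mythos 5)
Your proposal is correct and follows essentially the same route as the paper's proof in Appendix \ref{sec: case1biased}: smoothness descent, Young's inequality on the bias cross term, $\mathbb{E}[\|\bar{\bm y}^t\|_2^2]\leq \sigma^2+\|\nabla F(\hat{\bm\theta}^t)\|_2^2$, the net drift coefficient $-\eta^t(\tfrac12-L\eta^t)\leq-\tfrac{\eta^t}{4}$ under $\eta^t\leq\tfrac{1}{4L}$, and telescoping. Your mean--variance packaging of the second-order term is only a cosmetic rearrangement of the paper's expansion of $\|\bar{\bm y}^t+\bm\varepsilon^t\|_2^2$ (the factor-two bound on $\|\nabla F+{\sf Bias}^t\|_2^2$ is the same inequality as the paper's Young step), yielding the identical bracket ${\sf MSE}^t+\|{\sf Bias}^t\|_2^2+\sigma^2$, and your observation that \textbf{Assumption \ref{ass: bounded gradient dissimilarity}} is never invoked matches the paper's actual argument.
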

\begin{proof}
	Please refer to Appendix \ref{sec: case3biased}.
\end{proof}

Combining the above two theorems, we notice that the upper bound is dominated by the square norm of bias, i.e., $\left \|{\sf{Bias}}^t\right \|_2^2$, which converges to an error floor if $\left \|{\sf{Bias}}^t\right \|_2^2$ is a constant. Compared to \airfedavgs with unbiased MAE, the main challenge is to minimize $\left \| {\sf{Bias}}^t \right \|_2^2$ for achieving better training performance of \airfedavgs with biased MAE for both strongly convex and non-convex objectives. Our observation is in line with the work in \cite{xiaowen2021optimized}, which focuses on \airfedavgs with biased MAE. In addition, another possible solution with learning rate decay is that
\begin{equation}
	{\sf{Bias}}^t\sim (\eta^t)^{\rho},~\rho>0.
\end{equation}
It is clear that with biased MAE, \airfedavgs is different from \airfedavgm in the dominant term of the optimality gap for strongly convex objectives and the error bound for non-convex ones. 

In a nutshell, we discuss the practical signal processing approaches and their impact on MAE in this section. Then we provide convergence results for \airfedavgm and \airfedavgs with biased MAE.
%Other insights and extensions will be placed in the following subsections.

%\subsection{Extensions and Considerations}
\subsection{Extension to Other Optimization Methods}
\subsubsection{Different Local Optimizers}
This paper mainly considers first-order optimization methods for local model updating, i.e., vanilla SGD. In fact, the local optimizer can be extended to other gradient-based methods, e.g., SGD with proximal gradients, momentum \cite{wang2020tackling,paul2021accelerated}.
Moreover, 
%not only can the gradient-based local optimizer be considered, but also other properties and information of the local objective function could be exploited. To be more specific, 
ideas and insights obtained in this paper can also be applied in AirFL with zeroth-order optimization \cite{fang2022communication} and second-order optimization \cite{hua2019secondorder,yang2022secondorder}.
\subsubsection{Different Optimization Methods}
The aforementioned optimization methods are all primal. AirFL with Peaceman-Rachford splitting was first considered in \cite{xia2020fast}, which can be classified as a primal-dual method, to address the non-optimality of \airfedavgm with a constant learning rate. By applying the similar convergence analysis approach, the illustration of bounding the MAE is given in \cite[\textbf{Theorem 2}]{xia2020fast}. Without gradient information, it is worth investigating that whether transmitting local model parameter via AirComp leads to convergence after multiple communication rounds.

\subsection{Extension to Other Network Architectures}
\subsubsection{Decentralized Networks}
This paper considers a star-topology network consisting of a central edge server, and $N$ distributed edge devices, where the device-to-server (D2S) communication links are dominated. To alleviate the communication overhead caused by D2S links, decentralized networks exploiting device-to-device (D2D) links with an arbitrary topology has recently attracted a lot of attention \cite{jiang2022graph}, e.g., decentralized (gossip) SGD \cite{chen2021accelerating,koloskova2020unified} and hybrid SGD \cite{guo2022hybrid}. Besides, AirFL in wireless D2D networks was studied in \cite{xing2021D2D}.
\subsubsection{Multi-cell Networks}
Due to the mobility of the edge devices and the rapid development in connected intelligence, the co-existence of different FL tasks in multi-cell networks will become a new paradigm in the future. \cite{wang2022inference} extends AirFL to multi-cell networks with different learning tasks for inter-cell interference management. In addition, End-Edge-Cloud hierarchical FL architecture has been proposed to exploit more data while reducing the costly communication with the cloud \cite{liu2020client,abad2020hierarchical}. Convergence analysis for hierarchical AirFL has been studied in \cite{aygun2022over}.

\subsubsection{Other Networks}
To overcome the detrimental effect of channel fading in wireless networks, RIS has been proposed to reconfigure the propagation channels and thus support reliable model aggregation of AirFL \cite{2020yangRIS,liu2021reconfigurable,wang2022IRS,yang2022differentially}. In particular, the convergence of RIS-assisted AirFL can still be covered in this tutorial, which only leads to different expressions for MSE or bias.
Moreover, unlike static networks considered in this tutorial, UAV or satellite aided FL systems \cite{fu2022UAV,fu2022uavmulti,fu2022federated,razmi2022ground,zhou2023towards}, which consider mobile edge server or edge devices, lead to new challenges in terms of communication schemes and algorithm design. There are also open problems and interesting directions for future study.

\begin{figure*}[tbp]
	\centering
	\subcaptionbox{Fixed SNR $=5$dB\label{fig: result1a}}{\includegraphics[width=0.49\linewidth]{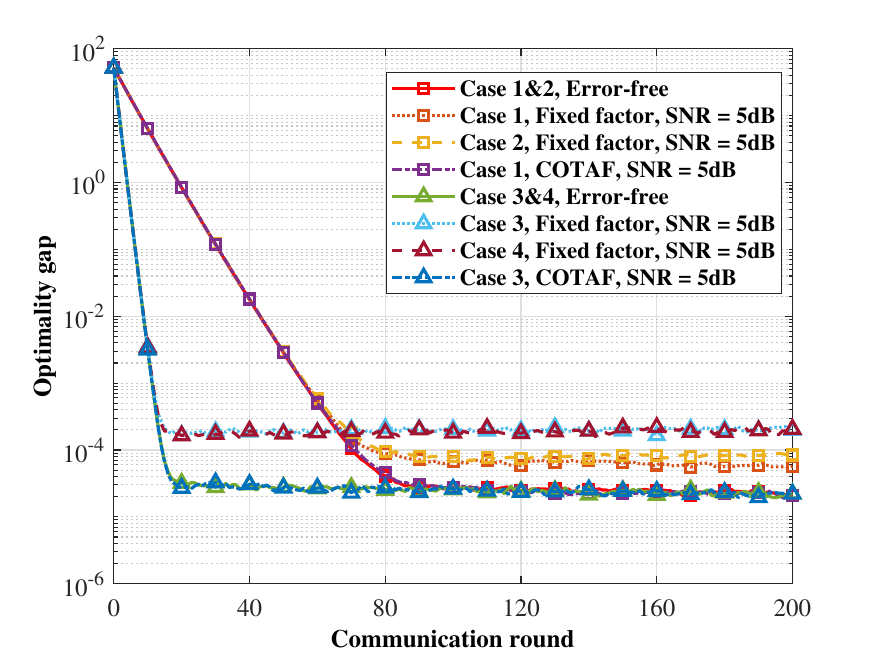}}\hfil
	\subcaptionbox{Fixed SNR $=0$dB \label{fig: result1b}}{\includegraphics[width=0.49\linewidth]{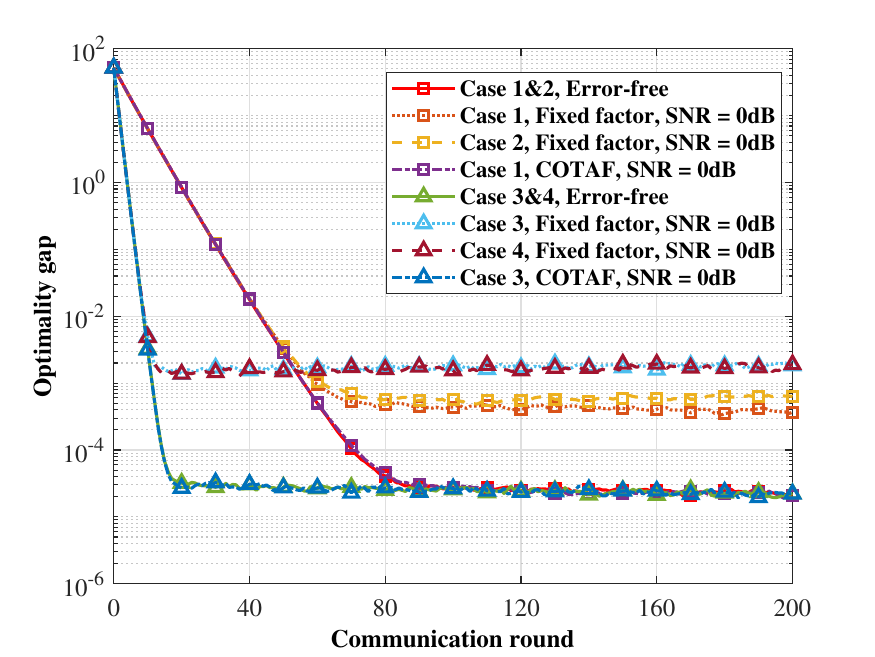}}\hfil
	\subcaptionbox{Case 1 $\&$ 2\label{fig: result1c}}{\includegraphics[width=0.49\linewidth]{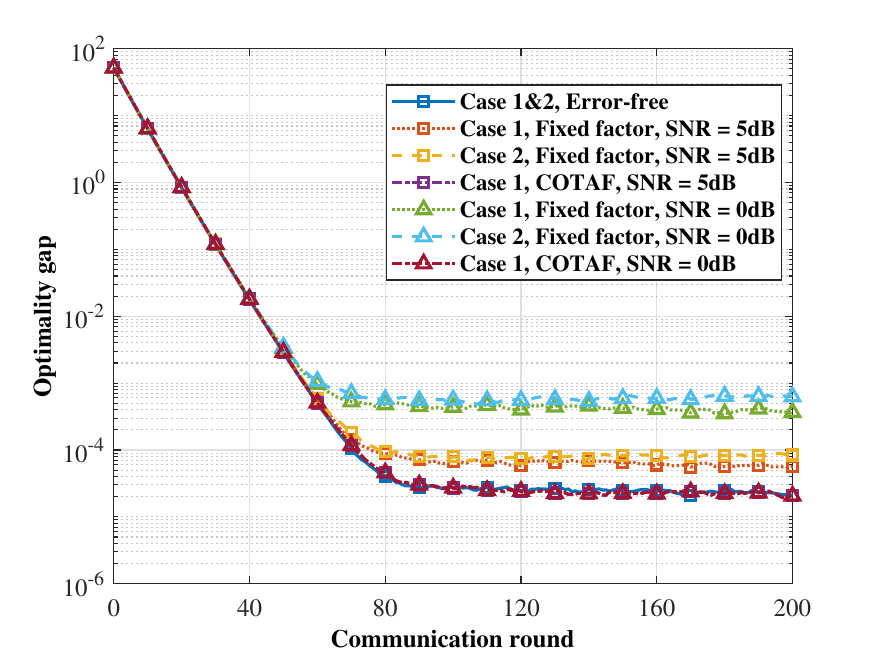}}\hfil
	\subcaptionbox{Case 3 $\&$ 4 \label{fig: result1d}}{\includegraphics[width=0.49\linewidth]{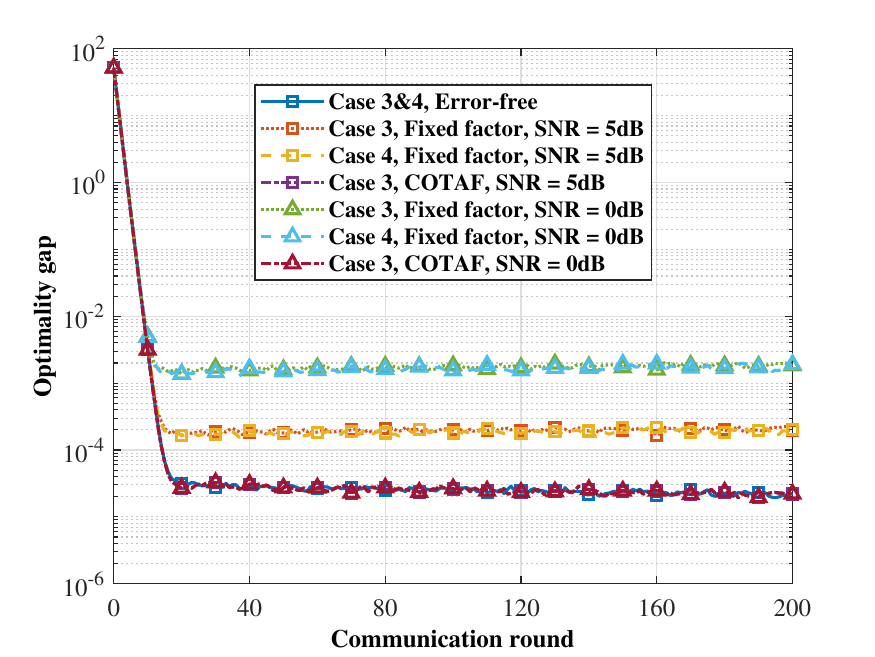}}\hfil
	\caption{Simulation results for linear regression on the synthetic dataset.}
	\label{fig: result1}
	\vspace{-0.5cm}
\end{figure*}
\subsection{Extension to Partial Device Participation}\label{sec: extensions and future work}
This paper mainly considers the convergence of \airfedavg with full device participation. However, due to the limited communication resources and the heterogeneous channel conditions, only a part of edge devices may participate in the model training. For instance, in \cite{COTAF,xia2020fast,fang2022communication}, the authors adopted a threshold-based method as described in \eqref{eq: truncated} to filter out edge devices with poor channel conditions, leading to partial device participation. The convergence of \airfedavg with partial device participation is worth further study.

Partial device participation can be divided into the following three cases. 
\begin{itemize}
	\item Non-uniform sampling of edge devices. The edge server actively schedules a part of edge devices before the global model dissemination non-uniformly. Different scheduling policies based on edge devices' computation capability, channel condition, norm of local model, beamforming, or contribution can be applied to select a subset of edge devices \cite{yang2019scheduling,ren2020scheduling,RN34,ConvergenceTime,AJoint,JointDevice,guo2022Joint,ali2022matching}.
	\item Uniform sampling of edge devices. In this category, the probability of each edge device participating in the training process in each communication round is uniform \cite{Mohamed2021privacy}.
	\item Arbitrary participation of edge devices. Edge devices participate in the model training in an arbitrary way, i.e., arriving and leaving the system without grant of edge server, which makes the convergence of FL algorithm more complicated \cite{ruan2021towards,gu_fast_2021,yang2022anarchic}.
\end{itemize}

\begin{figure*}[tbp]
	\centering
	\subcaptionbox{Loss of \airfedavgs and \airfedmodel ($E$ = 1)\label{fig: result2a}}{\includegraphics[width=0.49\linewidth]{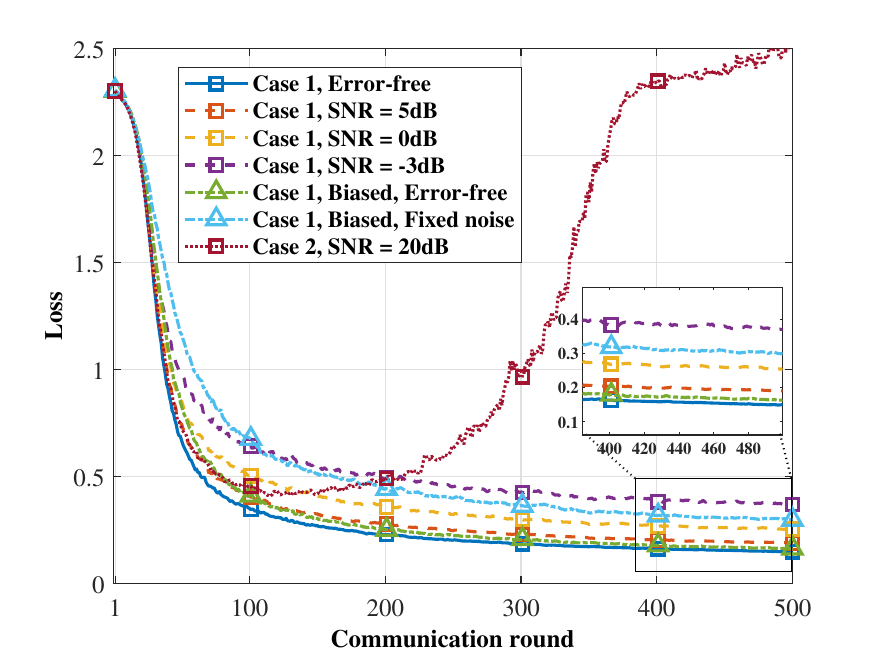}}\hfil
	\subcaptionbox{Test accuracy of \airfedavgs and \airfedmodel ($E$ = 1)\label{fig: result2b}}{\includegraphics[width=0.49\linewidth]{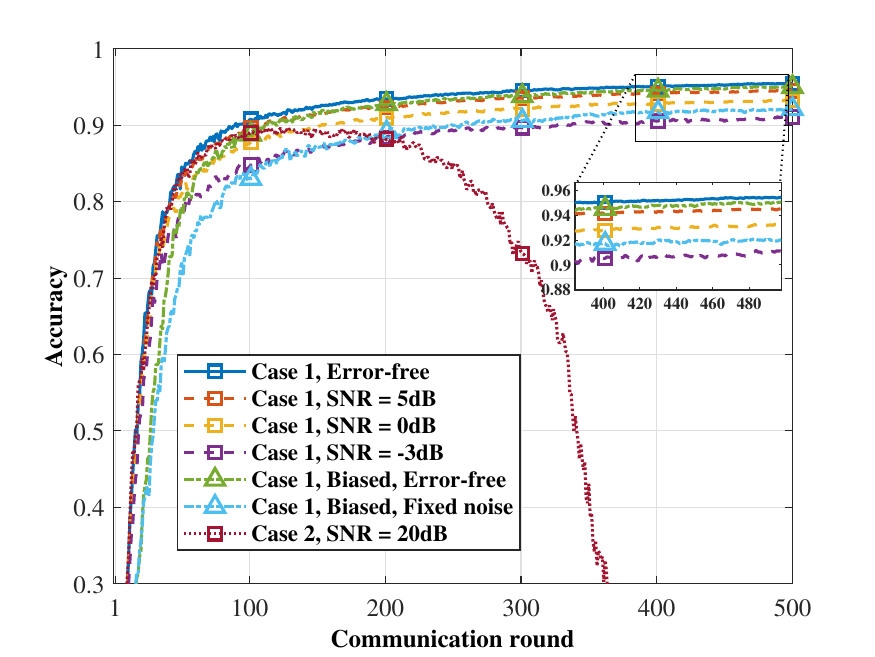}}\hfil
	\subcaptionbox{Loss of \airfedavgm and \airfedmodel ($E$ = 5)\label{fig: result2c}}{\includegraphics[width=0.49\linewidth]{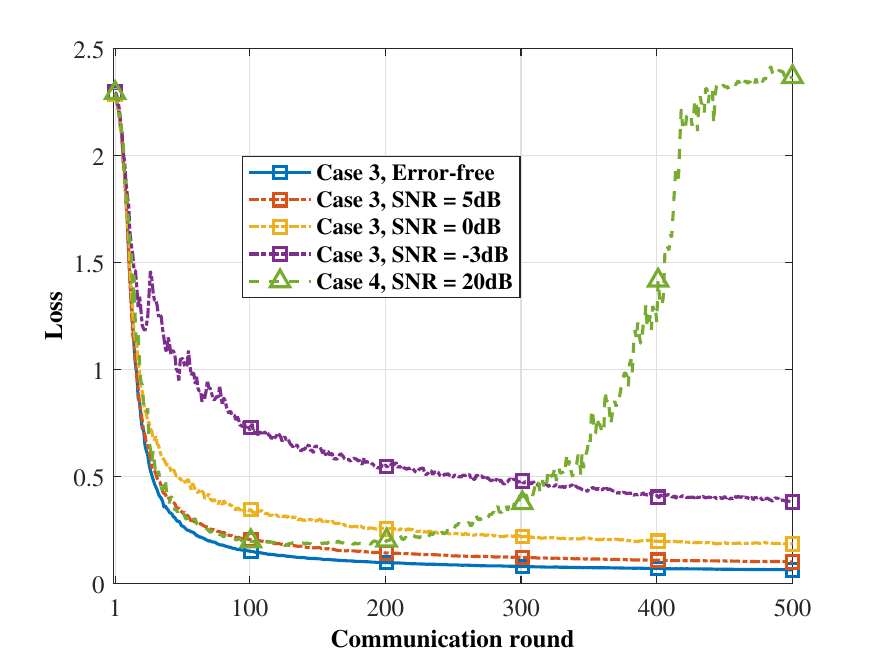}}\hfil
	\subcaptionbox{Test accuracy of \airfedavgm and \airfedmodel ($E$ = 5)\label{fig: result2d}}{\includegraphics[width=0.49\linewidth]{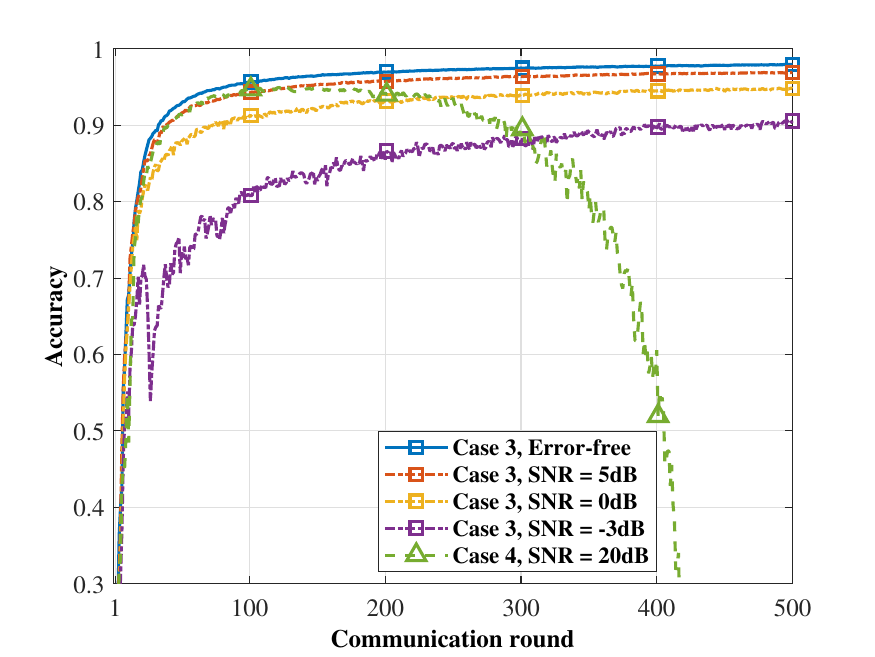}}\hfil
	\caption{Simulation results for the CNN model on the MNIST dataset over AWGN channel.}
	\label{fig: result2}
	\vspace{-0.5cm}
\end{figure*}

\subsection{Other Potential Extensions}
Firstly, although AirFedModel is not convergent in low SNR scenarios from our analysis, a potential research direction is to provide convergence guarantees for this algorithm.

%Secondly, for different transmission schemes in \airfedavgm and \airfedavgs with biased MAE, this paper can be helpful for minimizing the upper bound in \textbf{Theorem \ref{thm: case3biasedconvex}} $\sim$ \textbf{\ref{thm: case1biasednonconvex}} on the objective function with new performance metrics.

Secondly, while this paper considers that all edge devices perform an equal number of local updates, the heterogeneity in computation capabilities leads to variations in the number of local epochs across edge devices in each communication round. Furthermore, each edge device's number of local epochs can also vary across different communication rounds. This brings new challenges to the convergence analysis and system design of AirFL \cite{wang2020tackling,mao2022charles,yang2022over}, which is an attractive extension.

Finally, data heterogeneity may severely degrade the overall learning performance \cite{zhao2018federated}. To handle non-IID data, personalization is a leading approach for each edge device to train a personalized local model, which mainly consists of multi-task FL \cite{li2021ditto} and meta-learning \cite{fallah2020personalized}. This technique has been applied in AirFL \cite{sami2022air}. The performance of other solutions such as \cite{FedProx}, which introduces a proximal term to the local objectives to minimize the model divergence, also worth investigating in the presence of wireless fading channels.

\section{Numerical Evaluations}\label{sec: exp}
In this section, we evaluate the performance of \airfedavgm, \airfedavgs, and \airfedmodel with both strongly convex and non-convex loss functions. All experiments are averaged over 5 independent runs. For simplicity, we use Case 1$-$4 to represent \airfedavgs, \airfedmodel ($E=1$), \airfedavgm and \airfedmodel ($E>1$), respectively.

\subsection{Strongly Convex Case: Linear Regression on Synthetic Dataset}
We consider a linear regression problem for the strongly convex case on the synthetic dataset.
In particular, the dataset of each edge device $\mathcal{D}_n$ is randomly generated by
\begin{align}\label{eq: linear model}
	\bm{b}_{n} = \bm{A}_{n}\bm x_{0}+v_{n},~\forall n \in \mathcal{N},
\end{align}
where $\bm x_{0} \in \mathbb{R}^d$ is a random parameter with standard Gaussian distribution $\mathcal{N}(0, \boldsymbol{I}_{d})$ and the sample-wise noise vectors are independently generated as $v_{n}\overset{ind.}{\sim}\mathcal{N}(0, \sigma^2 \bm{I}_{D_{n}})$. Random matrices $\bm{A}_n \in \mathbb{R}^{D_{n} \times d}$ is generated by $(\bm{A}_n )_{uv} \overset{i.i.d}{\sim} \mathcal{N}(0, 1)$, for all $n \in \mathcal{N}$, $u \in [D_{n}]$ and $v \in [d]$. 

The local loss function for each edge device $n$ is given by
\begin{align}\label{eq: loss function}
	F_{n}(\bm\theta) = \frac{1}{2D_n}\left\|\bm{A}_n \bm\theta - \bm{b}_n \right \|^2_2,
\end{align}
which is strongly convex.

\begin{figure*}[tbp]
	\centering
	\subcaptionbox{Loss\label{fig: result3a}}{\includegraphics[width=0.49\linewidth]{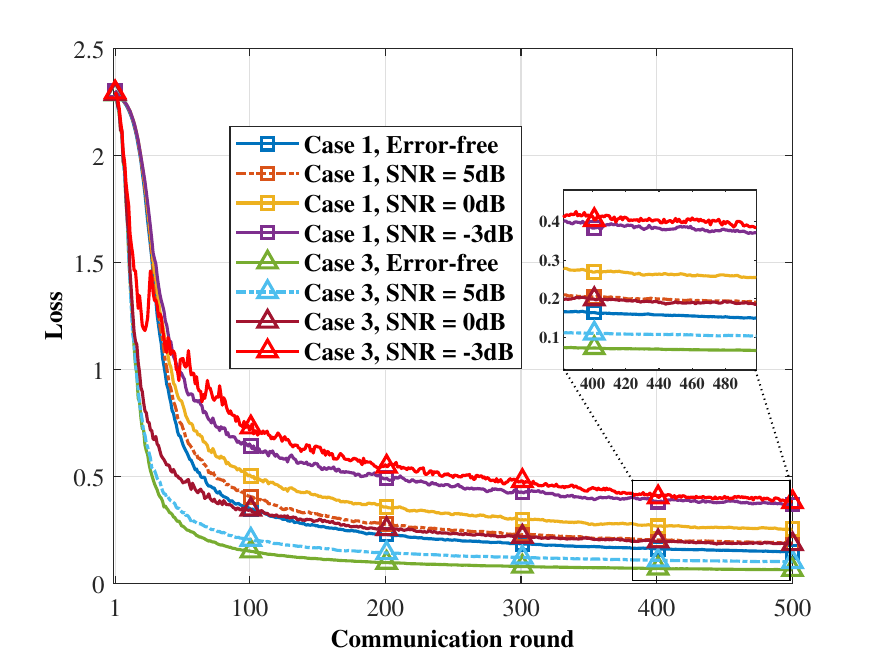}}\hfil
	\subcaptionbox{Test accuracy\label{fig: result3b}}{\includegraphics[width=0.49\linewidth]{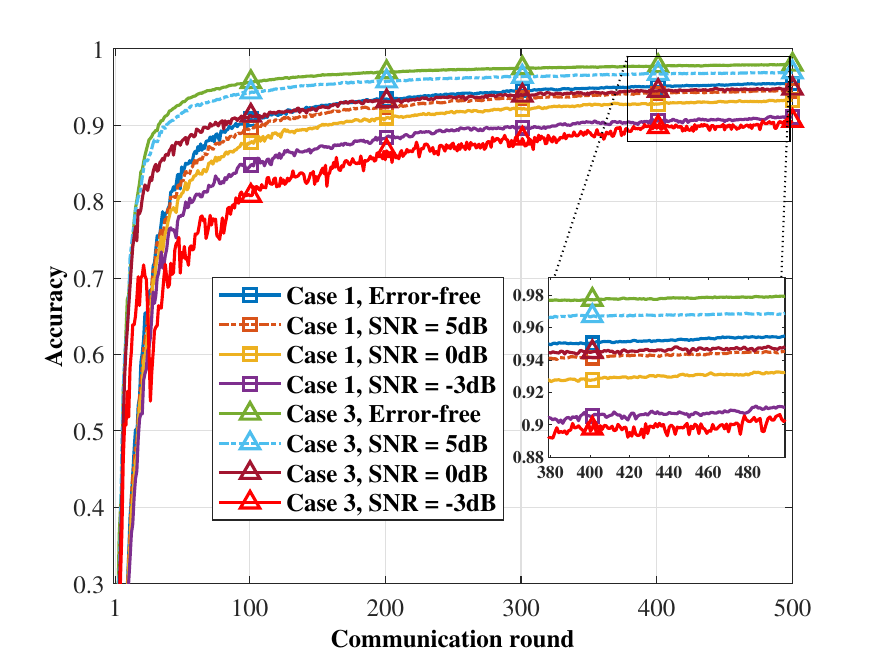}}\hfil
	\caption{Comparison of \airfedavgm and \airfedavgs for the CNN model on the MNIST dataset over AWGN channel.}
	\label{fig: result3}
	\vspace{-0.5cm}
\end{figure*}

\begin{figure*}[tbp]
	\centering
	\subcaptionbox{Loss\label{fig: result4a}}{\includegraphics[width=0.49\linewidth]{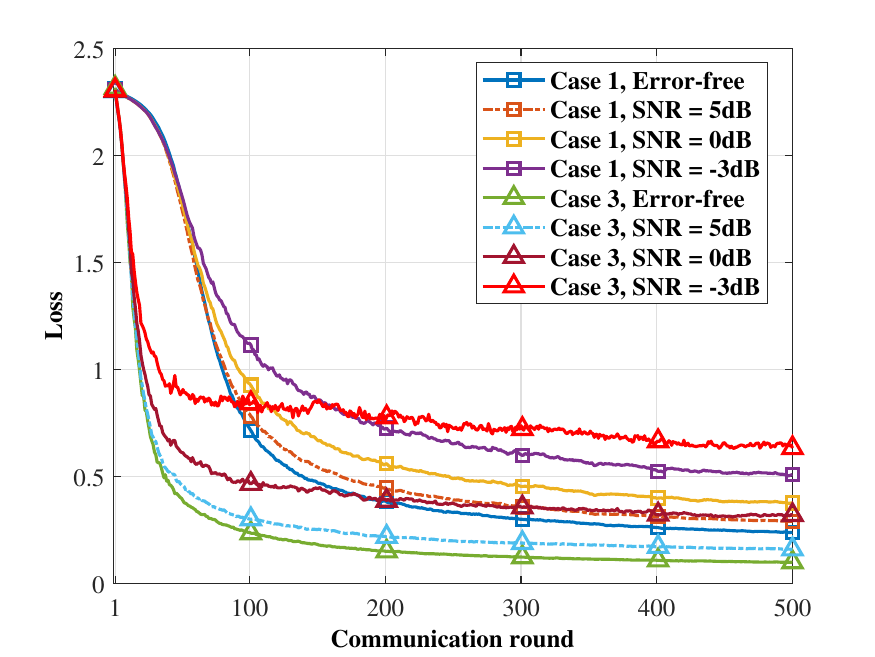}}\hfil
	\subcaptionbox{Test accuracy\label{fig: result4b}}{\includegraphics[width=0.49\linewidth]{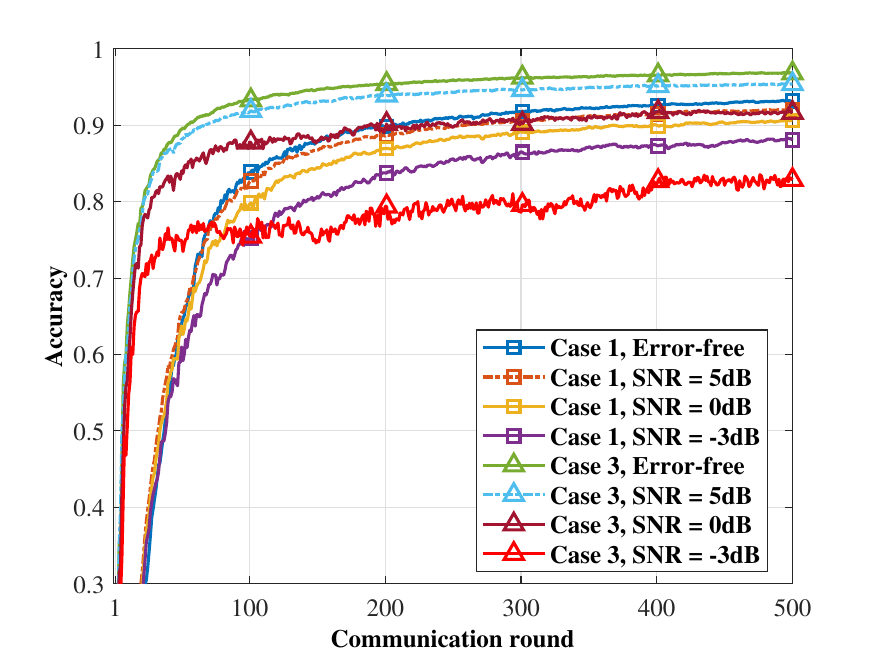}}\hfil
	\caption{Comparison of \airfedavgm and \airfedavgs for the CNN model on the MNIST dataset over Rayleigh fading channel.}
	\label{fig: result4}
	\vspace{-0.5cm}
\end{figure*}

We numerically evaluate the optimality gap $\mathbb{E}[F(\hat{\bm \theta})] - F(\bm{\theta}^{\ast})$ with respect to $T=200$ communication rounds in the following benchmarks in AWGN channel for comparison:
\begin{itemize}
	\item \textbf{Error-free \airfedavgm and \airfedavgs:} Conventional FL without channel fading and noise, where the number of local updates is $E=5$ for \airfedavgm.
	\item \textbf{AirFedAvg with Fixed Precoding Factor:} The precoding factor is fixed since the starting round by 
	\begin{align}
		\alpha_n^t = \frac{\sqrt{dP_0}}{\max_{n\in \mathcal{N}}\|p_n\bm{z}_n^0\|_2},~t\in[T].
	\end{align}
	\item \textbf{\airfedavgm and \airfedavgs with Designed Precoding Factor:} The precoding factor is designed as 
	\begin{align}\label{eq: beta}
		\alpha_n^t = \frac{\sqrt{dP_0}}{\max_{n\in \mathcal{N}}\|p_n\bm{z}_n^t\|_2},~t\in[T],
	\end{align}
which is named COTAF for abbreviation \cite{COTAF}.
\end{itemize}

Consider there are $N=25$ edge devices and the model with dimension $d = 100$. The size of local dataset satisfies $D_n\in[D_{\min},D_{\max}]$ with mean $\bar{D}=500$, where $D_{\min}=300$ and $D_{\max}=1200$. The local batch size is 128 and we evaluate the convergence performance under different SNR, where $\text{SNR}=P_0/\sigma_{w}^2$.

Here we set the parameters as
\begin{align*}
	\begin{array}{ll}
		\sigma^2 = 0.20, &\text{SNR}\in \{5\text{dB}, 0\text{dB}\},\\
		\eta^0 = 0.1, & \eta^t = \frac{\eta^0}{1+0.002\times t}.
	\end{array}
\end{align*}

Experimental results are illustrated in Fig. \ref{fig: result1}. The observations made from Fig. \ref{fig: result1} can be concluded as follows:
\begin{itemize}
	\item From Figs. \ref{fig: result1a} and \ref{fig: result1b}, it is clear that under fixed SNR, \airfedavgs and \airfedavgm with COTAF precoder have the same performance as the error-free setting. 
	\item For fixed SNR and fixed precoding factor, the performance is worse compared with designed COTAF precoder for all cases. Meanwhile, \airfedavgs has a smaller optimality gap than \airfedmodel and \airfedavgm, which is in line with our analysis in \textbf{Remark \ref{rem: remark1}} that the \airfedavgs is less sensitive to the model aggregation MSE, i.e., the denoising factor design and \airfedavgm is on the contrary.
	\item From Figs. \ref{fig: result1c} and \ref{fig: result1d}, it is obvious that the lower the SNR is, the bigger the optimality gap for all cases but COTAF, which means this precoding scheme is more robust to noisy channel and is capable of eliminating the impact of receiver noise through the training process. 
	\item Subsequently, \airfedavgm converges faster with $E=5$ local updates than \airfedavgs to achieve linear speedup for fewer communication rounds as analyzed in \textbf{Remark \ref{rem: 13comparison}}.
\end{itemize}

\subsection{Non-Convex Case I: CNN on MNIST Dataset}
Next we validate our theorems and remarks by considering handwritten digits classification on the MNIST dataset \cite{MNIST}, which consists of 60,000 training images and 10,000 images for testing with $28\times28$ pixels of 10 digits. The training dataset is uniformly distributed over $N = 50$ edge devices. We assume non-IID data distribution, where the original training dataset is first sorted by labels, and then randomly assigned to all edge devices. Each edge device can access only two sorts of labels.

\begin{figure*}[tbp]
	\centering
	\subcaptionbox{Loss\label{fig: result5a}}{\includegraphics[width=0.49\linewidth]{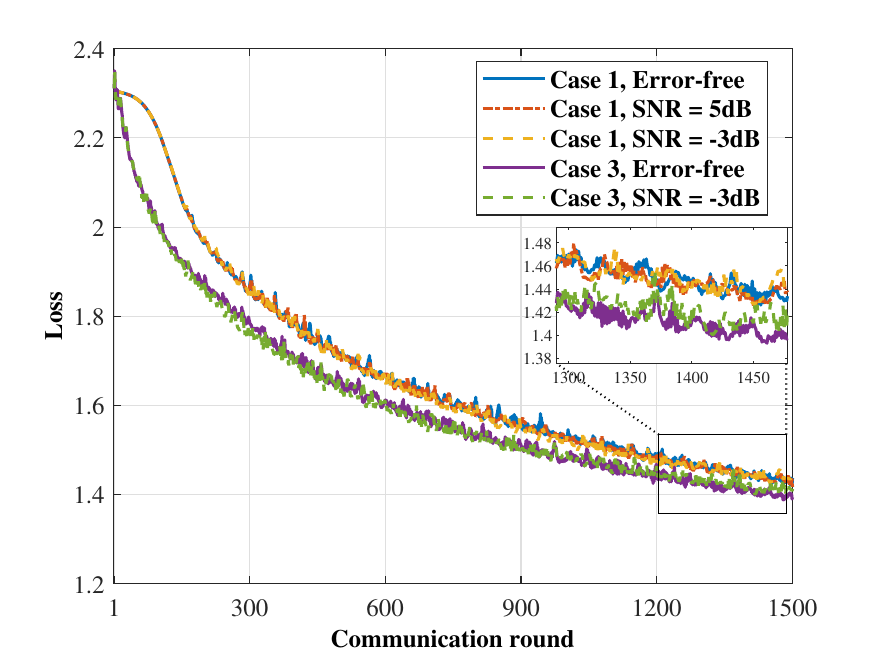}}\hfil
	\subcaptionbox{Test accuracy\label{fig: result5b}}{\includegraphics[width=0.49\linewidth]{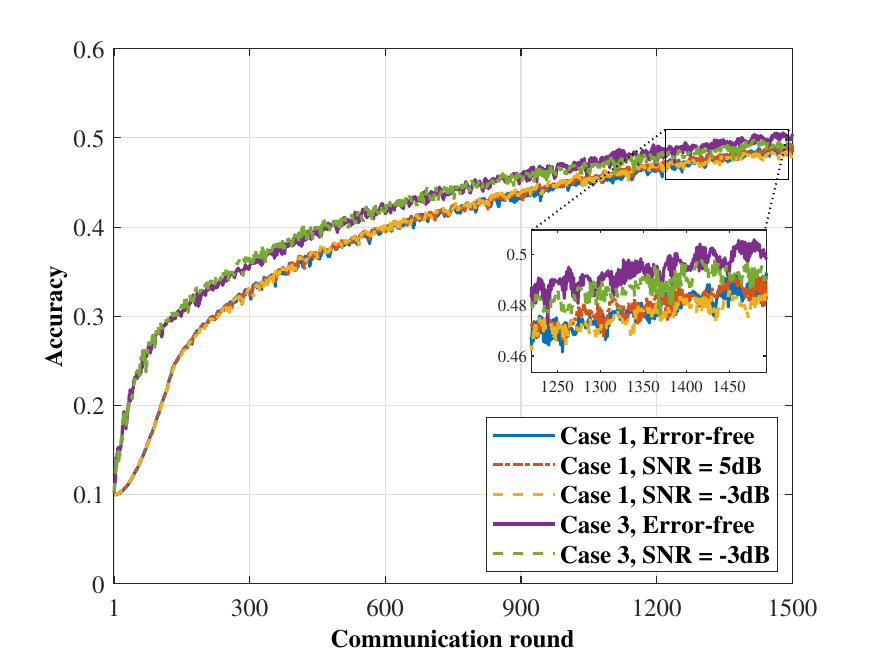}}\hfil
	\caption{Comparison of \airfedavgm and \airfedavgs for the CNN model on the CIFAR-10 dataset over Rayleigh fading channel.}
	\label{fig: result5}
	\vspace{-0.5cm}
\end{figure*}

For the learning model, we consider the non-convex CNN model, which has two $5\times5$ convolution layers, a full connected layers with 320 units and ReLU activation, and a final output layer with softmax. The first convolution layer has 10 channels and the second one has 20 channels, which are both followed by ReLU activation and $2\times2$ max pooling. The total number of model parameters is $d=21840$ and the number of local updates for \airfedavg is set as $E=5$ unless specified otherwise. The local batch size is 10 and the learning rate is initially set as $\eta^0 = 0.1$ for AWGN channel and $\eta^0 = 0.05$ for Rayleigh fading channel and decays with $ \eta^t = \frac{\eta^0}{1+0.005\times t} $ in each communication round and the SNR is selected from the set $\{-3\text{dB},0\text{dB},5\text{dB}\}$. We run the experiments on PyTorch version 1.10.2 with python version 3.6.

We consider the loss and test accuracy over $T=500$ communication rounds of the following benchmarks for comparison:
\begin{itemize}
	\item \textbf{Error-free \airfedavgm and \airfedavgs:} Conventional FL without channel fading and noise.
	\item \textbf{\airfedavgm and \airfedavgs with Designed Precoding Factor:} The precoding factor is described in \eqref{eq: beta} in AWGN channel and \eqref{eq: precoding} for Rayleigh fading channel.
	\item \textbf{\airfedmodel over High SNR with Designed Precoding Factor:}
	For AWGN channel, the SNR is set to be up to 20dB to verify the divergence of Case 2 and 4.
	\item \textbf{Biased MAE for \airfedavgs:} The power control policy proposed in \cite{xiaowen2021optimized} for gradient transmission with noise power $\sigma_w^2 = 0.01$ for fixed noise case and $\sigma_w^2 = 0$ for error-free case.
\end{itemize}

\begin{table}[h]	
	\caption{Highest achievable test Accuracy for AWGN channel}\label{tab: table4}
	\centering\renewcommand\arraystretch{1.1}
	%\fontsize{8}{10}\selectfont    %{字体尺寸}{行距}
	\begin{tabular}{ccccc}
		\hline MAE & Scheme & \text Case 1  &Case 3 ($E$=5) & Case 3 ($E$=10)\\
		\hline  \multirow{4}{*}{Unbiased} & Error-free  & 95.5\% & 98.0\% & 98.5\% \\
		& SNR = 5dB  & 94.5\% & 96.9\% &  97.6\%\\
		& SNR = 0dB  & 93.3\% & 94.9\% &  96.2\%\\
		& SNR = -3dB & 91.1\% & 94.4\% &  94.7\%\\
		\hline \multirow{2}{*}{Biased} & Error-free & 95.1\% & $-$ & $-$\\
		& \text {Fixed Noise} & 92.1\% & $-$ & $-$\\
		\hline
	\end{tabular}
\end{table}

Experimental results are illustrated in Figs. \ref{fig: result2} and \ref{fig: result3}.
From Figs. \ref{fig: result2a} and \ref{fig: result2c}, it is clear that although under a high SNR, \airfedmodel still cannot converge in the presence of complicated neural networks, let alone a lower SNR. A further increase in the SNR, up to 30dB, will not make it converge. This is because in neural networks, the model parameter should be highly precise, which is of very low tolerance in perturbation. However, even in very low SNR regime, the convergence of \airfedavgm and \airfedavgs is guaranteed, which confirms our earlier reasoning that transmitting local model parameter via AirComp is not a good choice.
Similar to the convex case, the higher the SNR is, the better the performance of \airfedavg will be for all cases.
In addition, \airfedavgm converges faster with $E=5$ local updates than \airfedavgs to achieve linear speedup, as analyzed in \textbf{Remark \ref{rem: airfedavgm E}}. But with a low SNR, using the same learning rate as \airfedavgm will degrade the learning performance. As analyzed in \textbf{Theorem \ref{thm: case3unbiasednonconvex}}, the initial learning rate should be smaller. After the tuning on the learning rate, the highest achievable accuracy on non-IID MNIST dataset of different cases is summarized in Table \ref{tab: table4}.
From Table \ref{tab: table4}, more local updates lead to better learning performance under the same precoding factor design, which may be caused by a better stationary point for the non-convex objective function, but requiring smaller learning rate.
The experiments concerning \airfedavgs with biased MAE are only demonstrated for validating the convergence property.
As depicted in Fig. \ref{fig: result4}, the training performance is degraded in fading channel compared with that of AWGN channel. Although channel inversion is exploited to compensate for the fading channel, it still has huge influence reflected by the precoding factor. Last but not least, with a high SNR, \airfedavgm has better performance and faster convergence speed than \airfedavgs. However, it is clear that \airfedavgs is more robust to the fading channel even in a very low SNR due to the impact mechanism of the noise discussed in \textbf{Remark \ref{rem: 13comparison}}.

\subsection{Non-Convex Case II: CNN on CIFAR-10 Dataset}
Finally we validate our theorems and remarks by considering image classification on the CIFAR-10 dataset \cite{cifar}, which consists of 50,000 training images and 10,000 images for testing with $32\times32\times3 = 3072$ features. The training dataset is uniformly distributed over $N = 50$ edge devices. We consider non-IID data distribution, where the original training dataset is first sorted by labels, and then randomly assigned to all edge devices. Specially, the situation of extreme non-IID data is considered in this experiments, where each edge device can access only one shard of label.

For the learning model, we consider the non-convex CNN model and the architecture of this model is listed in the following table. The total number of model parameters is $d=62006$ and the number of local updates for \airfedavg is set as $E=5$ unless specified otherwise. The local batch size is 50 and the learning rate is initially set as $\eta^0 = 0.05$ and decays with $ \eta^t = \frac{\eta^0}{1+0.001\times t} $ in each communication round and the SNR is selected from the set $\{-3\text{dB},5\text{dB}\}$.
\begin{table}[h]	
	\caption{CNN Architecture for CIFAR-10}\label{tab: CNN}
	\centering
	%\fontsize{8}{10}\selectfont    %{字体尺寸}{行距}
	\begin{tabular}{cc}
		%\toprule 
		\hline
		Layer Type & Size \\ 
		\hline
		Convolution + ReLU & 5$ \times $5$\times$6\\
		Max Pooling & 2$\times$2\\
		Convolution + ReLU & 5$ \times $5$\times$16\\
		Max Pooling & 2$\times$2\\
		Fully Connected + ReLU&400$\times$120\\
		Fully Connected + ReLU&120$\times$84\\
		Fully Connected&84$\times$10\\\hline
		%\bottomrule
	\end{tabular}
\end{table}

We consider the loss and test accuracy over $T=1500$ communication rounds.
%\begin{itemize}
%	\item \textbf{Error-free \airfedavgm and \airfedavgs:} Conventional FL without channel and noise.
%	\item \textbf{\airfedavgm and \airfedavgs with Designed Precoding Factor:} The precoding factor is described in \eqref{eq: precoding} for Rayleigh fading channel.
%\end{itemize}
Experimental results are illustrated in Fig. \ref{fig: result5}. 
For extreme non-IID case of CIFAR-10 dataset, where each edge device can only access one category of data, the model aggregation MSE is not dominant any more, but the non-IID data degrading the learning performance.
In this case, as analyzed in \textbf{Remark \ref{rem: hetero}}, \airfedavgm suffers from highly non-IID data, and the linear speedup advantage over \airfedavgs is no longer obvious. In addition, greater receiver noise for model aggregation may have the potential to help the vanilla SGD method escape from saddle points \cite{chi2017how,zhang2022turning}.

\section{Concluding Remarks}\label{sec: conclusions}
In this tutorial, we investigated the first-order optimization algorithm \airfedavg, where MAE is introduced in each communication round due to the receiver noise. We first adopted channel inversion scheme, and provided convergence analysis for \airfedavgm algorithm and obtained the results for strongly convex objectives and non-convex ones with diminishing learning rate. Based on these results, we provided mathematical convergence guarantees for the optimality gap of the strongly convex case and the error bound of the non-convex case, and derived the convergence rate for both cases. We further extended the algorithm to \airfedavgs and \airfedmodel. Last but not least, we discussed the communication schemes in AirFL for practical implementations and extended our analysis to biased MAE. Our simulations were consistent with the theoretical analysis for the impact of noise and non-IID data, and convergence rate.

Several insightful conclusions can be obtained as follows.
\begin{itemize}
	\item Firstly, the convergence of the optimality gap of \airfedavgm in the strongly convex case and error bound in the non-convex case hinge on the model aggregation MSE, which is dominated by the noise power and denoising factor. In addition, the MSE should satisfy a certain condition to guarantee the convergence. For strongly convex objective function, the convergence rate of \airfedavgm is $\mathcal{O}\left(\frac{1}{NET}\right)$ with diminishing learning rate while that of non-convex case is $\mathcal{O}\left(\frac{1}{\sqrt{NET}}\right)$ with designed constant learning rate, which achieves linear speedup in terms of the number of local updates and the number of edge devices.
	\item Secondly, it may not be a good choice to transmit local model in \airfedmodel, which may cause divergence of the algorithm unless the MSE approaches to zero.
%	, i.e., under an infinite large SNR.
	\item Thirdly, the convergence of the optimality gap of \airfedavgs in strongly convex case and error bound in non-convex hinge on the model aggregation MSE as well, but the requirements on the MSE is not stringent. Under the same precoding and denoising factor design as \airfedavgm, \airfedavgs witnesses convergence rate $\mathcal{O}\left(\frac{1}{NT}\right)$ in strongly convex case and $\mathcal{O}\left(\frac{1}{\sqrt{NT}}\right)$ in non-convex case, respectively.
	\item In terms of the comparison between \airfedavgm and \airfedavgs, \airfedavgm converges faster and is more communication-efficient, but is more sensitive to the receiver noise and non-IID data. At the cost of convergence speed, \airfedavgs is observed to be more robust to receiver noise. There is an interesting trade-off between convergence rate and algorithm robustness. Besides, the consensus for \airfedavgm and \airfedavgs is that the lower SNR it is, the worse the learning performance will be. 
	\item In the end, when the MAE is biased, the convergence bound is dominated by both MSE and the squared norm of bias for \airfedavgm, and the squared norm of bias for \airfedavgs.
\end{itemize}

There are many open directions to extend this tutorial. For instance, different federated optimization algorithms and network architectures can be applied in AirFL for further analysis. More practical scenarios such as multi-antennas system, RIS and UAV assisted AirFL, and FL over satellite networks are promising directions to be investigated.

\appendices
\allowdisplaybreaks
%\setlength\abovedisplayskip{2pt}
%\setlength\belowdisplayskip{2pt}
%\vspace{-1.5cm}
\section{Convergence Analysis for MAE in \airfedavgm}\label{sec: case3}
This section summarizes the method of convergence analysis for MAE in \airfedavgm. In terms of perturbation in GD and SGD algorithms, \cite{Friedlander2012} is served as a guidance to this paper
\subsection{Preliminaries}
We denote $\bm d_n^t=\frac{1}{E}\sum\limits_{e=0}^{E-1}\bm g_n^{(t,e)}$ and $\bm h_n^t=\frac{1}{E}\sum\limits_{e=0}^{E-1}\nabla  F_n(\bm\theta_n^{(t,e)})$. It is clear that $\mathbb{E}[\bm d_n^t-\bm h_n^t]=0,\forall n$ and $\mathbb{E}[\left<\bm d_i^t-\bm h_i^t,\bm d_j^t-\bm h_j^t\right>]=0,\forall i\neq j$.

\begin{lem}\label{lem: Lemma 0}
	Suppose  $ \left\{A_{k}\right\}_{k = 1}^{T} $  is a sequence of random matrices and  $ \mathbb{E}\left[A_{k} \mid A_{k-1}, A_{k-2}, \ldots, A_{1}\right] = \mathbf{0}, \forall k $. Then we have
	\begin{align}
		\mathbb{E}\left[\left\|\sum_{k  = 1}^{T} A_{k}\right\|_{F}^{2}\right]  = \sum_{k = 1}^{T} \mathbb{E}\left[\left\|A_{k}\right\|_{F}^{2}\right].
	\end{align}
\end{lem}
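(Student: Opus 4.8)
The plan is to expand the squared Frobenius norm of the sum into a double sum of Frobenius inner products and then show that every off-diagonal term vanishes in expectation by exploiting the martingale-difference hypothesis. Writing $\langle A, B\rangle_F = \operatorname{tr}(A^{\sf T}B)$ for the Frobenius inner product, I would first use bilinearity to obtain
$$\left\|\sum_{k=1}^{T} A_k\right\|_F^2 = \sum_{i=1}^{T}\sum_{j=1}^{T} \langle A_i, A_j\rangle_F.$$
Taking expectations and splitting the double sum into the diagonal terms $i=j$, which contribute exactly $\sum_{k=1}^{T} \mathbb{E}[\|A_k\|_F^2]$, and the off-diagonal terms $i\neq j$, reduces the claim to showing that $\mathbb{E}[\langle A_i, A_j\rangle_F]=0$ whenever $i\neq j$.

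For the cross terms I would fix a pair $i<j$ (the case $i>j$ being symmetric) and invoke the tower property of conditional expectation, conditioning on the history $A_{j-1},\ldots,A_1$. Since $i\leq j-1$, the matrix $A_i$ is measurable with respect to this history, so it factors out of the inner conditional expectation, leaving
$$\mathbb{E}[\langle A_i, A_j\rangle_F] = \mathbb{E}\big[\langle A_i,\, \mathbb{E}[A_j\mid A_{j-1},\ldots,A_1]\rangle_F\big].$$
The hypothesis $\mathbb{E}[A_j\mid A_{j-1},\ldots,A_1]=\mathbf{0}$ then forces this quantity to be zero, and summing over the surviving diagonal terms yields the claim.

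The only point that requires care is the measurability argument that permits pulling $A_i$ out of the inner conditional expectation, namely that the Frobenius inner product is linear in its second argument and that $A_i$ is $\sigma(A_1,\ldots,A_{j-1})$-measurable; everything else is a routine trace computation. This is a standard orthogonality property of martingale-difference sequences, and in the present analysis it is applied to conditional-mean-zero quantities built from the stochastic-gradient noise (such as $\bm d_n^t-\bm h_n^t$ and sums thereof), whose vanishing conditional means are exactly the observations recorded immediately before the lemma. No integrability obstruction arises once \textbf{Assumption \ref{ass: gradient variance}} guarantees bounded second moments.
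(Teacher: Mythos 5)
Your proof is correct: the expansion into Frobenius inner products, the vanishing of the cross terms via the tower property with $A_i$ pulled out of the conditional expectation given $\sigma(A_1,\ldots,A_{j-1})$, and the integrability remark are exactly the canonical martingale-difference orthogonality argument. The paper itself states this lemma without proof, so there is no alternative route to compare against; your write-up is precisely the standard argument that fills that gap.
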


\begin{lem}\label{lem: Lemma 1}
	Let \textbf{Assumption} \textbf{\ref{ass: gradient variance}} hold and we have
	\begin{align}
		\mathbb{E}\left[\left \|\sum_{n=1}^N p_n \bm d_n^t \right \|_2^2\right]\leq\frac{2}{E}\sum_{n=1}^{N}p_n^2\sigma_{n}^2+2\mathbb{E}\left[\left \|\sum_{n=1}^N p_n \bm h_n^t \right \|_2^2\right].
	\end{align}
\end{lem}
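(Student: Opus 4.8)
The plan is to decompose $\sum_{n=1}^N p_n \bm d_n^t$ into its mean part $\sum_{n=1}^N p_n \bm h_n^t$ plus the zero-mean fluctuation $\sum_{n=1}^N p_n (\bm d_n^t - \bm h_n^t)$, then apply the elementary inequality $\|\bm a + \bm b\|_2^2 \leq 2\|\bm a\|_2^2 + 2\|\bm b\|_2^2$. Concretely, writing $\bm d_n^t = \bm h_n^t + (\bm d_n^t - \bm h_n^t)$, I would first establish
\begin{equation*}
	\mathbb{E}\left[\left\|\sum_{n=1}^N p_n \bm d_n^t\right\|_2^2\right] \leq 2\,\mathbb{E}\left[\left\|\sum_{n=1}^N p_n (\bm d_n^t - \bm h_n^t)\right\|_2^2\right] + 2\,\mathbb{E}\left[\left\|\sum_{n=1}^N p_n \bm h_n^t\right\|_2^2\right].
\end{equation*}
This isolates the noise term, which is exactly the piece that must be controlled using the bounded-variance hypothesis of Assumption \ref{ass: gradient variance}.

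The key step is bounding the fluctuation term. The preliminaries already record that $\mathbb{E}[\bm d_n^t - \bm h_n^t] = \bm 0$ for every $n$ and that the cross terms vanish, i.e. $\mathbb{E}[\langle \bm d_i^t - \bm h_i^t, \bm d_j^t - \bm h_j^t\rangle] = 0$ for $i \neq j$ (independence across devices). Consequently the squared norm of the sum decouples into a sum of per-device second moments:
\begin{equation*}
	\mathbb{E}\left[\left\|\sum_{n=1}^N p_n (\bm d_n^t - \bm h_n^t)\right\|_2^2\right] = \sum_{n=1}^N p_n^2\, \mathbb{E}\left[\left\|\bm d_n^t - \bm h_n^t\right\|_2^2\right].
\end{equation*}
It then remains to bound each term $\mathbb{E}[\|\bm d_n^t - \bm h_n^t\|_2^2]$. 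Since $\bm d_n^t - \bm h_n^t = \frac{1}{E}\sum_{e=0}^{E-1}(\bm g_n^{(t,e)} - \nabla F_n(\bm\theta_n^{(t,e)}))$ is itself an average of $E$ zero-mean, mutually uncorrelated per-epoch gradient errors, applying Lemma \ref{lem: Lemma 0} (or the same cross-term-vanishing argument across epochs) gives $\mathbb{E}[\|\bm d_n^t - \bm h_n^t\|_2^2] = \frac{1}{E^2}\sum_{e=0}^{E-1}\mathbb{E}[\|\bm g_n^{(t,e)} - \nabla F_n(\bm\theta_n^{(t,e)})\|_2^2] \leq \frac{1}{E^2}\cdot E\sigma_n^2 = \frac{\sigma_n^2}{E}$, where the inequality invokes the bounded-variance bound $\sigma_n^2$ from Assumption \ref{ass: gradient variance}.

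Assembling the pieces yields $2\sum_{n=1}^N p_n^2 \cdot \frac{\sigma_n^2}{E} + 2\,\mathbb{E}[\|\sum_n p_n \bm h_n^t\|_2^2] = \frac{2}{E}\sum_{n=1}^N p_n^2 \sigma_n^2 + 2\,\mathbb{E}[\|\sum_n p_n \bm h_n^t\|_2^2]$, which is exactly the claimed bound. The main obstacle, such as it is, lies in justifying the decoupling rigorously: I must be careful that the filtration is set up so that the per-epoch gradient noises are genuinely conditionally mean-zero given the history, which is what licenses both the across-device and across-epoch cross terms to vanish and lets Lemma \ref{lem: Lemma 0} apply. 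Once that conditional-independence structure is in place, the rest is routine second-moment bookkeeping, and the factor $1/E$ emerges cleanly from averaging $E$ uncorrelated error terms.
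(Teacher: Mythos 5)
Your proposal is correct and follows essentially the same route as the paper's own proof: the $\|\bm a+\bm b\|_2^2\leq 2\|\bm a\|_2^2+2\|\bm b\|_2^2$ split into mean plus fluctuation, decoupling of the across-device cross terms via the stated orthogonality, reduction of each $\mathbb{E}\left[\|\bm d_n^t-\bm h_n^t\|_2^2\right]$ to a sum of $E$ per-epoch variances via \textbf{Lemma \ref{lem: Lemma 0}}, and the bound $\sigma_n^2/E$ from \textbf{Assumption \ref{ass: gradient variance}}. Your added remark about verifying the filtration so the gradient noises are conditionally mean-zero is exactly the hypothesis under which \textbf{Lemma \ref{lem: Lemma 0}} applies, and the paper invokes it in the same way.
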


\begin{proof}
	By applying the fact that $\|\bm a+ \bm b\|_2^2\leq2\|\bm a\|_2^2+2\|\bm b\|_2^2$, we have
	\begin{align}\nonumber
		&\mathbb{E}\left[\left \|\sum_{n=1}^N p_n \bm d_n^t \right \|_2^2\right]\\\nonumber
		\leq & 2\mathbb{E}\left[\left \|\sum_{n=1}^N p_n \left(\bm d_n^t-\bm h_n^t\right) \right \|_2^2\right]+2\mathbb{E}\left[\left \|\sum_{n=1}^N p_n \bm h_n^t \right \|_2^2\right]\\\nonumber
		=&2\sum_{n=1}^{N}p_n^2 \mathbb{E}\left[\left \| \bm d_n^t-\bm h_n^t \right \|_2^2\right]+2\mathbb{E}\left[\left \|\sum_{n=1}^N p_n \bm h_n^t \right \|_2^2\right]\\\nonumber
		=& \frac{2}{E^2}\sum_{n=1}^{N}p_n^2\sum_{e=0}^{E-1} \mathbb{E}\left[\left \| \bm g_n^{(t,e)}-\nabla F_n(\bm\theta_{n}^{(t,e)}) \right \|_2^2\right]\\\nonumber
		+&2\mathbb{E}\left[\left \|\sum_{n=1}^N p_n \bm h_n^t \right \|_2^2\right]\\
		\leq&\frac{2}{E}\sum_{n=1}^{N}p_n^2\sigma_{n}^2+2\mathbb{E}\left[\left \|\sum_{n=1}^N p_n \bm h_n^t \right \|_2^2\right].
	\end{align}
where the second equality follows from \textbf{Lemma \ref{lem: Lemma 0}}, and the last inequality holds by \textbf{Assumption \ref{ass: gradient variance}}.
\end{proof}

By applying \textbf{Assumption} \textbf{\ref{ass: smooth}} for global loss function, we obtain
\begin{align}\label{eq: primalhat0}\nonumber
	F(\hat{\bm{\theta}}^{t+1}) - F(\hat{\bm{\theta}}^{t})
&	\leq  \left<\boldsymbol{\varepsilon}^{t}-\eta^tE\sum_{n=1}^N p_n \bm d_n^t, \nabla  F(\hat{\bm{\theta}}^{t})\right>\\
&	+\frac{L}{2}\left \|\bm \varepsilon^{t}- \eta^tE\sum_{n=1}^N p_n \bm d_n^t \right \|_2^2.
\end{align}
Taking an expectation on the mini-batch stochastic gradient for both sides of \eqref{eq: primalhat0}, we obtain
\begin{align}\nonumber\label{eq: expouter}
	&\mathbb{E}\left[F(\hat{\bm{\theta}}^{t+1}) - F(\hat{\bm{\theta}}^{t})\right]\\\nonumber
	\leq& \mathbb{E} \left[\left <\bm \varepsilon^{t}-\eta^tE\sum_{n=1}^N p_n \bm d_n^t, \nabla  F(\hat{\bm{\theta}}^{t})\right>\right]\\\nonumber
	+&\frac{L}{2}\mathbb{E}\left[\left \|\bm \varepsilon^{t}-\eta^tE\sum_{n=1}^N p_n \bm d_n^t \right \|_2^2\right]\\\nonumber
	=&\left[\nabla  F(\hat{\bm{\theta}}^{t})\right]^{\sf T} \mathbb{E}\left[\bm \varepsilon^{t}\right]-E\eta^t \left [\nabla  F(\hat{\bm{\theta}}^{t})\right]^{\sf T}\mathbb{E}\left[ \sum_{n=1}^N p_n \bm d_n^t \right]\\\nonumber
	+&\frac{L}{2}\mathbb{E}\left[\left\| \bm \varepsilon^{t}\right \| _2^2 \right]
	+\frac{L(\eta^t)^2E^2}{2} \mathbb{E}\left[\left \|\sum_{n=1}^N p_n \bm d_n^t \right \|_2^2\right]\\
	- & L\eta^tE\mathbb{E}\left[ (\bm \varepsilon^{t})^{\sf T}\sum_{n=1}^N p_n \bm d_n^t \right].
\end{align}

\subsection{Unbiased MAE}\label{sec: case3unbiased}
Obviously we have $\mathbb{E}\left[ \bm \varepsilon^t \right]=0$. We suppose $\mathbb{E}\left[\left \| \bm \varepsilon^t\right \|_2^2 \right]  = \frac{\sigma_{w}^2}{\beta^t} \neq 0$, then \eqref{eq: expouter} can be reformulated as
\begin{align}\nonumber\label{eq: unbiased}
	&\mathbb{E}\left[F(\hat{\bm \theta}^{t+1}) - F(\hat{\bm \theta}^{t})\right]\\\nonumber
	=&-\frac{\eta^tE }{2}\left \|\nabla  F(\hat{\bm \theta}^{t})\right \|_2^2 -\frac{\eta^tE }{2}\mathbb{E}\left[\left \|\sum_{n=1}^N p_n \bm h_n^t \right \|_2^2\right] \\
	+& \frac{\eta^tE }{2}\mathbb{E}\left[\left \|\sum_{n=1}^N p_n \bm h_n^t -\nabla  F(\hat{\bm \theta}^{t}) \right \|_2^2\right] \\\nonumber
	+& \frac{L(\eta^t)^2E^2}{2}\mathbb{E}\left[\left \|\sum_{n=1}^N p_n \bm d_n^t \right \|_2^2\right]+\frac{L}{2}\frac{\sigma_{w}^2}{\beta^t},
\end{align}
where the equality holds for the fact that $\mathbb{E}[\bm d_n^t-\bm h_n^t]=0,\forall n$, and that $2\left<\bm a,\bm b\right> = \|\bm a\|_2^2+\|\bm b\|_2^2-\|\bm a-\bm b\|_2^2$. 

Using \textbf{Lemma \ref{lem: Lemma 1}}, we have
\begin{flalign}\nonumber\label{eq: unbiased1}
	&\mathbb{E}\left[F(\hat{\bm \theta}^{t+1}) - F(\hat{\bm \theta}^{t})\right]\\\nonumber
	\leq&-\frac{\eta^tE }{2}\left \|\nabla  F(\hat{\bm \theta}^{t})\right \|_2^2 \\\nonumber
	+&\frac{\eta^tE }{2}(2\eta^tEL-1)\mathbb{E}\left[\left \|\sum_{n=1}^N p_n \bm h_n^t \right \|_2^2\right]+\frac{L}{2}\frac{\sigma_{w}^2}{\beta^t}\\\nonumber 
	+&\frac{\eta^tE }{2}\mathbb{E}\left[\left \|\sum_{n=1}^N p_n \bm h_n^t -\nabla  F(\hat{\bm \theta}^{t}) \right \|_2^2\right] +(\eta^t)^2LE\sum_{n=1}^N p_n^2 \sigma_n^2\\\nonumber
	\leq& -\frac{\eta^tE }{2}\left \|\nabla  F(\hat{\bm \theta}^{t})\right \|_2^2+(\eta^t)^2LE\sum_{n=1}^N p_n^2 \sigma_n^2+\frac{L}{2}\frac{\sigma_{w}^2}{\beta^t}\\
	+&\frac{\eta^tE }{2}\underbrace{\mathbb{E}\left[\left \|\sum_{n=1}^N p_n \bm h_n^t -\nabla  F(\hat{\bm \theta}^{t}) \right \|_2^2\right]}_{T_1},
\end{flalign}
where the second inequality holds by $\eta^t\leq\frac{1}{2LE}$.

To give an upper bound of $T_1$, since $\hat{\bm \theta}_n^{(t,0)} = \hat{\bm \theta}^{t}$, we have
\begin{align}\nonumber\label{eq: T1}
	T_1&=\mathbb{E}\left[\left \|\sum_{n=1}^N p_n \left[\bm h_n^t -\nabla  F_n(\hat{\bm \theta}_n^{(t,0)})\right] \right \|_2^2\right]\\\nonumber
	&\leq \sum_{n=1}^{N}p_n\mathbb{E}\left[\left \|\bm h_n^t-\nabla  F_n(\hat{\bm \theta}_n^{(t,0)})\right \|_2^2\right]\\\nonumber
	&=	\sum_{n=1}^{N}p_n\mathbb{E}\left[\left \|\nabla F_n(\hat{\bm \theta}_n^{(t,0)})- \frac{1}{E}\sum\limits_{e=0}^{E-1}\nabla  F_n(\hat{\bm \theta}_n^{(t,e)})\right \|_2^2\right]\\\nonumber
	&= \sum_{n=1}^{N}p_n\mathbb{E}\left[\left \|\frac{1}{E}\sum\limits_{e=0}^{E-1}\left[ \nabla F_n(\hat{\bm \theta}_n^{(t,0)})- \nabla  F_n(\hat{\bm \theta}_n^{(t,e)})\right]\right \|_2^2\right]\\\nonumber
	&\leq \frac{1}{E} \sum_{n=1}^{N}p_n \sum\limits_{e=0}^{E-1}\mathbb{E}\left[\left \|\nabla F_n(\hat{\bm \theta}_n^{(t,0)})- \nabla  F_n(\hat{\bm \theta}_n^{(t,e)})\right \|_2^2\right]\\
	&\leq  \sum_{n=1}^{N}p_n \left\{\frac{L^2}{E}\sum\limits_{e=0}^{E-1}\underbrace{\mathbb{E}\left[\left \|\hat{\bm \theta}_n^{(t,0)}- \hat{\bm \theta}_n^{(t,e)}\right \|_2^2\right]}_{T_2}\right\},
\end{align}
where the first and the second inequality holds by Jensen's Inequality, which is $\|\sum_{n=1}^{N}p_n\bm x_n\|_2^2\leq\sum_{n=1}^{N}p_n\|\bm x_n\|_2^2$, while the last inequality follows \textbf{Assumption} \textbf{\ref{ass: smooth}}.

Next, we give an upper bound of the term $T_2$. From \eqref{eq: definition gradient} and the fact that $\|\bm a+ \bm b\|_2^2\leq2\|\bm a\|_2^2+2\|\bm b\|_2^2$, we have
\begin{align}\nonumber
	T_2&=(\eta^t)^2 \mathbb{E}\left[\left \|\sum_{k=0}^{e-1}\bm g_n^{(t,k)}\right \|_2^2\right]\\\nonumber
	&\leq 2(\eta^t)^2\mathbb{E}\left[\left \|\sum_{k=0}^{e-1}\left[ \bm g_n^{(t,k)}-\nabla F_n(\hat{\bm \theta}_n^{(t,k)}) \right] \right \|_2^2\right]\\\nonumber 
	&+2(\eta^t)^2\mathbb{E}\left[\left \|\sum_{k=0}^{e-1}\nabla F_n(\hat{\bm \theta}_n^{(t,k)})\right \|_2^2\right]\\\nonumber
	&=  2(\eta^t)^2\sum_{k=0}^{e-1}\mathbb{E}\left[\left \| \bm g_n^{(t,k)}-\nabla F_n(\hat{\bm \theta}_n^{(t,k)})  \right \|_2^2\right]\\\nonumber
	&+2(\eta^t)^2\mathbb{E}\left[\left \|\sum_{k=0}^{e-1}\nabla F_n(\hat{\bm \theta}_n^{(t,k)})\right \|_2^2\right]\\\nonumber
	&\leq  2e\sigma_{n}^2(\eta^t)^2+ 2(\eta^t)^2e\sum_{k=0}^{e-1}\mathbb{E}\left[\left \|\nabla F_n(\hat{\bm \theta}_n^{(t,k)})\right \|_2^2\right]\\\nonumber
	&\leq  2e\sigma_{n}^2(\eta^t)^2\\\nonumber
	&+ 2(\eta^t)^2e\sum_{k=0}^{E-1}\left\{2\mathbb{E}\left[\left \|\nabla F_n(\hat{\bm \theta}_n^{(t,k)})-\nabla F_n(\hat{\bm \theta}_n^{(t,0)})\right \|_2^2\right]\right.\\\nonumber
	\phantom{=\;\;}&+\left. 2\mathbb{E}\left[\left \|\nabla F_n(\hat{\bm \theta}_n^{(t,0)})\right \|_2^2\right]\right\}\\\nonumber
	& \leq 2e\sigma_{n}^2(\eta^t)^2+ 4(\eta^t)^2e\sum_{k=0}^{E-1}\left\{L^2\mathbb{E}\left[\left \|\hat{\bm \theta}_n^{(t,k)}-\hat{\bm \theta}_n^{(t,0)}\right \|_2^2\right]\right.\\
	\phantom{=\;\;}&+\left. \mathbb{E}\left[\left \|\nabla F_n(\hat{\bm \theta}_n^{(t,0)})\right \|_2^2\right]\right\},
\end{align}
where the second inequality follows Jensen's Inequality and \textbf{Assumption} \textbf{\ref{ass: gradient variance}}, and the last inequality holds by \textbf{Assumption} \textbf{\ref{ass: smooth}}. In this case, we have
\begin{align}\nonumber
	\frac{L^2}{E}\sum\limits_{e=0}^{E-1}T_2
	&\leq  \frac{L^2}{E}\sum\limits_{e=0}^{E-1}2e\sigma_{n}^2(\eta^t)^2\\\nonumber
	&+ \frac{L^2}{E}\sum\limits_{e=0}^{E-1} 4(\eta^t)^2e\sum_{k=0}^{E-1}\left\{  L^2\mathbb{E}\left[\left \|\hat{\bm \theta}_n^{(t,k)}-\hat{\bm \theta}_n^{(t,0)}\right \|_2^2\right]\right.\\\nonumber
	\phantom{=\;\;}&+\left.\mathbb{E}\left[\left \|\nabla F_n(\hat{\bm \theta}_n^{(t,0)})\right \|_2^2\right]\right\}\\\nonumber
	&\leq L^2\sigma_{n}^2(\eta^t)^2(E-1)+H^t\frac{L^2}{E}\sum\limits_{e=0}^{E-1}T_2\\
	&+H^t\mathbb{E}\left[\left \|\nabla F_n(\hat{\bm \theta}_n^{(t,0)})\right \|_2^2\right],
\end{align}
where $H^t=2L^2(\eta^t)^2E(E-1)<\frac{E-1}{2E}<1$. After rearranging, we obtain
\begin{align}\label{eq: T2}
	\frac{L^2}{E}\sum\limits_{e=0}^{E-1}T_2
	\leq \frac{L^2\sigma_{n}^2(\eta^t)^2(E-1)}{1-H^t} +\frac{H^t\mathbb{E}\left[\left \|\nabla F_n(\hat{\bm \theta}_n^{(t,0)})\right \|_2^2\right]}{1-H^t}.
\end{align}
By substituting \eqref{eq: T2} into \eqref{eq: T1}, we have
\begin{flalign}\nonumber\label{eq: T11}
	T_1
	&\leq\sum_{n=1}^{N}p_n\frac{L^2\sigma_{n}^2(\eta^t)^2(E-1)}{1-H^t}\\\nonumber
	&+\frac{H^t}{1-H^t}\sum_{n=1}^{N}p_n \mathbb{E}\left[\left \|\nabla F_n(\hat{\bm \theta}_n^{(t,0)})\right \|_2^2\right]\\\nonumber
	&\leq\frac{L^2(\eta^t)^2(E-1)}{1-H^t}\sum_{n=1}^{N}p_n\sigma_{n}^2\\
	&+\frac{H^t}{1-H^t}\left[\beta_1\left \|\nabla F(\hat{\bm \theta}^{t})\right \|_2^2+\beta_2\right].
\end{flalign}
By substituting \eqref{eq: T11} into \eqref{eq: unbiased1}, we have
\begin{align}\label{eq: expectation4}\nonumber
	&\mathbb{E}\left[F(\hat{\bm \theta}^{t+1}) - F(\hat{\bm \theta}^{t})\right]\\\nonumber
	\leq&-\frac{\eta^tE }{2}\left(1-\frac{H^t\beta_1}{1-H^t}\right)\left \|\nabla  F(\hat{\bm \theta}^{t})\right \|_2^2\\\nonumber
	+&\frac{(\eta^t)^3L^2E(E-1) }{2(1-H^t)} \sum_{n=1}^{N}p_n\sigma_{n}^2+(\eta^t)^2LE\sum_{n=1}^N p_n^2 \sigma_n^2\\
	+&\frac{\eta^tH^tE\beta_2}{2(1-H^t)}+\frac{L}{2}\frac{\sigma_{w}^2}{\beta^t}.
\end{align}
If $H^t\leq\frac{1}{2\beta_1+1}$, then it follows that $1-\frac{H^t\beta_1}{1-H^t}\geq\frac{1}{2}$, which requires $\eta^t\leq\frac{1}{L\sqrt{2E(E-1)(2\beta_1+1)}}$. Then \eqref{eq: expectation4} can be simplified as 
\begin{align}\label{eq: expectation5}\nonumber
	&\mathbb{E}\left[F(\hat{\bm \theta}^{t+1}) - F(\hat{\bm \theta}^{t})\right]\\
	\leq&-\frac{\eta^tE}{4}\mathbb{E}\left[\left \|\nabla  F(\hat{\bm \theta}^{t})\right \|_2^2\right]
	+(\eta^t)^3C_1 + (\eta^t)^2C_2
	+\frac{C_3}{\beta^t},
\end{align}
where $C_1=\frac{L^2E(E-1)(2\beta_1+1) }{4\beta_1}\left[\sum_{n=1}^{N}p_n\sigma_{n}^2+2E\beta_2\right] $, $C_2=LE\sum_{n=1}^N p_n^2 \sigma_n^2$,  and $C_3=\frac{L}{2}\sigma_{w}^2$.

\subsubsection{Strongly Convex Case}
When \textbf{Assumption} \textbf{\ref{ass: strong convex}} holds, i.e., the strongly convex case, we have the following results:
\begin{align}\label{eq: expectation6}\nonumber
	&\mathbb{E}\left[F(\hat{\bm \theta}^{t+1}) - F(\hat{\bm \theta}^{t})\right]\\
	\leq&-\frac{\mu\eta^tE}{2}\mathbb{E}\left[F(\hat{\bm \theta}^{t}) - F^{\star}\right]+(\eta^t)^3C_1+(\eta^t)^2C_2+\frac{C_3}{\beta^t}.
\end{align}

By rearranging \eqref{eq: expectation6}, we obtain 
\begin{align}\label{eq: expectation7}\nonumber
	\mathbb{E}\left[F(\hat{\bm \theta}^{t+1}) - F^{\star}\right]
	&\leq\left(1-\frac{\mu\eta^tE}{2}\right)\mathbb{E}\left[F(\hat{\bm \theta}^{t}) - F^{\star}\right]\\
	&+(\eta^t)^3C_1	+(\eta^t)^2C_2
	+\frac{C_3}{\beta^t}.
\end{align}

Applying \eqref{eq: expectation7} recursively for $t$ from $0$ to $T-1$, we
can obtain the cumulative upper bound after $T$ communication rounds shown as follows
\begin{align}\label{eq: expectation8}\nonumber
	&\mathbb{E}\left[F(\hat{\bm \theta}^{T})\right] - F^{\star}\\\nonumber
	\leq &\prod _{t=0}^{T-1}\rho^t\left[F(\bm \theta^{0}) - F^{\star}\right]\\
	+& \sum_{t=0}^{T-1}\left\{\left[(\eta^t)^3C_1
	+(\eta^t)^2C_2+\frac{C_3}{\beta^t}\right]\prod _{i=t+1}^{T-1}\rho^i \right\},
\end{align}
where $\rho^t = 1-\frac{\mu\eta^tE}{2}\geq0$. At last, the constraint on the learning rate satisfies 
\begin{align}\nonumber
	0<\eta^t&\leq \min\left\{\frac{1}{L\sqrt{2E(E-1)(2\beta_1+1)}},\frac{1}{2LE},\frac{2}{\mu E}\right\}\\
	&=\min\left\{\frac{1}{L\sqrt{2E(E-1)(2\beta_1+1)}},\frac{1}{2LE}\right\}.
\end{align}

\subsubsection{Non-Convex Case} When \textbf{Assumption} \textbf{\ref{ass: strong convex}} does not hold, i.e., the non-convex case, we have the corresponding results.

By rearranging \eqref{eq: expectation5}, we obtain 
\begin{align}\label{eq: expectation9}\nonumber
	\eta^t\mathbb{E}\left[ \left \|\nabla  F(\hat{\bm \theta}^{t})\right \|_2^2\right]
	&\leq \frac{4\mathbb{E}\left[F(\hat{\bm \theta}^{t}) - F(\hat{\bm \theta}^{t+1})\right]}{E} \\
	&+ \frac{4C_1}{E}  (\eta^t)^3+ \frac{4C_2}{E}  (\eta^t)^2 +  \frac{4C_3}{E\beta^t}.
\end{align}
Then, summing both sides of \eqref{eq: expectation9} for $t$ from $0$ to $T-1$, we
have
\begin{align}\label{eq: expectation100}\nonumber
	&\sum_{t = 0}^{T-1} \eta^t\mathbb{E}\left[\left \|\nabla  F(\hat{\bm \theta}^{t})\right \|_2^2\right]\\\nonumber
	\leq& \frac{4\left[F(\bm{\theta}^{(0,0)}) - F^{\inf}\right] }{E} + \frac{4C_1}{E} \sum_{t = 0}^{T-1} (\eta^t)^3+ \frac{4C_2}{E} \sum_{t = 0}^{T-1} (\eta^t)^2\\
	+&\frac{4C_3}{E} \sum_{t = 0}^{T-1}\frac{1}{\beta^t} .
\end{align}
Dividing \eqref{eq: expectation100} by $\sum_{t = 0}^{T-1} \eta^t$, we complete the proof. 

\subsection{Biased MAE}\label{sec: case3biased}
Obviously we have $\mathbb{E}\left[ \bm \varepsilon^t \right]\neq 0, \mathbb{E}\left[\left \| \bm \varepsilon^t\right \|_2^2 \right]  \neq 0$.
By applying the relationship between arithmetic and geometric mean to \eqref{eq: expouter}, i.e., $|\bm x^{\sf T} \bm y| \leq \frac{p\|\bm x\|_2^2}{2}+\frac{\|\bm y\|_2^2}{2p}$, it yields
\begin{align}\nonumber\label{eq: outererror}
	&\mathbb{E}\left[F(\hat{\bm{\theta}}^{t+1}) - F(\hat{\bm{\theta}}^{t})\right]\\\nonumber
	\leq& \frac{E\eta^t\left\| \nabla  F(\hat{\bm{\theta}}^{t}) \right\|_2^2}{4}+\frac{\left \| \mathbb{E}\left[\bm \varepsilon^{t}\right] \right \|_2^2}{E\eta^t}\\\nonumber
	-&E\eta^t \left [\nabla  F(\hat{\bm{\theta}}^{t})\right]^{\sf T} \mathbb{E}\left[\sum_{n=1}^N p_n \bm h_n^t\right] +\frac{L}{2}\mathbb{E}\left[\left\| \bm \varepsilon^{t}\right \| _2^2 \right]\\\nonumber
	+ & \frac{L(\eta^t)^2E^2}{2} \mathbb{E}\left[ \sum_{n=1}^N p_n \bm d_n^t \right]\\
	+ & L\eta^tE\left\{ 2L\mathbb{E}\left[\left\| \bm \varepsilon^{t}\right \| _2^2 \right]
	+\frac{\mathbb{E}\left[\left\| \sum_{n=1}^N p_n \bm d_n^t \right \| _2^2 \right]}{8L} \right\}.
\end{align}

Combining \eqref{eq: outererror} with \textbf{Lemma \ref{lem: Lemma 1}} and \eqref{eq: T11}, we obtain
\begin{align}\label{eq: expectation12}\nonumber
	&\mathbb{E}\left[F(\hat{\bm{\theta}}^{t+1}) - F(\hat{\bm{\theta}}^{t})\right]\\\nonumber
	\leq&-\frac{\eta^tE }{4}\left(1-\frac{2H^t\beta_1}{1-H^t}\right)\left \|\nabla  F( \hat{\bm \theta}^{t})\right \|_2^2\\\nonumber
	+&(\eta^t)^3\frac{L^2E(E-1) }{2(1-H^t)} \sum_{n=1}^{N}p_n\sigma_{n}^2+(\eta^t)^2LE\sum_{n=1}^N p_n^2 \sigma_n^2\\\nonumber
	+&\eta^t\left[\frac{H^tE\beta_2}{2(1-H^t)}+\frac{1}{4}\sum_{n=1}^N p_n^2 \sigma_n^2+2L^2E\mathbb{E}\left[\left\| \bm \varepsilon^{t}\right \| _2^2 \right]\right]\\
	+&\frac{L}{2}\mathbb{E}\left[\left\| \bm \varepsilon^{t}\right \| _2^2 \right]+(\eta^t)^{-1}\frac{\left \| \mathbb{E}\left[\bm \varepsilon^{t}\right] \right \|_2^2}{E}.
\end{align}
If $H^t\leq\frac{1}{4\beta_1+1}$, then it follows that $1-\frac{2H^t\beta_1}{1-H^t}\geq\frac{1}{2}$, which requires $\eta^t\leq\frac{1}{L\sqrt{2E(E-1)(4\beta_1+1)}}$. Then \eqref{eq: expectation12} can be simplified as 
\begin{align}\label{eq: expectation13}\nonumber
	&\mathbb{E}\left[F(\hat{\bm{\theta}}^{t+1}) - F(\hat{\bm{\theta}}^{t})\right]\\\nonumber
	\leq&-\frac{\eta^tE}{8}\left \|\nabla  F(\hat{\bm \theta}^{t})\right \|_2^2\\\nonumber
	+&(\eta^t)^3\frac{L^2E(E-1)(4\beta_1+1) }{8\beta_1} \left[\sum_{n=1}^{N}p_n\sigma_{n}^2 + 2E\beta_2\right] \\\nonumber
	+&(\eta^t)^2LE\sum_{n=1}^N p_n^2 \sigma_n^2
	+\eta^t\left[\frac{1}{4}\sum_{n=1}^N p_n^2 \sigma_n^2+2L^2E\mathbb{E}\left[\left\| \bm \varepsilon^{t}\right \| _2^2 \right]\right]\\
	+&\frac{L}{2}\mathbb{E}\left[\left\| \bm \varepsilon^{t}\right \| _2^2 \right]+(\eta^t)^{-1}\frac{\left \| \mathbb{E}\left[\bm \varepsilon^{t}\right] \right \|_2^2}{E}.
\end{align}

Finally we can obtain \textbf{Theorems \ref{thm: case3biasedconvex}} and \textbf{\ref{thm: case3biasednonconvex}} respectively with the same method in \textbf{Appendix \ref{sec: case3unbiased}}.

\section{Convergence Analysis for MAE in \airfedavgs}\label{sec: case1}
This section summarizes the method of convergence analysis for MAE in \airfedavgs.
\subsection{Preliminaries}
First we have
\begin{align}
	\hat{\bm{\theta}}^{t+1} = \hat{\bm{\theta}}^{t}-\eta^t\hat{\bm y}^t.
\end{align}
By applying \textbf{Assumption} \textbf{\ref{ass: smooth}} for global loss function, we obtain
\begin{align}\label{eq: primal}
	F(\hat{\bm{\theta}}^{t+1}) - F(\hat{\bm{\theta}}^{t})\leq -\eta^t\left<\hat{\bm y}^t, \nabla  F(\hat{\bm{\theta}}^{t})\right>+\frac{L(\eta^t)^2}{2}\left \|\hat{\bm y}^t \right \|_2^2.
\end{align}
Taking an expectation on the mini-batch stochastic gradient for both sides of \eqref{eq: primal}, and denoting $\bar{\bm y}^t=\sum_{n=1}^{N}p_n \bm g_n^{(t,0)} $, we obtain
\begin{align}\nonumber\label{eq: expectation}
	&\mathbb{E}\left[F(\hat{\bm{\theta}}^{t+1}) - F(\hat{\bm{\theta}}^{t})\right]\\\nonumber
	\leq& -\eta^t \mathbb{E} \left[\left <\hat{\bm y}^t, \nabla  F(\hat{\bm{\theta}}^{t})\right>\right]+\frac{L(\eta^t)^2}{2}\mathbb{E}\left[\left \|\hat{\bm y}^t \right \|_2^2\right]\\\nonumber
	=&-\eta^t \left \|\nabla  F(\hat{\bm{\theta}}^t)\right \|_2^2 - \eta^t \left[\nabla  F(\hat{\bm{\theta}}^t)\right]^{\sf T} \mathbb{E}\left[\bm \varepsilon^{t}\right] \\\nonumber
	+ &\frac{L(\eta^t)^2}{2}\mathbb{E}\left[\left \|\bar{\bm y}^t \right \|_2^2\right] + L(\eta^t)^2 \left[\nabla  F(\hat{\bm{\theta}}^t)\right]^{\sf T} \mathbb{E}\left[\bm \varepsilon^{t}\right]\\ 
	+ &\frac{L(\eta^t)^2}{2}\mathbb{E}\left[\left \|\bm\varepsilon^t \right \|_2^2\right],
\end{align}
where the equality holds by \textbf{Assumption} \textbf{\ref{ass: gradient variance}} and the fact that $\nabla  F(\hat{\bm{\theta}}^t)=
\sum_{n=1}^{N} p_n\nabla  F_{n}(\hat{\bm \theta}^t ) $.

\subsection{Unbiased MAE}\label{sec: case1unbiased}
Obviously we have $\mathbb{E}\left[ \bm \varepsilon^t \right]=0$. We suppose $\mathbb{E}\left[\left \| \bm \varepsilon^t\right \|_2^2 \right]  = \sigma_{w}^2/\beta^t \neq 0$, then \eqref{eq: expectation} can be reformulated as
\begin{align}\nonumber\label{eq: expunbiased}
	 &\mathbb{E}\left[F(\hat{\bm{\theta}}^{t+1}) - F(\hat{\bm{\theta}}^{t})\right]\\
	\leq&-\eta^t \left \|\nabla  F(\hat{\bm{\theta}}^t)\right \|_2^2
	+ \frac{L(\eta^t)^2}{2}\mathbb{E}\left[\left \|\bar{\bm y}^t \right \|_2^2\right] 
	+ \frac{L(\eta^t)^2}{2}\frac{\sigma_{w}^2}{\beta^t},
\end{align}

Next, to further give a bound on $\mathbb{E}\left[\left \|\bar{\bm y}^t  \right \|_2^2\right]$, we only need to give a bound on $\mathbb{E}\left[\left \|\bar{\bm g}^t \right \|_2^2\right]$
\begin{align}\nonumber\label{eq: Ebargradient}
	\mathbb{E}\left[\left \|\bar{\bm y}^t  \right \|_2^2\right]&=\mathbb{E}\left[\left \| \sum_{n=1}^N p_n\bm g_n^t - \nabla  F(\hat{\bm \theta}^t) + \nabla  F(\hat{\bm \theta}^t)\right \|_2^2\right]\\\nonumber
	& = \mathbb{E}\left[\left \| \sum_{n=1}^N p_n\bm g_n^t - \nabla  F(\hat{\bm \theta}^t)\right \|_2^2\right] + \left \|\nabla  F(\hat{\bm \theta}^t)\right \|_2^2\\\nonumber
	& = \mathbb{E}\left[\left \| \sum_{n=1}^N p_n\left[\bm g_n^t - \nabla  F_n(\hat{\bm \theta}^t)\right]\right \|_2^2\right] + \left \|\nabla  F(\hat{\bm \theta}^t)\right \|_2^2\\\nonumber
	& = \sum_{n=1}^{N}p_n^2\mathbb{E}\left[\left \| \bm g_n^t - \nabla  F_n(\hat{\bm \theta}^t) \right \|_2^2\right] + \left \|\nabla  F(\hat{\bm \theta}^t)\right \|_2^2\\
	&\leq \sum_{n=1}^{N}p_n^2\sigma_{n}^2 + \left \|\nabla  F(\hat{\bm \theta}^t)\right \|_2^2,
\end{align}
where the second equality and the last inequality hold by \textbf{Assumption} \ref{ass: gradient variance}.

By substituting \eqref{eq: Ebargradient} into \eqref{eq: expunbiased}, we can obtain
\begin{align}\nonumber\label{eq: F-F}
	\mathbb{E}\left[F(\hat{\bm \theta}^{t+1}) - F(\hat{\bm \theta}^t)\right] &\leq -\eta^t\left(1-\frac{L\eta^t}{2}\right)\left \|\nabla  F(\hat{\bm \theta}^t)\right \|_2^2\\
	&+\frac{L(\eta^t)^2}{2}\left( \sum_{n=1}^{N}p_n^2\sigma_{n}^2 +\frac{ \sigma_{w}^2 }{\beta^t}\right).
\end{align}
\subsubsection{Strongly Convex Case}
When \textbf{Assumption} \textbf{\ref{ass: strong convex}} holds, i.e., the strongly convex case, we have the following results.

\begin{align}\nonumber\label{eq: F-Fconvex}
	\mathbb{E}\left[F(\hat{\bm \theta}^{t+1}) - F(\hat{\bm \theta}^t)\right]
	&\leq -2\mu\eta^t\left(1-\frac{L\eta^t}{2}\right)\left[F(\hat{\bm \theta}^t)-F^{\star}\right] \\
	 &+ \frac{L(\eta^t)^2}{2}\left( \sum_{n=1}^{N}p_n^2\sigma_{n}^2 + \frac{ \sigma_{w}^2 }{\beta^t} \right),
\end{align}
where the inequality holds by \textbf{Assumption} \textbf{\ref{ass: strong convex}}. By rearranging \eqref{eq: F-Fconvex}, we have
\begin{align}\nonumber
	&\mathbb{E}\left[F(\hat{\bm \theta}^{t+1}) - F^{\star}\right]\\\nonumber
	\leq& \left[1-2\mu\eta^t\left(1-\frac{L\eta^t}{2}\right)\right]\mathbb{E}\left[F(\hat{\bm \theta}^t) - F^{\star}\right] \\\nonumber
	+ &\frac{L(\eta^t)^2}{2}\left( \sum_{n=1}^{N}p_n^2\sigma_{n}^2 + \frac{ \sigma_{w}^2 }{\beta^t} \right)\\
	\leq& (1-\mu\eta^t)\mathbb{E}\left[F(\hat{\bm \theta}^t) - F^{\star}\right] + \frac{L(\eta^t)^2}{2}\left( \sum_{n=1}^{N}p_n^2\sigma_{n}^2 + \frac{ \sigma_{w}^2 }{\beta^t} \right),
\end{align}
where the last inequality holds by $\eta^t\leq\frac{1}{L}$.

Applying this recursively for $t$ from $0$ to $T-1$, and we can finish the proof.

\subsubsection{Non-Convex Case} When \textbf{Assumption} \textbf{\ref{ass: strong convex}} does not hold, i.e., the non-convex case, we have the corresponding results.

By rearranging \eqref{eq: F-F}, we have
\begin{align}\label{eq: case1nonconvex}
	\eta^t\mathbb{E}\left[ \left \|\nabla  F(\hat{\bm \theta}^{t})\right \|_2^2\right]\leq 2 \mathbb{E}\left[F(\hat{\bm \theta}^{t}) - F(\hat{\bm \theta}^{t+1})\right]+ 2 C (\eta^t)^2.
\end{align}
Then, by summing both sides of \eqref{eq: case1nonconvex} for $t$ from $0$ to $T-1$ and dividing \eqref{eq: case1nonconvex2} by $\sum_{t = 0}^{T-1} \eta^t$, we finally arrive at the weighted sum of the global gradient
\begin{align}\label{eq: case1nonconvex2}\nonumber
	\frac{\sum_{t = 0}^{T-1} \eta^t\mathbb{E}\left[\left \|\nabla  F(\hat{\bm \theta}^{t})\right \|_2^2\right]}{\sum_{t = 0}^{T-1} \eta^t}
	&\leq \frac{2 \left[F(\hat{\bm{\theta}}^{0}) - F^{\inf}\right]}{\sum_{t = 0}^{T-1} \eta^t}\\
	&+ \frac{2 C \sum_{t = 0}^{T-1} (\eta^t)^2}{\sum_{t = 0}^{T-1} \eta^t}.
\end{align}

\subsection{Biased MAE}\label{sec: case1biased}
In this case, we have $\mathbb{E}\left[ \bm \varepsilon^t \right]\neq 0, \mathbb{E}\left[\left \| \bm \varepsilon^t\right \|_2^2 \right]  \neq 0$.
By applying the relationship between arithmetic and geometric mean to \eqref{eq: expectation}, i.e., $|(\sqrt{\eta}\bm x)^{\sf T} (\sqrt{\eta}\bm y)| \leq \frac{\eta\|\bm x\|_2^2}{2}+\frac{\eta\|\bm y\|_2^2}{2}$ and combining with \eqref{eq: Ebargradient}, it can be reformulated as
\begin{align}\nonumber\label{eq: biasedexp}
	&\mathbb{E}\left[F(\hat{\bm{\theta}}^{t+1}) - F(\hat{\bm{\theta}}^{t})\right]\\\nonumber
	\leq&-\eta^t \left \|\nabla  F(\hat{\bm{\theta}}^t)\right \|_2^2 
	+ \frac{\eta^t}{2} \left \|\nabla  F(\hat{\bm{\theta}}^t)\right \|_2^2 + \frac{\eta^t}{2}\left\| \mathbb{E}\left[\bm \varepsilon^{t}\right]\right\|_2^2\\\nonumber
	+ &\frac{L(\eta^t)^2}{2} \left[\left \|\nabla  F(\hat{\bm{\theta}}^t)\right \|_2^2+\left\| \mathbb{E}\left[\bm \varepsilon^{t}\right]\right\|_2^2\right]+\frac{L(\eta^t)^2}{2}\mathbb{E}\left[\left \|\bm\varepsilon^t \right \|_2^2\right]\\\nonumber
	+ &\frac{L(\eta^t)^2}{2} \left[\sum_{n=1}^{N}p_n^2\sigma_{n}^2+ \left \|\nabla  F(\hat{\bm{\theta}}^t)\right \|_2^2\right]\\\nonumber
	= &-\frac{\eta^t}{2}\left(1-2L\eta^t\right) \left \|\nabla  F(\hat{\bm{\theta}}^t)\right \|_2^2+ \frac{\eta^t}{2} \left\| \mathbb{E}\left[\bm \varepsilon^{t}\right]\right\|_2^2\\\nonumber
	+& (\eta^t)^2 \frac{L}{2}\left\{ \mathbb{E}\left[\left \|\bm\varepsilon^t \right \|_2^2\right] + \left\| \mathbb{E}\left[\bm \varepsilon^{t}\right]\right\|_2^2 + \sum_{n=1}^{N}p_n^2\sigma_{n}^2 \right\}\\\nonumber
	\leq & -\frac{\eta^t}{4} \left \|\nabla  F(\hat{\bm{\theta}}^t)\right \|_2^2+ \frac{\eta^t}{2} \left\| \mathbb{E}\left[\bm \varepsilon^{t}\right]\right\|_2^2\\
	+& (\eta^t)^2 \frac{L}{2}\left\{ \mathbb{E}\left[\left \|\bm\varepsilon^t \right \|_2^2\right] + \left\| \mathbb{E}\left[\bm \varepsilon^{t}\right]\right\|_2^2 + \sum_{n=1}^{N}p_n^2\sigma_{n}^2 \right\},
\end{align}
where the second inequality holds by the constraint on the learning rate $\eta^t\leq\frac{1}{4L}$.

Finally we can obtain \textbf{Theorem \ref{thm: case1biasedconvex}} and \textbf{\ref{thm: case1biasednonconvex}} respectively with the same method in \textbf{Appendix \ref{sec: case1unbiased}}.

\section{Proof of Corollary \ref{cor: case3convex}}\label{sec: multiple}
For communication round $t\rightarrow t+1$, we have
\begin{align}\nonumber
	&\mathbb{E}\left[F(\hat{\bm \theta}^{t+1}) - F^{\star}\right]\\
	\leq&\left(1-\frac{\mu\eta^tE}{2}\right)\mathbb{E}\left[F(\hat{\bm \theta}^{t}) - F^{\star}\right]
	+(\eta^t)^3A	+(\eta^t)^2B.
\end{align}
where $A=\frac{L^2E(E-1)(2\beta_1+1) }{4\beta_1} \left[\sum_{n=1}^{N}p_n\sigma_{n}^2+2E\beta_2\right]$ and $B = LE\sum_{n=1}^N p_n^2 \sigma_n^2 + L\tilde{G}^2E/dN^2{\sf SNR}$.
We want to prove by induction that there exist $\tau$ and $\eta^t$, such that
\begin{align}\nonumber
	\boldsymbol{\Delta}^t\leq (\eta^t)^2\gamma A	+\eta^t\gamma B.
\end{align}
where $\boldsymbol{\Delta}^t = \mathbb{E}\left[F(\hat{\bm \theta}^{t})\right] - F^{\star}$.
Suppose that this holds for $t>0$. For round $t+1$, we have
\begin{align}\nonumber
	\boldsymbol{\Delta}^{t+1}\leq& \left(1-\frac{\mu\eta^tE}{2}\right)\boldsymbol{\Delta}^t +  (\eta^t)^3A	+(\eta^t)^2B\\\nonumber
	\leq&\left(1-\frac{\mu\eta^tE}{2}\right)\left[(\eta^t)^2\gamma A	+\eta^t\gamma B\right]+(\eta^t)^3A	+(\eta^t)^2B\\
	 = &\left[\gamma \left(1-\frac{\mu\eta^tE}{2}\right)+\eta^t\right]\left[(\eta^t)^2A	+\eta^tB\right].
\end{align}
We want to prove that 
\begin{equation}
	\left[\gamma \left(1-\frac{\mu\eta^tE}{2}\right)+\eta^t\right]\eta^t\leq\tau\eta^{t+1},
\end{equation}
\begin{equation}
	\left[\gamma \left(1-\frac{\mu\eta^tE}{2}\right)+\eta^t\right] (\eta^t)^2 \leq\tau(\eta^{t+1})^2.
\end{equation}

By setting
\begin{align}
	\left\{
	\begin{aligned}
		\eta^t &= \frac{6}{E\mu(t+\tau)}, \\
		\gamma &= \frac{6}{E\mu},
	\end{aligned}
	\right.
\end{align}
we can show that it is equivalent to proving
\begin{align}
	\left\{
	\begin{aligned}
		(t+\tau-2)(t+\tau-1)&\leq (t+\tau)^2, \\
		(t+\tau+1)^2(t+\tau-2)&\leq (t+\tau)^3,
	\end{aligned}
	\right.
\end{align}
which are obvious for $\tau  = \frac{3L}{\mu}$.

In this case, we have
\begin{align}\nonumber
	\boldsymbol{\Delta}^t&\leq \left[ \frac{6}{E\mu(t+\tau)}\right]^2\frac{6}{E\mu}A	+\frac{6}{E\mu(t+\tau)}\frac{6}{E\mu} B\\
	 &= \frac{216A}{E^3\mu^3(t+\tau)^2}+\frac{36B}{E^2\mu^2(t+\tau)}.
\end{align}

For $t = 0$, it follows that
\begin{align}
	\boldsymbol{\Delta}^0 \leq \frac{216A}{E^3\mu^3\tau^2}+\frac{36B}{E^2\mu^2\tau}.
\end{align}
Then the constraint on $E$ can be obtained by
\begin{equation}
	E\leq \frac{6(2\beta_1+1)\bar{\sigma}^2+12\beta_1(\sigma^2+\tilde{G}^2/dN^2{\sf SNR})}{\beta_1\mu (F(\hat{\bm \theta}^{0}) - F^{\star})-12\beta_2(2\beta_1+1)},
\end{equation}
where $\bar{\sigma}^2 = \sum_{n=1}^{N}p_n\sigma_{n}^2$ and $\sigma^2 = \sum_{n=1}^{N}p_n^2\sigma_{n}^2$.

The upper bound after $T$ communication rounds is then given by
\begin{align}\nonumber
	\boldsymbol{\Delta}^T &\leq \frac{216A}{E^3\mu^3(T+\tau)^2}+\frac{36B}{E^2\mu^2(T+\tau)}\\
	&=\mathcal{O}\left(\frac{A_1}{E^2\mu^3T^2}\right)+\mathcal{O}\left(\frac{B_1}{E\mu^2T}\right)
\end{align}
where $A_1 = L^2(E-1)(2+1/\beta_1)[\bar{\sigma}^2+E\beta_2]$, $B_1 = L\sigma^2+L/dN^2{\sf SNR}$.

\section{Proof of Corollary \ref{cor: case1convex}}\label{sec: single}
Similarly, we prove this by induction. Let $\boldsymbol{\delta}^t = \mathbb{E}\left[F(\hat{\bm \theta}^t)\right] - F^{\star}$, $C= \frac{L}{2}\left( \Sigma/N + G^2/dN^2{\sf SNR} \right)$, and define $u=\max\left\{\frac{\beta^2C}{\beta\mu-1},\boldsymbol{\delta}^0\tau\right\},\tau>0$. If the learning rate is set to be $\eta^t = \frac{\beta}{\tau+t}$ such that $\eta^{0} \leq \min \{ \frac{1}{\mu}, \frac{1}{L} \} = \frac{1}{L}$. Then by applying induction method, first it is clearly that $\boldsymbol{\delta}^0\leq\frac{u}{\tau}$. Second, assume that when $n=t$, we have
\begin{align}
	\boldsymbol{\delta}^t\leq \frac{u}{\tau+t}.
\end{align}
At last, when $n=t+1$,
\begin{align}\nonumber
	\boldsymbol{\delta}^{t+1}&\leq \left(1-\frac{\beta\mu}{\tau+t}\right)\frac{u^t}{\tau+t}+\frac{\beta^2C}{(\tau+t)^2}\\\nonumber
	& = \frac{\tau+t-1}{(\tau+t)^2}u - \frac{\beta\mu-1}{(\tau+t)^2}\left(u-\frac{\beta^2C}{\beta\mu-1}\right)\\\nonumber
	&\leq \frac{\tau+t-1}{(\tau+t)^2}u\\
	& \leq \frac{\tau+t-1}{(\tau+t)^2-1}u = \frac{u}{\tau+t+1},
\end{align}
where the second inequality holds by $\beta>\frac{1}{\mu}$ and $u=\max\left\{\frac{\beta^2C}{\beta\mu-1},\boldsymbol{\delta}^0\tau\right\}$.

Specifically, if we choose $\beta = \frac{2}{\mu}$ and $\tau = \frac{2L}{\mu}$, then $\eta^{t} = \frac{2}{\mu(\tau + t)}$ and
\begin{align}
	\mathbb{E}[F(\hat{\bm \theta}^t)] - F^{\star} \leq \frac{ \max\{4C, \mu^2 \tau \boldsymbol{\delta}^{0} \}}{\mu^2(\tau + t)}.
\end{align}

After $T$ total communication rounds, we have
\begin{align}\nonumber
	\mathbb{E}[F(\hat{\bm \theta}^T)] - F^{\star} &\leq \frac{ \max\{4C, \mu^2 \tau \boldsymbol{\delta}^{0} \}}{\mu^2(\tau + T)}\\
	 &= \mathcal{O}\left( \frac{\max\{4C, \mu^2 \tau \boldsymbol{\delta}^{0} \}}{\mu^2T}\right).
\end{align}

\bibliographystyle{IEEEtran}
\bibliography{IEEEabrv,reference0}
%\bibliographystyle{IEEEtran}
%\bibliography{reference}
\end{document}